\documentclass[runningheads]{llncs}

\usepackage[T1]{fontenc}
\usepackage{amsmath, amssymb, amsfonts}
\usepackage{mathtools}
\usepackage{bm}

\usepackage{array}
\usepackage{multirow}
\usepackage{booktabs}
\usepackage{caption}
\usepackage{subcaption}

\usepackage[table]{xcolor}
\definecolor{lightblue}{RGB}{230, 240, 255}
\usepackage{tikz}

\usepackage{tcolorbox}

\usepackage{algorithm}
\usepackage{algpseudocode}      

\usepackage{textcomp}
\usepackage{stfloats}
\usepackage{url}
\usepackage{verbatim}
\usepackage{ltxtable}
\usepackage{adjustbox}
\usepackage{mdframed}
\usepackage{marvosym}
\usepackage{cite}

\usepackage{geometry}

\usepackage{hyperref}

\def\R{\mathbb{R}}
\def\E{\mathbb{E}}

\newtheorem{assumption}{Assumption}

\begin{document}
\title{AdaBFL: Multi-Layer Defensive Adaptive Aggregation for Bzantine-Robust Federated Learning}
%
%
\author{Zehui Tang\inst{1} \and
Yuchen Liu\inst{1} \and
Feihu Huang\inst{1,2}\textsuperscript{\Letter} }
\authorrunning{Tang et al.}
%
\institute{1. College of Computer Science and Technology, Nanjing University of Aeronautics and Astronautics, Nanjing, China; \\
2. MIIT Key Laboratory of Pattern Analysis and Machine Intelligence, Nanjing, China.	\\
\email{huangfeihu2018@gmail.com.}}
\maketitle              
\begin{abstract}
Federated learning (FL) is a popular distributed learning paradigm in machine learning, which enables multiple clients to collaboratively train models under the guidance of a server without exposing private client data. However, FL's decentralized nature makes it vulnerable to poisoning attacks, where malicious clients can submit corrupted models to manipulate the system. To counter such attacks, although various Byzantine-robust methods have been proposed, these methods struggle to provide balanced defense against multiple types of attacks or rely on  possessing the dataset in the server. To deal with these drawbacks, thus, we propose an effective multi-layer defensive adaptive aggregation for Bzantine-robust federated learning (AdaBFL) based on a novel three-layer  defensive mechanism, which can adaptively adjust the weights of defense algorithms to counter complex attacks. Moreover, we provide convergence properties of our AdaBFL method under the non-convex setting on non-iid data. Comprehensive experiments across multiple datasets validate the superiority of our AdaBFL over the comparable algorithms.

\keywords{ Federated learning \and Adaptive aggregation  \and Poisoning attack \and Byzantine-Robust.}
\end{abstract}
%
%
\section{Introduction}
\label{Introduction}
Federated Learning (FL)~\cite{mcmahan2017communication} effectively addresses the challenge of \textit{Data Silos}, which enables multiple clients to collaboratively train machine learning models under the coordination of a central server without sharing clients' private data. 
Recently, FL has been implemented across many  applications, including credit risk assessment \cite{webank2020utilization}, natural language processing \cite{liu2021federated,cai2023efficient}, speech recognition \cite{farahani2023toward}, and healthcare services \cite{rauniyar2023federated}, model compression \cite{wu2024auto} \cite{liao2023adaptive}, pretrained models \cite{he2025afl}, leakage attack \cite{zhao2023resource}\cite{zhang2023federated}\cite{xie2025dflmoe}\cite{zhou2025secure}, semantic segmentation \cite{miao2023fedseg} and multi-agent learning \cite{mohammadabadi2024communication}.
FedAvg~\cite{mcmahan2017communication} is the first formalized federated learning algorithm, which first performs multiple rounds of stochastic gradient descent (SGD) locally on each device, and then transmits its local model to the central server, where they are averaged. FedProx~\cite{li2020federated} is a variant of FedAvg \cite{mcmahan2017communication} that addresses heterogeneous federated environments by constraining local parameters with global parameters in the optimization objective. 
Scaffold~\cite{karimireddy2020scaffold} estimates the update directions of the server model and each client respectively, then defines difference between the update directions as the client’s drift estimate and uses it to correct the local update.

Although the above FL methods guarantees data privacy, it is not resistant to external malicious attacks. As shown in Fig. \ref{MeanAttack}, malicious attacks can alter the performance of the global model in FL.
The standard FL framework such as  FedAvg \cite{mcmahan2017communication}, a single malicious client is sufficient to manipulate the final aggregated global model (see Fig. \ref{MeanAttack}(b-c)). As attacks become more sophisticated \ref{MeanAttack}(d-f), they require more sophisticated defense strategies.
The decentralized nature of FL is often exploited by malicious adversaries to launch poisoning attacks that compromise integrity of the global model \cite{fang2025we}. Specifically,  malicious actors can manipulate locally trained data or fabricate toxic updates to the model and transmit them to the central server, thereby degrading accuracy of the global model. Based on the attacker's objectives, poisoning attacks are categorized as: \textit{1) Targeted attack} \cite{bagdasaryan2020backdoor,baruch2019little,cao2020fltrust,fang2020local} aims to induce model prediction errors by injecting backdoors (targeted attacks seek to misclassify only specific inputs while preserving predictions for others). \textit{2) Non-targeted attack} \cite{blanchard2017machine,fang2020local,shejwalkar2021manipulating} aims to degrade the overall performance of the global model. 

\begin{figure*}[h]
	\centering
	\includegraphics[width=0.78\textwidth, height=0.4\textwidth]{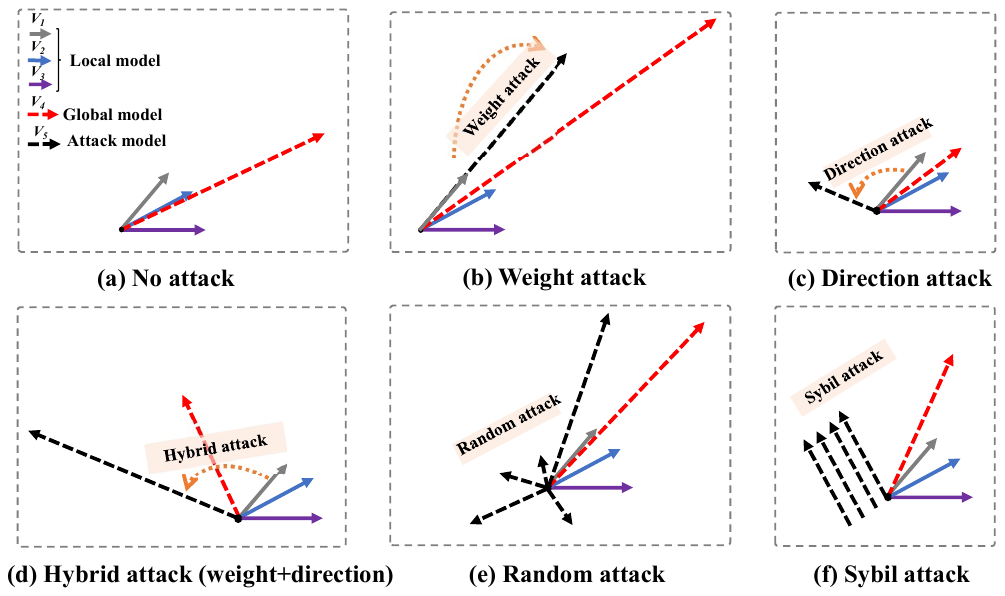}
	\caption{Common attacks in FL systems. A single malicious client can compromise the global model simply by altering the length and angle of parameters. When multiple malicious clients employ the same attack method (Sybil attack), they can effectively circumvent statistical defense methods. $V_1,V_2,V_3$ denote the local model, $V_4$ is the global model, $V_5$  is the attack models. }
	\label{MeanAttack}
\end{figure*}

\begin{figure*}[h]
	\centering
	\includegraphics[width=0.8\textwidth, height=0.43\textwidth]{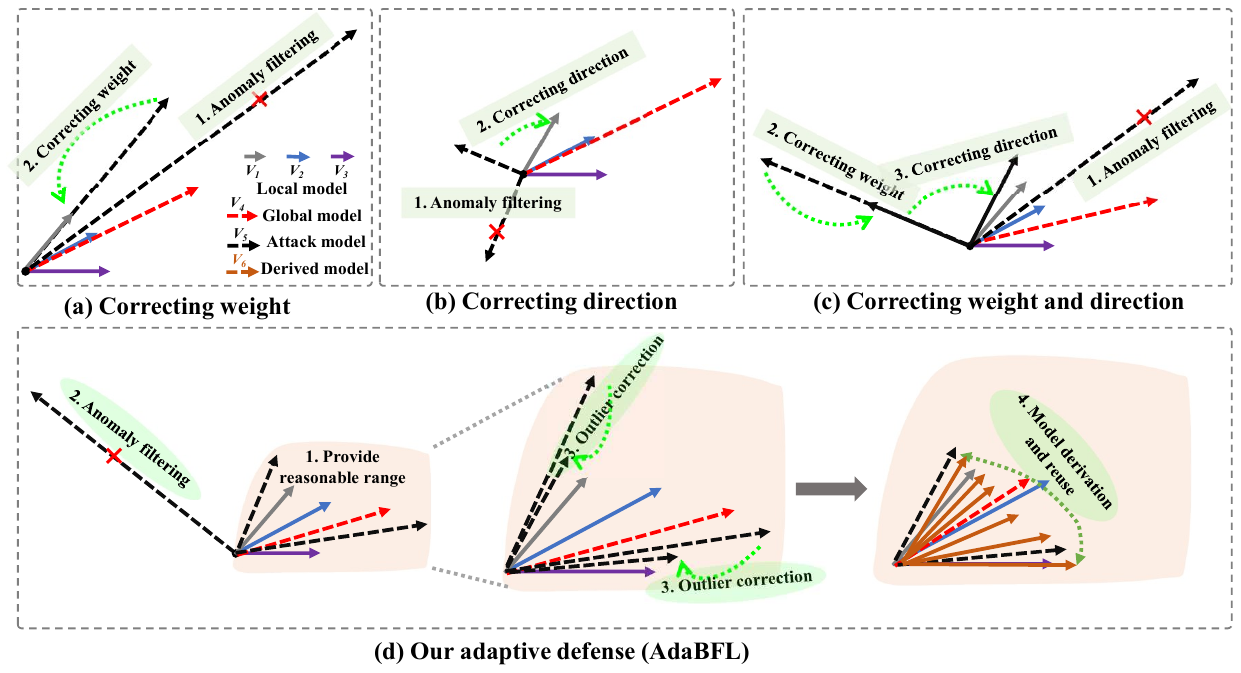}
	\caption{Summary of defense strategies. Traditional byzantine defense methods (a), (b), and (c) filter out significant outliers and then further refine the remaining data through aggregation based on commonality. Our AdaBFL (d) not only incorporates the capabilities of traditional defense methods but also leverages benign models to derive additional high-quality models. Finally, by dynamically adjusting the aggregation weights of different client models, it further enhances the FL system’s defense against Byzantine attacks. $V_6$  is the benign model derived from existing models. }
	\label{Defens}
\end{figure*}

To defend against these poisoning attacks, \textit{Byzantine-robust foundational aggregation (BRFA)} \cite{fang2022aflguard,munoz2019byzantine,cao2020fltrust,fang2020local} is proposed to enhance the resilience of FL. Unlike aggregation strategies based on the mean \cite{mcmahan2017communication}, BRFA aims to detect malicious clients and prune or discard the local update models they provide. Depending on the method used to identify malicious clients, these can be categorized as: \textit{1) Similarity-based defenses} \cite{fang2022aflguard,munoz2019byzantine,wang2022flare,xie2020zeno++,yin2018byzantine,fang2024byzantine}: These leverage the inherent consistency among benign updates, detecting and filtering malicious local updates through statistical similarity analysis. \textit{2) Performance-based defenses} \cite{cao2020fltrust,fang2020local,park2021sageflow,xie2019zeno}: Based on the assumption that benign updates enhance global model performance, these methods identify malicious updates by evaluating the aggregated loss or accuracy. \textit{3) Entropy-based defenses} \cite{park2021sageflow}: Leveraging the characteristic of benign models having high prediction confidence and low entropy values, isolating high-entropy local models to exclude malicious updates.	
However, we recognize that in certain scenarios, the non-identical and non-independent (non-iid) distribution of training data among clients makes it challenging for a single detection mechanism to identify malicious local update models effectively.

\textbf{Limitations of Existing Robust Aggregation Rules:} First, current Byzantine-resistant aggregation rules still exhibit security flaws when confronted with sophisticated poisoning attacks. For instance, commonly used Krum/Trim aggregation rules are vulnerable to Krum attacks/Trim attacks \cite {fang2020local}. Second, many robust aggregation rules \cite{cao2020fltrust,pan2020justinian, park2021sageflow,wang2022flare,xie2019zeno} rely on the strong assumption that servers possess verification data. Undoubtedly, such assumptions contradict the original intent of FL, as central servers struggle to determine the distribution of local client training data accurately. Furthermore, central servers possessing verification data infringe upon client privacy.
Recently, traditional aggregation defense methods can defend against single attacks but struggle to effectively counter multiple attacks  (see Fig. \ref{Defens}(a-c)). 
These limitations prompt us to pose a natural question:  
\begin{center}
	\begin{tcolorbox}
\textbf{\textit{Can we design novel multi-layer Byzantine-robust aggregation rules from different perspectives?}}
\end{tcolorbox}
\end{center}

In the paper, we provide an affirmative answer to the above question and propose an effective Multi-Layer Defensive \underline{Ad}aptive \underline{A}ggregation for \underline{B}zantine-robust \underline{F}ederated \underline{L}earning (AdaBFL). Our AdaBFL algorithm can prevent outliers from degrading the performance of the global model (see Fig. \ref{Defens}(d)). The \textbf{main contributions} of this paper are summarized as follows:
\begin{itemize}
	\item[(i)] We propose an effective Byzantine-robust federated learning (AdaBFL) method by using an adaptive aggregation rule. In particular, we also develop a new adaptive aggregation rule based on a three-layer defense mechanism. 
	\item[(ii)] We theoretically prove the convergence of AdaBFL under the nonconvex setting on non-iid data, which demonstrates its robust resilience against malicious attacks under the assumption of universal acceptability.
	\item[(iii)] We conduct extensive simulation experiments across various poisoning attack scenarios and different aggregation rules using diverse datasets, validating effectiveness of our AdaBFL framework.
\end{itemize}

\section{Related Work}
 In this section, we will review some methods on poisoning attacks to FL and 
 Byzantine-robust FL, respectively.

\subsection{Poisoning Attacks to FL}
FL is susceptible to data poisoning attacks\cite{biggio2012poisoning,munoz2017towards,tolpegin2020data} and model poisoning attacks\cite{blanchard2017machine,fang2020local,shejwalkar2021manipulating} due to its decentralized nature. In data poisoning attacks, malicious clients corrupt local training data by injecting poisoned samples into the dataset. For instance, in label flipping attacks \cite{tolpegin2020data}, attackers tamper with the correspondence between local training labels $y^i$ and samples $x^i$, thereby degrading the global model's performance without altering the features. In model poisoning attacks, attackers directly tamper with the trained local model parameters to construct anomalous or maliciously backdoored parameters. The ultimate goal of Poisoning Attacks is to degrade the performance of the global model. Data poisoning involves \textit{poisoning at the data source}, while model poisoning involves \textit{poisoning during model training}. The latter is generally considered more threatening because it is more direct and efficient. However, it may also be easier to detect.

\subsection{Byzantine-robust FL}
In standard FL, the central server updates the global model by computing the average of all client models~\cite{mcmahan2017communication}. However, recent research~\cite{fang2025we} demonstrates that averaging-based aggregation rules are vulnerable to Byzantine attacks. In such attacks, a single client can arbitrarily manipulate the final aggregated global model. The introduction of Byzantine-robust aggregation rules (Krum \cite{blanchard2017machine}, Trimmed-mean \cite{yin2018byzantine}, Median \cite{yin2018byzantine}, BALANCE \cite{fang2024byzantine}, FoundationFL~\cite{fang2025we}) effectively mitigates poisoning attacks against FL. For instance, in the Krum \cite{blanchard2017machine} aggregation rule, after receiving $n$ local updates, the server selects the update with the smallest sum of distances to its $n-\chi  -2$ nearest neighbors as the global model, where $n$ denotes the total number of clients and $\chi  $ represents the number of malicious clients. In the Trimmed-mean~\cite{yin2018byzantine} aggregation rule, the server operates independently on each model parameter dimension. It first removes the $\beta$ most extreme values (both maximum and minimum) in that dimension, then calculates the average of the remaining values, where $\beta$ is a predefined threshold for trimming outliers. In the Median\cite{yin2018byzantine} aggregation rule, the server operates independently on each model parameter dimension, directly selecting the median value of all clients' values on that dimension as the final value for that dimension in the global model. BALANCE~\cite{fang2024byzantine} leverages consistency among well-functioning models, aggregating neighbor node models with its own updates.  FoundationFL~\cite{fang2025we} generates synthetic updates when receiving local model updates from clients, then aggregates the new global model using Trimmed-mean or Median. \cite{cao2020fltrust,pan2020justinian,park2021sageflow,wang2022flare,xie2019zeno} share a common assumption: the server possesses a small, clean validation set drawn from the overall data distribution. Based on this validation set, the server can compute a representative baseline model update (or performance benchmark). It then identifies malicious updates by comparing the similarity or performance deviation between received local updates and this baseline.

\begin{figure}[htbp]
	\centering
	\includegraphics[width=0.83\textwidth,height=0.4\textwidth]{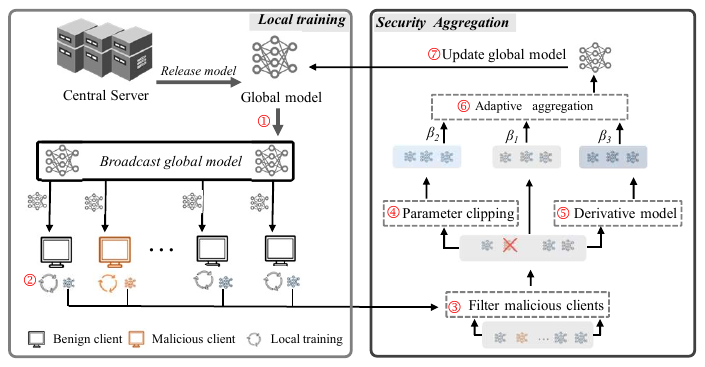}
	\caption {System model of AdaBFL.}
	\label{System_model}
\end{figure}

\section{Preliminaries}

Federated learning (FL)~\cite{mcmahan2017communication} is a popular distributed  learning paradigm in machine learning, and generally solve the following distributed optimization problem:
\begin{align} 	\label{P1}
\min_{\theta \in \mathbb{R}^d} \ F(\theta ) =\frac{1}{N} \sum_{i=1}^N f^i(\theta),
\end{align}
where for any $i\in [N]$, $f^i(\theta) = \E_{\xi^i}[f^i(\theta;\xi^i)]$ denotes the loss function at $i$-th client, and $\xi^i$ denotes a random variable followed fixed but unknown data distribution $\mathcal{D}^i$, and $\mathcal{D}^i$ may be not the same as $\mathcal{D}^j$ for all $i\neq j$. Here $\theta \in \mathbb{R}^d $ denotes the model parameter vector. 

Recently, many FL algorithms have been developed to solve the above problem (\ref{P1}), whose algorithmic framework generally is give as follows: 
\begin{itemize}
	\item \textbf{Step 1: Broadcast the global model}. The central server broadcasts the current global model $\theta_{t}$ to all clients.
	
	\item \textbf{Step 2: Local model training and updating}. Each client $i\in[n]$ trains a local model $\theta^i$ using the received global model $\theta_t$ and its private dataset $\mathcal{D}^i$, i.e.,  $\theta^i_{t+1} = \theta_t - \eta g^i_t$, where $g^{i}_{t}=\nabla f^i(\theta_t;\zeta^i)$ denotes the stochastic gradient at $i$-th client.  
	
	\item \textbf{Step 3: Global model updating.} After collecting updated models from clients, the server merges these updates using an aggregation rule (denoted as Agg) to obtain the global model. The globally updated model is commonly represented as $Agg\{\theta^i_t: i\in [N] \}$.
\end{itemize}

It is imperative to note that different FL training methods typically employ distinct aggregation protocols \cite{mcmahan2017communication, blanchard2017machine, yin2018byzantine}. When lower security in FL, the mean aggregation~\cite{mcmahan2017communication} is commonly used: $(Agg\{g^i_t, i\in [N] \}=\frac{1}{N}\sum_{i=1}^{N}g^i_t)$. 
When high security in FL,
 we assume attackers possess capabilities for data poisoning and model poisoning as in~\cite{shejwalkar2021manipulating,fang2024byzantine,fang2025we}. Specifically, attackers manipulate a subset of malicious clients (which may be either attacker-registered fake clients or benign clients deceived by the attacker) to tamper with the training dataset or inject backdoors into the trained model post-training. It is noteworthy that in practical attack scenarios, attackers possess knowledge of the aggregation rules employed by the central server and the global model.  We also assume the central server cannot discern the attackers' specific tactics or the distribution of client data.

\section{Our AdaBFL Method}
\label{Proposed_model}
In the section, we propose an effective AdaBFL method for Byzantine-robust FL based on a novel multi-layer defensive mechanism. Fig \ref{System_model} shows the system model of AdaBFL. Specifically, our multi-layer defensive mechanism includes three defensive strategies:
1) The server first filters out malicious clients, retaining only the benign clients' models; 2) The server then expands the benign clients' models by dimension and prunes anomalous parameters to obtain the trimmed models;
3) From among the benign clients, the optimal model is selected, subsequently, and multiple optimal models are derived based on this selected optimal model.
Finally, the server fuses the models obtained in steps 1), 2), and 3) using adaptive weights to derive the global model. AdaBFL defends against poisoning attacks from multiple angles.

\begin{algorithm}[H]
	\caption{AdaBFL Algorithm}
	\label{alg:dfl}
	\begin{algorithmic}[1]
		\footnotesize
		\State \textbf{Input} Total number of clients \( N \); number of clients participating in training per epoch \( n \); filtered benign clients \( \Lambda \); local training data \( D^i, i \in [N] \); number of global training rounds \( T \); learning rate \( \eta \); weights \( \beta_1,\beta_2,\beta_3 \); parameters \( \gamma, \kappa, \lambda_{t} \).
		\State \textbf{Output} Global model \( \theta_T \).
		
		\State Initialize global model \( \theta_0 \).
		\State \textit{// \textcolor{blue}{Global model synchronization} }
		\State Server broadcasts \( \theta_0 \) to all clients.
		\State \textit{// \textcolor{blue}{Local model updating} }
		\For{\( t =  1, \dots, T-1 \)}
		
		\For{each client \( i \in [n] \) in parallel}
		\State Drawn $b$ i.i.d. samples $\mathcal{B}=\{\xi_{t-1,j}^i\}_{j=1}^b$ from data distribution $\mathcal{D}^i$;
		\State $g^{i}_{t}=\frac{1}{b}\sum_{j=1}^b\nabla f^i(\theta_{t-1};\xi^i_{t-1,j})$; 
		\State \( \theta_{t}^{i} = \theta_{t-1} - \eta g_t^{i} \).
		\EndFor
		\State Clients send \(\theta_t^{i}\) to server.
		
		\State \textit{// \textcolor{blue}{ Adaptive aggregation}}
		\For{each client \( i \in [n] \)}
		\State \textit{// \textcolor{blue}{(1) Filter malicious clients}}
		\State {The server executes Eq.(\ref{EqP1})}
		
		\State \textit{// \textcolor{blue}{(2) Parameter clipping}}
		\State {The server executes Eq.(\ref{EqP2}) and Eq.(\ref{EqP2-2})}
		
		\State \textit{// \textcolor{blue}{(3) Derivative model
		}}
		\State {The server executes Eq.(\ref{EqP3}-\ref{EqP6}))}
		\EndFor
		\State {The server executes Weight Update Algorithm (\ref{WUA})}
	\EndFor
	\end{algorithmic}
\end{algorithm}

Algorithm~\ref{alg:dfl} shows the algorithmic framework of our AdaBFL method. 
In Algorithm~\ref{alg:dfl}, we implement \textbf{\textit{Filter malicious, Parameter clipping}}, and \textbf{\textit{ Derivative model}} to identify malicious model updates from different angles or amplify the weight of benign model updates. (1) The core idea of Filter malicious is that if $i$-th client is benign, then the local update model provided by $i$-th client should be close in both direction and magnitude to the local update models of the remaining benign $j$-th client. In this paper, based on this idea, we can effectively select a set of benign clients. (2) The core idea of Parameter clipping is to perform fine-grained clipping on the locally updated models provided by the selected benign clients to counteract the impact of outliers in abnormal dimensions on the global model. (3) The core idea of Derivative model is to derive synthetic data from the original data, then fuse the synthetic data with the original data to obtain new fused data. Based on this approach, we can further amplify the influence of benign data on the global model. Simultaneously, the introduction of synthetic data enhances the model's generalization capability. Finally, by dynamically monitoring the status of Filter malicious, Parameter clipping, and Derivative model, we can adaptively adjust their respective weights. Notably, we further propose a threshold-free adaptive adjustment mechanism and a momentum-based extension method.
The detailed description of the AdaBFL Algorithm \ref{alg:dfl} is as follows:

The server broadcasts the global model $\theta_0$ to the clients, and the clients then use local data $D^i$ to optimize their local models $\theta_{t}^i$ using SGD and send the updated local models back to the server. The services perform Filter malicious, Parameter clipping, and Derivative model. They filter out malicious models and aggregate the local models from benign clients to generate the global model $\theta_t$. The core design of the Filter Malicious, Parameter Clipping, and Derivative Model are described as follows: 

\textbf{\textit{1). Filter malicious clients}} \textit{(Algorithm 1,  Lines 15–16)}: 
As described in \cite{fang2020local,shejwalkar2021manipulating,fang2024byzantine}, attackers can launch poisoning attacks against FL systems by fabricating malicious clients or deceiving benign clients to manipulate the size or direction of local models. Our perspective is that if a client provides a model that differs significantly from those provided by other clients, it should be considered potentially malicious and ignored by the central server. Specifically, during the $t$-th training round, when the central server receives the local model $\theta_t^i$ from client $i \in N$, it computes the similarity between each $\theta_t^i$ and the mean of the remaining models $\theta_t^j \ (j \in N, j \ne i)$, thereby verifying whether the $i$-th client is malicious or benign. If $\theta_t^i$ resembles $\frac{1}{n-1}\sum_{j=1}^{n-1}\theta_t^{j}$ in both direction and magnitude, the central server classifies $\theta_t^i$ as a benign model. As the round number $t$ increases, the models converge, and $\theta_t^i$ approaches $\frac{1}{n-1}\sum_{j=1}^{n-1}\theta_t^{j}$ increasingly closely. Based on this insight, if condition Eq.(\ref{EqP1}) is satisfied, the central server classifies client $i$ as a benign client $\Lambda$,
\begin{equation}
	\label{EqP1}
	\begin{aligned}
		\left\|\theta_{t}^{i}-\frac{1}{n-1} \sum_{j=1}^{n-1} \theta_{t}^{j}\right\| \leq \frac{\gamma e^{-\kappa \lambda(t)}}{2} \cdot\left\|\frac{1}{n-1} \sum_{j=1}^{n-1} \theta_{t}^{j}+\theta_{t}^{i}\right\|.
	\end{aligned}
\end{equation}
Here the parameter $\gamma > 0$ represents the upper threshold for determining a client as benign. The value of the parameter $\kappa > 0$ determines the decay rate of the exponential function (a larger $\kappa$ results in faster decay, while a smaller $\kappa$ leads to slower decay). $\lambda(t)$ is a function dependent on the round number $t$, which must satisfy the conditions of monotonicity and non-negativity. Typically, $\lambda(t)$ can be set as $\lambda(t)=\frac{1}{T}$ or $\lambda(t)=\log_b(t)$ where $b>1$.

\textbf{\textit{2). Parameter clipping}} \textit{(Algorithm 1,  Lines 17–18)}: 
After the central server obtains the set of benign clients $\Lambda$ via Eq.(\ref{EqP1}), it applies the Trimmed-mean method \cite{yin2018byzantine} to trim $\left\{\theta_{t}^{i}\right\}_{\Lambda}$. Specifically, during the $t$-th communication round, the central server trims $\left | \Lambda  \right | $ local model updates according to Eq.(\ref{EqP2}). $\widetilde{\theta_t}(k)$ denotes the trimmed parameter value in dimension $k$ after application of Trimmed-mean. $\beta$ is the trimming ratio coefficient, representing the discard of the maximum and minimum $\beta$ values. Typically, it is assumed that a FL system has at most $\beta$ malicious clients. $\theta^{i_\Lambda}_{t}(k)$ denotes the sequence obtained by sorting the original set in ascending order (i.e., $\theta^{1}_{t}(k) \le \theta^{2}_{t}(k), \ldots, \le \theta^{i_\Lambda}_{t}(k)$),
\begin{equation}
	\label{EqP2}
	\begin{aligned}
		\widetilde{\theta_{t}}(k) =\frac{1}{\left | \Lambda  \right | -2 \beta} \sum_{i_\Lambda =\beta+1}^{\left | \Lambda  \right |-\beta} \theta ^{i_\Lambda}_{t}(k).
	\end{aligned}
\end{equation}
Calculate the change $p^{1}_{t}$ between the mean of the trimmed $\widetilde{\theta_{t}}$ and $\theta^i_{t}$. $p^{1}_{t}$ represents the magnitude of correction applied by the parameter trimming operation to the overall trend of the original parameters. The smaller the value of $p^{1}_{t}$, the less the impact of the trimming operation on the original trend, and the less effective it is at eliminating the influence of outliers. The smaller $p^{1}_{t}$, the less the impact of the trimming operation on the original trend, making it less effective at eliminating the influence of outliers,

\begin{equation}
	\label{EqP2-2}
	\begin{aligned}
		p^{1}_{t} \gets \frac{1}{d} \left\|\widetilde{\theta_{t}} - \text{mean}(\theta^i_{t}) \right\|
	\end{aligned}.
\end{equation}

\begin{algorithm}[H]
	\caption{ Weight Update Algorithm}
	\label{WUA}
	\begin{algorithmic}[1]
		\State \textit{// Weighting updating} 
		\State \( \beta_1, \beta_2, \beta_3 \gets \text{Updating}(p^1_t, p^2_t) \)
		\If{\(p^1_t \ge \rho_1\)}
		\State \( \beta_2 \gets \min(\beta_2 + \delta_{\text{high}}, \beta_{2\_\max}) \)
		\State \( \beta_1 \gets \max(\beta_1 - \delta_{\text{high}}, \beta_{1\_\min}) \)   
		\ElsIf{\(p^2_t < \rho_2\)}
		\State \( \beta_3 \gets \max(\beta_3 - \delta_{\text{low}}, \beta_{3\_\min}) \)
		\State \( \beta_2 \gets \min(\beta_2 + \delta_{\text{high}}, \beta_{2\_\max}) \)  
		\Else
		\State \( \beta_1 \gets \beta_{1\_\text{base}} + \kappa(1 - p^1_t) \)
		\EndIf
		
		\State \( \beta_{\text{sum}} \gets \beta_1 + \beta_2 + \beta_3 \)
		\State \( \beta_{1-3} \gets \beta_{1-3} / \beta_{\text{sum}} \)
		%
		\State \( \theta_t \gets \beta_1 \cdot \text{mean}(\{\theta_t^i\}_{i\in[\Lambda] }) + \beta_2 \cdot \widetilde{\theta_{t}} + \beta_3 \cdot \overline{\theta_t} \)
		
	\end{algorithmic}
	
\end{algorithm}

\vspace{-4mm}
\begin{algorithm}[H]
	\caption{ Weight updates without thresholds ($\rho_1,\rho_2$)}
	\label{WUWT}
	\begin{algorithmic}[1]
		\State \textit{// Weighting updating}
		\State \( \beta_1, \beta_2, \beta_3 \gets \text{Updating}(p^1_t, p^2_t) \)
		\State \textit{// Weighting update}
		\State $\text{base} \gets 1 + p^1_t + \frac{1}{p^2_t + \epsilon}$
		\State $\beta_1 \gets \frac{1}{\text{base}}$
		\State $\beta_2 \gets \frac{p^1_t}{\text{base}}$
		\State $\beta_3 \gets \frac{1/(p^2_t + \epsilon)}{\text{base}}$
		
		\State \( \theta_t \gets \beta_1 \cdot \text{mean}(\{\theta_{t}^{i}\}_{i\in[\Lambda] }) + \beta_2 \cdot \widetilde{\theta_t} + \beta_3 \cdot \overline{\theta_{t}} \)
		
	\end{algorithmic}
\end{algorithm}

\textbf{\textit{3).  Derivative model}} \textit{(Algorithm 1,  Lines 19–20)}: 
After obtaining the set of benign clients $\left | \Lambda  \right | $ via the central server's pass-through Eq.(\ref{EqP1}), the central server constructs a new synthetic update model using the existing $\theta ^{i_\Lambda}_{t}(k)$. As shown in Eq.(\ref{EqP3}), the parameters of each model are computed in the same dimension, and the maximum and minimum values are selected,

\begin{equation}
	\label{EqP3}
	\begin{aligned}
		\theta ^{\max }_{t}[k] & =\max \left\{\theta^{1}_{t}[k], \ldots, \theta^{\left | \Lambda  \right |  }_{t}[k]\right\}, k \in[d], \\
		\theta^{\min }_{t}[k] & =\min \left\{\theta^{1}_{t}[k], \ldots, \theta^{\left | \Lambda  \right |  }_{t}[k]\right\}, k \in[d].
	\end{aligned}
\end{equation}

After obtaining $\theta^{\max}_{t}[k]$ and $\theta^{\min}_{t}[k]$, the server assigns a trust score $S^{i}_{t}$ to each benign client using Eq.(\ref{EqP4}). This score quantifies the consistency between each benign client's local model $\theta_t^i$ and the extreme updates ($\theta^{\max}_{t}[k]$, $\theta^{\min}_{t}[k]$),

\begin{equation}
	\label{EqP4}
	\begin{aligned}
		S^{i}_{t}=\min \left\{\left\|{\theta }^{i}_{t}-{\theta }^{\max }_{t}\right\|,\left\|{\theta }^{i}_{t}-{\theta }^{\min }_{t}\right\|\right\}.
	\end{aligned}
\end{equation}

The larger value of $S^i_t$, the farther $\theta_t^i$ is from outliers, indicating a lower probability that $\theta_t^i$ belongs to a malicious model. To further optimize the accuracy of model aggregation, the derived model should be sourced from models distant from extreme values. Therefore, AdaBFL selects the model from the client with the highest score as the derived model. $i^* \in \left [ \Lambda  \right ] $ is defined as the client with the highest score, such that for any $i \ge \left [ \Lambda  \right ] $, $S_{i*}^ t\ge S^{i}_t$ holds. This selection criterion can be defined as Eq.(\ref{EqP5}):

\begin{equation}
	\label{EqP5}
	\begin{aligned}
		i_{*}=\underset{i \in[n]}{\operatorname{argmax}} \  S^{i}_{t}.
	\end{aligned}
\end{equation}

Define the number of synthetic updates as $m$. The central server assigns the synthetic update as $\hat{\theta}^j_{t}=\theta_{i_*}^{t}$, where $j\in \left [ m \right ] $. The newly synthesized $m$ models are fused with the previous $\left | \Lambda  \right | $ models. The fused model is then trimmed using the Trimmed-mean to obtain the model $\overline{\theta_{t}}$, with the update defined by Eq.(\ref{EqP6}).

\begin{equation}
	\label{EqP6}
	\begin{aligned}
		\hat{\theta}^{t}[k] &= \text{Trimmed-mean}\left\{\theta^{1}_{t}[k], \ldots, \theta^{n}_{t}[k], \overline{\theta}^1_{t}[k], \ldots, \overline{\theta}^m_{t}[k]\right\} \\ &
		\text{s.t.} \quad \overline{\theta}^1_{t}[k] = \cdots = \overline{\theta}^m_{t}[k] = \theta^{i_*}_{t}[k].
	\end{aligned}
\end{equation}

Weighting updating: 
After completing the operations of  Filter malicious, Parameter clipping and Derivative model, the central server fuse the obtained $\{\theta_t^i\}_{i\in \left [ \Lambda  \right ]  }, \widetilde{\theta^{i}_{t}}, \overline{\theta_{t}}$ according to Eq.(\ref{EqP7}) to derive the globally updated model $\theta_{t}$,

\begin{equation}
	\label{EqP7}
	\begin{aligned}
		\theta_{t} \leftarrow \beta_{1} \cdot \operatorname{mean}\left(\left\{\theta_{t}^{i}\right\}_{i\in \left [ \Lambda  \right ]  }\right)+\beta_{2} \cdot \widetilde{\theta_{t}}+\beta_{3} \cdot \overline{\theta_{t}},
	\end{aligned}
\end{equation}
where $\beta_1,\beta_2,\beta_3$ represent the respective weights assigned to $\{\theta_t^i\}_{\Lambda}$, $\widetilde{\theta^{i}_{t}}$, and $\overline{\theta_{t}}$ during aggregation, satisfying the constraint $\beta_1+\beta_2+\beta_3=1$.

If $p^1_{t} \ge \rho_1$, then update $\beta_1,\beta_2$ according to Eq.(\ref{EqP8}). Where $p^{1}_{t} \gets \frac{1}{d} \left\|(\widetilde{\theta_{t}} - \text{mean}(\theta^{i}_{t})) \right\|$, which measures the difference between the pruned model $\widetilde{\theta_{t}}$ and the mean of benign client models $mean(\theta_t^i)$. $\rho_1$ represents the tolerance level for this difference, $\delta_{\text{high}}$ denotes the maximum increment for weight adjustment, $\beta_{2\_max}$ is the upper bound for the maximum weight assigned to $\beta_2$, and $\beta_{1\_min}$ is the lower bound for the minimum weight assigned to $\beta_1$,

\begin{equation}
	\label{EqP8}
	\begin{aligned}
		\beta_2 \gets \min(\beta_2 + \delta_{\text{high}}, \beta_{2\_\max}),\\
		\beta_1 \gets \max(\beta_1 - \delta_{\text{high}}, \beta_{1\_\min}). 
	\end{aligned}
\end{equation}

If $p^2_{t} \ge \rho_2$, then update $\beta_3, beta_2$ according to Eq.(\ref{EqP9}), where $p^{2}_{t} \leftarrow \left \| \frac{1}{|\Lambda|} \sum_{i\in \left [ \Lambda  \right ] } \left(\theta_{t}^{i}, \overline{\theta_{t}}\right)  \right \| $, which measures the discrepancy between the fused model $\overline{\theta_{t}}$ and the mean of benign client models $mean(\theta_t^i)$.  $\rho_2$ represents the maximum tolerated difference, $\delta_{\text{low}}$ denotes the weight increment magnitude, and $\beta_{3\_min}$ is the minimum lower bound for $\beta_3$'s weight,

\begin{equation}
	\label{EqP9}
	\begin{aligned}
		\beta_3 \gets \max(\beta_3 - \delta_{\text{low}}, \beta_{3\_\min}),\\
		\beta_2 \gets \min(\beta_2 + \delta_{\text{high}}, \beta_{2\_\max}).
	\end{aligned}
\end{equation}

If $p^1_{t} < \rho_1$ and $p^2_{t} \ge \rho_2$ are not satisfied, then update $\beta_{1}$ directly according to Eq.(\ref{EqP10}). Where, $\beta_{1\_ base}$ denotes the minimum weight lower bound for $\beta_{1}$, and $\kappa$ is a parameter,

\begin{equation}
	\label{EqP10}
	\begin{aligned}
		\beta_1 \gets \beta_{1\_\text{base}} + \kappa(1 - p^1_{t}).
	\end{aligned}
\end{equation}

\textbf{Remark}: 
Algorithm\ref{alg:dfl} and Algorithm\ref{WUA} provide pseudocode for the AdaBFL framework. The steps involving Trimmed-Mean and mean in Algorithm\ref{alg:dfl} and Algorithm\ref{WUA} can be replaced with the \textit{Median}, as \textit{Median} can substitute mean in discarding outliers. Furthermore, to address the limitation in Algorithm\ref{WUA} where the \textit{Weighting update} step relies on thresholds $\rho_1$ and $\rho_2$, Algorithm\ref{WUWT} is proposed to overcome this constraint, enabling adaptive weight adjustment without thresholds. Finally, to optimize Algorithm\ref{alg:dfl}, we propose the momentum-based AdaBFL framework $AdaBFL_{Mo}$ (Algorithm\ref{alg:AdaM} see Appendix).

\section{Theoretical Analysis}
\label{Theoretical_analysis}
In this section, we provide convergence analysis for our AdaBFL method under the non-convex setting. We first give some standard assumptions on the above problem (\ref{P1}), which are commonly used in related FL works~\cite{mcmahan2017communication,li2020federated,karimireddy2020scaffold,yin2018byzantine,fang2025we,fang2024byzantine,el2021collaborative}. For detailed proof, please refer to the following Appendix.

\begin{assumption} 	\label{ass1}
	The objective function $F(\theta)$ is $L$-smooth. For any $\theta_1,\theta_2 \in \R^d$, we have 
	\begin{align}
		\left \| \nabla F(\theta_1) - \nabla F(\theta_2) \right \| \le L  \left \| \theta_1 - \theta_2 \right \| ,
		\\
		F(\theta_2) \le F(\theta_1) + \langle \nabla F(\theta_1), \theta_2 - \theta_1 \rangle + \frac{L}{2} \|\theta_2 - \theta_1\|^2 .
	\end{align} 
\end{assumption}

\begin{assumption} 	\label{ass2}
	The stochastic gradient $\nabla f^i(\theta;\xi^i)$ is an unbiased estimate of the true gradient and its variance is bounded, such that 
	\begin{align}
		\mathbb{E} \left [ \nabla f^i(\theta;\xi^i) \right ]  = \nabla f^i(  \theta), \  \mathbb{E} \left [ \left \|  \nabla f^i(\theta;\xi^i) - \nabla f^i(  \theta) \right \|  \right ]^2 \le \sigma ^2, \ \forall \ i\in [n].      
	\end{align}     
\end{assumption}
\begin{assumption} 	\label{ass3} 
	For any client $ i\in [n] $, the variance of its true local gradient $\nabla f^i(\theta)$ with respect to the global gradient $\nabla F(\theta)$ is bounded, such that
	\begin{align}
		\| \nabla f^i(\theta) - \nabla F(\theta)  \|^2 \le \tau ^2, \ \forall \ \theta \in \R^d .     
	\end{align}
\end{assumption}

\begin{assumption} 	\label{ass4} 
The function $F(\theta)$ has a lower bounded, i.e.,  $F^* = \inf_{\theta\in \R^d} F(w)>-\infty$.
\end{assumption}

Based on the above assumptions, theoretical results for the our AdaBFL framework are given.

\begin{theorem} \label{th1}
	Suppose the sequence $\{\theta_t\}_{t=0}^T$ is generated from our Algorithm~\ref{alg:dfl}. Let Assumptions \ref{ass1},\ref{ass2},\ref{ass3},\ref{ass4} hold, and given $\eta \in \left ( 0, \frac{1}{8L}  \right ] $ and $\|e_t\|\leq K\eta^2$, we have 
	\[
	\frac{1}{T} \sum_{t=0}^{T-1} \mathbb{E}\left[\left\|\nabla F\left(\theta_{t}\right)\right\|^{2}\right] \leq \frac{2\left ( F\left(\theta_{0}\right)-F^* \right ) }{\eta T} + \frac{7K^2\eta^2}{4}   + 2L\eta D,
	\]
	where $D = \frac{2\sigma ^2}{bN} +\tau ^2$, $K=\sqrt{\sigma + \tau} $, and $e_t = \theta_{t}- \left ( \theta_{t-1} - \frac{1}{N} \sum_{i \in [N]} g^{i}_{t}  \right )$.
\end{theorem}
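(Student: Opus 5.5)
The plan is to treat the robust aggregation of Algorithm~\ref{alg:dfl} (Eq.~(\ref{EqP7}) with the weights from Algorithm~\ref{WUA}) as a perturbed mini-batch SGD step. By the definition of $e_t$ in Theorem~\ref{th1}, I will write the global update as
\begin{align*}
\theta_t = \theta_{t-1} - \eta \bar g_t + e_t, \qquad \bar g_t := \frac{1}{N}\sum_{i\in[N]} g_t^i .
\end{align*}
Here I read the step size $\eta$ as multiplying the averaged gradient, consistent with line 11 of Algorithm~\ref{alg:dfl}. All Byzantine and filtering effects — Filter malicious, Parameter clipping, Derivative model and the adaptive weights $\beta_1,\beta_2,\beta_3$ — are absorbed into $e_t$. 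The only property of $e_t$ I will use is the hypothesis $\|e_t\|\le K\eta^2$. The proof is then a one-step descent inequality, followed by telescoping and averaging.

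First, apply the $L$-smoothness inequality of Assumption~\ref{ass1} with $\theta_1=\theta_{t-1}$ and $\theta_2=\theta_t$:
\begin{align*}
F(\theta_t) \le F(\theta_{t-1}) - \eta\langle \nabla F(\theta_{t-1}), \bar g_t\rangle + \langle \nabla F(\theta_{t-1}), e_t\rangle + \frac{L}{2}\|{-\eta \bar g_t + e_t}\|^2 .
\end{align*}
Then take the conditional expectation given $\theta_{t-1}$ and bound each term.
\begin{enumerate}
\item The first inner product becomes $-\eta\|\nabla F(\theta_{t-1})\|^2$, by unbiasedness (Assumption~\ref{ass2}) and $\frac1N\sum_i\nabla f^i=\nabla F$.
\item For the error inner product, Young's inequality gives
\begin{align*}
\langle \nabla F, e_t\rangle \le \frac{\eta}{4}\|\nabla F\|^2 + \frac{1}{\eta}\|e_t\|^2 \le \frac{\eta}{4}\|\nabla F\|^2 + K^2\eta^3 .
\end{align*}
\item For the quadratic term, use $\|a+b\|^2\le 2\|a\|^2+2\|b\|^2$. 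The variance and heterogeneity are controlled by Assumptions~\ref{ass2}--\ref{ass3}, with $b$ samples per client and independence across clients, so that
\begin{align*}
\mathbb{E}\|\bar g_t\|^2 \le 2\|\nabla F(\theta_{t-1})\|^2 + \frac{2\sigma^2}{bN} + \tau^2 = 2\|\nabla F(\theta_{t-1})\|^2 + D .
\end{align*}
The $\tau^2$ term enters via the decomposition $\nabla f^i=\nabla F+(\nabla f^i-\nabla F)$, which is needed if the aggregated set differs from the full average.
\end{enumerate}

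Collecting the coefficients of $\|\nabla F(\theta_{t-1})\|^2$ gives $-\eta+\frac{\eta}{4}+2L\eta^2$. With $\eta\le \frac{1}{8L}$ we have $2L\eta^2\le \eta/4$, so this coefficient is at most $-\eta/2$. The one-step bound is therefore
\begin{align*}
\frac{\eta}{2}\mathbb{E}\|\nabla F(\theta_{t-1})\|^2 \le \mathbb{E}[F(\theta_{t-1})-F(\theta_t)] + K^2\eta^3 + L K^2\eta^4 + L\eta^2 D .
\end{align*}
Next, sum over $t$, telescope using Assumption~\ref{ass4} to get $F(\theta_0)-\mathbb{E}F(\theta_T)\le F(\theta_0)-F^*$, and multiply by $\frac{2}{\eta T}$. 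Reindexing $t-1\to t$ yields the average over $t=0,\dots,T-1$. This produces the three terms of the statement: $\frac{2(F(\theta_0)-F^*)}{\eta T}$, a $K^2\eta^2$-type term, and $2L\eta D$.

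The main obstacle is bookkeeping the constants on the error contribution so that they land exactly at $\frac{7K^2\eta^2}{4}$. The naive split above gives $2K^2\eta^2+2LK^2\eta^3$, and $2LK^2\eta^3\le K^2\eta^2/4$ using $L\eta\le 1/8$. To reach the stated $7/4$ I would rebalance the Young parameter, for example $\langle\nabla F,e_t\rangle\le \frac{\eta}{8}\|\nabla F\|^2+\frac{2}{\eta}\|e_t\|^2$, or split the quadratic term with unequal weights $(1+c)$ and $(1+1/c)$. I would then redo the $\|\nabla F\|^2$ coefficient check under $\eta\le\frac1{8L}$. Conceptually, the real difficulty — showing that the three-layer aggregation actually satisfies $\|e_t\|\le K\eta^2$ — is assumed in the theorem rather than proved, so the argument stays a standard perturbed-SGD analysis.
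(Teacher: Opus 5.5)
Your argument is the paper's proof step for step: the same Young weight $\frac{\eta}{4}$, the same bound $\mathbb{E}\|\bar g_t\|^2\le 2\|\nabla F(\theta_{t-1})\|^2+D$, and the same use of $\eta\le\frac{1}{8L}$. Your bookkeeping is also correct: this chain gives $2K^2\eta^2+2LK^2\eta^3\le\frac94K^2\eta^2$, not $\frac74K^2\eta^2$. The paper reaches $\frac74$ only through a slip in its last line. It writes the telescoped error as $2K^2\eta^2(1-L\eta)$, but $\frac{2}{\eta}(K^2\eta^3+LK^2\eta^4)=2K^2\eta^2(1+L\eta)$. Even $2(1-L\eta)\le\frac74$ would need $L\eta\ge\frac18$, the opposite of the hypothesis.

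So the constant you left open is a genuine gap, and neither repair you sketch closes it. The weighting $\frac{\eta}{8}\|\nabla F\|^2+\frac{2}{\eta}\|e_t\|^2$ goes the wrong way: a smaller weight on $\|\nabla F\|^2$ forces a larger weight on $\|e_t\|^2$, and the error constant rises to $\frac{17}{4}$ (or $\frac{17}{5}$ if you divide by the improved coefficient $\frac{5\eta}{8}$). The $(1+c)/(1+1/c)$ split, kept together with your factor-$2$ bound on $\mathbb{E}\|\bar g_t\|^2$, cannot go below about $1.97$ at $L\eta=\frac18$ while the coefficient of $\|\nabla F\|^2$ stays at most $-\frac{\eta}{2}$. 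The term $L(1+c)\eta^2\|\nabla F\|^2$ consumes too much of the available slack.

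The missing idea is to use unbiasedness as an identity rather than through $\|a+b\|^2\le 2\|a\|^2+2\|b\|^2$. Since $e_t$ is defined against the full average over $[N]$, $\mathbb{E}[\bar g_t\mid\theta_{t-1}]=\nabla F(\theta_{t-1})$ exactly. (The $\tau^2$ term is therefore pure slack; if the average ran over a strict subset, your step on the first inner product would fail as well.) Hence $\mathbb{E}\|\bar g_t\|^2=\|\nabla F(\theta_{t-1})\|^2+\mathbb{E}\|\bar g_t-\nabla F(\theta_{t-1})\|^2\le\|\nabla F(\theta_{t-1})\|^2+\frac{\sigma^2}{bN}$.

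The quadratic term is then at most $L\eta^2\|\nabla F\|^2+L\eta^2D+LK^2\eta^4$. This uses only $\frac{\eta}{8}$ of the slack, so you can afford the Young weight $\frac{3\eta}{8}$: $\langle\nabla F,e_t\rangle\le\frac{3\eta}{8}\|\nabla F\|^2+\frac{2}{3\eta}\|e_t\|^2\le\frac{3\eta}{8}\|\nabla F\|^2+\frac23K^2\eta^3$. The coefficient becomes $-\eta+\frac{3\eta}{8}+L\eta^2\le-\frac{\eta}{2}$. Telescoping then gives $\frac{2(F(\theta_0)-F^*)}{\eta T}+2L\eta D$ plus the error term $K^2\eta^2\bigl(\frac43+2L\eta\bigr)\le\frac{19}{12}K^2\eta^2\le\frac74K^2\eta^2$. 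This proves the statement as written. Without this sharpening, both your argument and the paper's establish only the constant $\frac94$.
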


\begin{remark}
	Based on the above Theorem~\ref{th1}, gave $\eta=O(\frac{1}{\sqrt{T}})$, $b=O(1)$,  $L=O(1)$, $\sigma=O(1)$ and $\tau=O(1)$, 
	we can obtain 
	\begin{align}
		\frac{1}{T} \sum_{t=0}^{T-1} \mathbb{E}\left\|\nabla F\left(\theta_{t}\right)\right\|^2 \leq O(\frac{1}{\sqrt{T}}).
	\end{align}
	Thus, our AdaBFL algorithm has the same convergence rate of $O(\frac{1}{\sqrt{T}})$ as the standard FL algorithms~\cite{li2019convergence,li2020federated,karimireddy2020scaffold} under the nonconvex setting on non-iid data. 
\end{remark}

\section{Numerical Experiments}
In this section, we provide extensive numerical experiments to 
demonstrate effectiveness of our AdaBFL method.

\subsection{Experimental Setup}
In the subsection, we give the experimental setup on numerical experiments. 
\textit{1) Datasets:} In the experiment, seven datasets (Tiny ImageNet \cite{cs231n_tinyimagenet}, MNIST \cite{lecun2002gradient}, Fashion-MNIST \cite{xiao2017fashion}, Human Activity Recog-
nition (HAR) \cite{anguita2013public}, Shakespeare \cite{Shakespeare}, PetImage \cite{dogs-vs-cats-redux-kernels-edition} and CIFAR-10 \cite{krizhevsky2009learning}.) from different domains were utilized, and each dataset employed a distinct training framework. 

\textbf{a)  Tiny ImageNet \cite{cs231n_tinyimagenet}:} Tiny ImageNet dataset is commonly used for training and validation in image classification tasks. This dataset comprises images across 200 categories, with each category containing 500 training images and 50 validation images, totaling 100,000 images. 

\textbf{b)  MNIST \cite{lecun2002gradient}:} MNIST dataset is a handwritten digit recognition database comprising 60,000 training images and 10,000 test images, with a total of 10 image categories.  

\textbf{c) Fashion-MNIST \cite{xiao2017fashion}: }  Fashion-MNIST contains 60,000 training images and 10,000 test images in total. Each image is a 28x28 grayscale image, with a total of 10 image categories.  

\textbf{d) Human Activity Recognition (HAR) \cite{anguita2013public}: }HAR is a dataset (comprising six categories) collected from actual human activities, derived from the smartphone usage records of 30 users aged 19 to 48. Similar to \cite{fang2025we}, in the experimental simulation, 75\% of each user's data is randomly allocated as training samples, while 25\% is designated as test samples.

\textbf{e) Shakespeare \cite{Shakespeare}: }The Shakespeare dataset is a benchmark dataset specifically designed for federated learning, containing over 4,226,15 samples, each with a string length of 80 characters. This dataset is commonly used for character prediction tasks, which involve predicting the next character following a given string.

\textbf{f) PetImage \cite{dogs-vs-cats-redux-kernels-edition}: }PetImage is a cat and dog image classification dataset provided by Kaggle, containing a total of 25,000 images, with 12,500 images each of cats and dogs.  

\textbf{g) CIFAR-10 \cite{krizhevsky2009learning}:} The CIFAR-10 dataset was constructed through extensive sampling of real-world images, covering 10 distinct categories. Each category contains 6,000 color images at 32x32 pixels, totaling 60,000 images. The dataset is randomly split into 50,000 training images and 10,000 test images, ensuring a balanced representation during both the training and evaluation phases.

\textit{2) Poisoning Attacks: }
Similar to related state-of-the-art approaches \cite{fang2025we}, our experiments explored seven attacks: Label Flipping attack \cite{tolpegin2020data}, Gaussian attack \cite{blanchard2017machine}, Trim attack \cite{fang2020local}, Krum attack \cite{fang2020local}, Min-Max attack \cite{shejwalkar2021manipulating}, Min-Sum attack \cite{shejwalkar2021manipulating}, Scaling attack \cite{cao2020fltrust}) to evaluate our approach. These attacks are currently well-established and represent the primary threats posing substantial risks to FL systems.

\textbf{a) Label flipping attack (LFA) \cite{tolpegin2020data}:} LFA attack manifest as attackers modifying the labels of client training data. Specifically, for labels initially marked as $y$, attackers alter them to $M-y-1$, where $M$ is the total number of labels.

\textbf{b) Gaussian attack \cite{blanchard2017machine}:}
In this attack, malicious clients send randomly generated vectors to the central server. These vectors must follow a Gaussian distribution, typically sampled from a Gaussian distribution with a mean of 0 and a variance of 200.

\textbf{c) Trim attack \cite{fang2020local}:} Trim attack is a strategy exploiting vulnerabilities in Trim aggregation policies to target mean and median trimming in aggregation. Attackers manipulate the global model away from its intended target by crafting malicious local update models for client devices.

\textbf{d) Krum attack \cite{fang2020local}:} Krum attack aim to exploit Krum aggregation rules by altering model pruning precision or launching attacks based on proportions.

\textbf{e) Min-Max attack \cite{shejwalkar2021manipulating}:} 
The core objective of the Min-Max attack lies in performing fine-tuning operations on models to achieve a specific outcome: the distance between a malicious local model and any benign client model must be smaller than the maximum distance between any two benign models after their respective updates.

\textbf{f) Scaling attack \cite{cao2020fltrust}:} Scaling attacks are targeted attack where attackers first implant backdoor triggers in duplicate data copies to inflate malicious clients' local training datasets. These clients then use the inflated data to train models, amplifying model updates before transmitting them to the server.

\textbf{g) Sybil attack \cite{XiongTII}:} Sybil attack are a classic example of collusive attacks, where attackers control a large number of fake clients to coordinate the upload of identical malicious model updates, thereby manipulating or disrupting the collaborative training of global models.

\textit{3) Compared Aggregation Rules:} We compared AdaBFL with the following nine aggregation rules.

\textbf{FedAvg \cite{mcmahan2017communication}:} The lack of sufficient robustness in this aggregation method manifests as follows: after obtaining local model updates from all participating nodes, the server aggregates these updates using only a single method—averaging.

\textbf{Trimmed-Mean(Trim-mean) \cite{yin2018byzantine}:} It is a coordinate-based aggregation method whose operational logic is as follows: For each dimension, the server first removes the largest $k$ elements and the smallest $k$ elements from that dimension, then calculates the average of the remaining elements.

\textbf{GAS+TRIM-Mean \cite{liu2023byzantine}: }The aggregation process proceeds as follows: After receiving local model updates from all clients, the server first decomposes each update into multiple segments. Next, it applies the Trimmed-Mean \cite{yin2018byzantine} aggregation rule to each segment to obtain segment-level aggregated results. Subsequently, the server calculates an identification score for each client and selects the $n-f$ local model updates with the lowest identification scores (where \(f\) represents the total number of malicious clients). The final aggregation is completed by computing the average of these \(n-f\) updates.

\textbf{Gaussian+Trim-mean (Gau+Trim-mean):} After receiving $n$ local model updates from the client, the server generates $m$ synthetic updates. The parameters of the $k$-th dimension in these composite models follow a Gaussian distribution $\mathbin{N}(\mu, \sigma^2 )$, where $\mu$ and $\sigma^2$ represent the mean and standard deviation of $\left \{ g^{i}_{t}\left [ k \right ], i \in \left [ n \right ]  \right \} $, respectively. Subsequently, the server merges these $m$ synthetic updates with the $n$ received updates using the Trim-mean aggregation rule.

\textbf{Median \cite{yin2018byzantine}:} As another aggregation strategy implemented by coordinate, the specific approach is: for each dimension, the server calculates the median of that dimension from all received model updates.

\textbf{GAS+Median \cite{liu2023byzantine}: }The aggregation logic of GAS+Median is similar to that of GAS+Trim-Mean. The server divides each local model update into multiple parts, but in this method, each part uses the Median\cite{yin2018byzantine} aggregation rule. Subsequently, the server selects the $n-f$ local model updates with the lowest identification scores and calculates the final aggregated update result by computing the average of these updates.

\textbf{Gaussian + Median(Gau+Median):} The server generates $m$ synthetic updates following the same steps as the Gaussian + Trim-mean method, with the sole difference being the aggregation method. Here, the server employs the Median\cite{yin2018byzantine} aggregation rule to combine the $m$ synthetic updates with the client's $n$ local model updates.

\textbf{FoundationFL \cite{fang2025we}:} FoundationFL is a specialized aggregation rule where, after receiving $n$ local model updates from clients, the server computes a score $S^{i}_{t}$ for each client $(S^{i}_{t} = \min \left\{\left\|\boldsymbol{g}^{i}_{t} - \boldsymbol{g}^{\max}_{t}\right\|,\left\|\boldsymbol{g}^{i}_{t}-\boldsymbol{g}^{\min }_{t}\right\|\right\})$ for each client. It then selects the $m$ local models with the highest scores to derive $m$ synthetic updates. Finally, use Trimmed-mean or Median \cite{yin2018byzantine} to aggregate the $n$ local model updates from clients with the $m$ synthetic model updates.

\textit{Non-iid Setting:}
In the field of Federated Learning (FL), the non-iid (non-independent and identically distributed) property of client-side local training data represents a key distinction from traditional centralized learning. This study constructs a non-iid Setting experimental environment based on the methods outlined in \cite{fang2020local}\cite{fang2025we}: For a dataset with $M$ categories, all clients are first randomly partitioned into $M$ clusters; During sample allocation, a training sample labeled $y$ has a probability $h$ of being assigned to clients in cluster $y$ and a probability $\frac{(1-h)}{(M-1)}$ of being assigned to clients in other clusters. A higher value of $h$ indicates a more pronounced non-iid setting property in the client training data. For the Tiny Imagenet, MNIST, Fashion-MNIST, Petimage, and Cifar-10 datasets, we set $h=0.5$. Since the Shakespeare and Har datasets inherently possess heterogeneous characteristics, we did not simulate a non-iid setting for them. By default, results are displayed using parallel MNIST data. The evaluation metric in this paper is the \textit{Testing error rate}, which represents the percentage of test instances misclassified by the global model. A lower testing error rate indicates stronger adversarial resilience.

\subsection{Experimental Results}
\label{Evaluation}

\begin{table*}[htbp]
	\centering
	\caption{Validation results (\textbf{testing error rate}) of different aggregation methods on the Tiny Imagenet, MNIST, Fashion-MNIST, HAR, Cifar10, Petimage, and Shakespeare datasets. The proportion of malicious clients is 0.3.}
	\label{AdaBFL_effective}
	
	
	\centering
	
	\textbf{(a) Tiny ImageNet dataset}
	
	\vspace{0.2cm}
	\begin{tabular}{lccccccccc}
		\toprule
		Method & No  & LF  & Gaussian  & Trim  & Krum  & Min-Max  & Scaling  & Sybil \\
		\midrule
		
		FedAvg     & 0.279	& \textbf{0.280}	& 0.888	& 0.909	& 0.995	& 0.995	& 0.995 & 	0.995 \\
		
		Trim-mean      & \textbf{0.272}	 & 0.293	 & 0.803 & 	0.848	 & 0.995 & 	0.995 & 	0.995 & 	0.995 \\
		
		GAS+Trim-mean      & 0.283 & 	0.289 & 	0.885 & 	0.876 & 	0.995 & 	0.995 & 	0.995 & 	0.995\\
		
		FoundationFL+Median  &  0.273 & 0.281 &	0.295 & 0.331 & 0.328 & 0.372 & 0.305 & 0.323 \\
		
		Gau+Trim-mean   &  0.341 & 	0.306 &	0.914 & 0.863 & 0.995 & 0.995 &	0.995 & 0.995  \\
		
		Median  & \textbf{0.272}	 & 0.283 & 	0.288 &	0.313 & 0.307 & 0.337 & 0.294 & 0.283 \\
		
		GAS+Median   &  0.274 & 0.282 & 0.683 & 0.630 & 0.995 & 	0.995 & 0.995 & 0.995 \\
		
		Gau+Median    &  0.276 & \textbf{0.280} & 0.303 & 0.305 &	0.335  & 	0.384 & 	0.312 & 0.321  \\
		
		FoundationFL+Mean   & 0.290	 & 0.290 & 0.760 & 	0.401 & 0.674 & 0.995 & 0.995 & 0.995 \\
		
		\rowcolor{lightblue}
		AdaBFL(ours)    & \textbf{0.272}	& 0.282	& \textbf{0.270}	& \textbf{0.291}	& \textbf{0.282}	& \textbf{0.292}	& \textbf{0.287}	& \textbf{0.278} \\
		\bottomrule
	\end{tabular}
	
	\vspace{0.2cm} 
	\centering
	\textbf{(b) Cifar-10 dataset}
	
	\vspace{0.2cm}
	\begin{tabular}{lccccccccc}
		\toprule
		Method & No  & LF  & Gaussian  & Trim  & Krum  & Min-Max  & Scaling  & Sybil \\
		\midrule
		
		FedAvg     & \textbf{0.268} & 0.272 & 0.900 & 0.900 & 0.900 & 0.900 & 0.900  & 0.900 \\
		
		Trim-mean     & 0.272 & 0.269 & 0.900 & 0.900 & 0.900 & 0.850 & 0.900  & 0.900 \\
		
		GAS+Trim-mean     & 0.563 & 0.577 & 0.864 & 0.500 & 0.900 & 0.589 & 0.865  & 0.900 \\
		
		FoundationFL+Median     & 0.275 & 0.275 & 0.314 & 0.340 & 0.273 & 0.363 & 0.351  & 0.388 \\
		
		Gau+Trim-mean     & 0.271 & 0.273 & 0.557 & 0.676 & 0.900 & 0.671 & 0.900  & 0.900 \\
		
		Median     & 0.280 & 0.278 & 0.293 & 0.427 & 0.322 & 0.336 & 0.449  & 0.432 \\
		
		GAS+Median     & 0.555 & 0.572 & 0.570 & 0.594 & 0.583 & 0.579 & 0.550  & 0.485 \\
		
		Gau+Median     & 0.272 & 0.273 & 0.314 & 0.518 & 0.900 & 0.389 & 0.900  & 0.900 \\
		
		FoundationFL+Mean     & 0.272 & 0.272 & 0.900 & 0.412 & 0.737 & 0.795 & 0.900  & 0.900 \\
		
		\rowcolor{lightblue}
		AdaBFL(ours)    & \textbf{0.268} & \textbf{0.257} & \textbf{0.286} & \textbf{0.276} & \textbf{0.274} & \textbf{0.290} & \textbf{0.278}  & \textbf{0.278} \\
		\bottomrule
	\end{tabular}
	
	\vspace{0.2cm}
	\textbf{(c) MNIST dataset}
	
	\vspace{0.2cm}
	\begin{tabular}{lccccccccc}
		\toprule
		Method & No  & LF  & Gaussian  & Trim  & Krum  & Min-Max  & Scaling  & Sybil \\
		\midrule
		
		FedAvg     & \textbf{0.010} & \textbf{0.010} & 0.897 & 0.900 & 0.91 & 0.878 & 0.900 & 0.900 \\
		
		Trim-mean     & \textbf{0.010} & \textbf{0.010} & 0.900 & 0.900 & 0.900 & 0.473 & 0.900  & 0.900 \\
		
		GAS+Trim-mean     & 0.031 & 0.030 & 0.544 & 0.027 & 0.886 & 0.236 & 0.900  & 0.900 \\
		
		FoundationFL+Median     & \textbf{0.010} & \textbf{0.010} & 0.019 & 0.011 & \textbf{0.010} & 0.016 & \textbf{0.010}  & 0.010 \\
		
		Gau+Trim-mean     & \textbf{0.010} & \textbf{0.010} & 0.900 & 0.900 & 0.900 & 0.743 & 0.900  & 0.900 \\
		
		Median     & \textbf{0.010} & \textbf{0.010} & 0.010 & 0.013 & \textbf{0.010} & 0.012 & 0.012  & 0.013 \\
		
		GAS+Median     & 0.032 & 0.032 & 0.031 & 0.028 & 0.032 & 0.034 & 0.024  & 0.025 \\
		
		Gau+Median     & \textbf{0.010} & \textbf{0.010} & 0.010 & 0.015 & 0.538 & 0.015 & 0.900  & 0.900 \\
		
		FoundationFL+Mean     & \textbf{0.010} & \textbf{0.010} & 0.883 & 0.105 & 0.325 & 0.681 & 0.900  & 0.900 \\
		
		\rowcolor{lightblue}
		AdaBFL(ours)    & \textbf{0.010} & \textbf{0.010} & \textbf{0.009} & \textbf{0.010} & \textbf{0.010} & \textbf{0.009} & \textbf{0.010}  & \textbf{0.008} \\
		\bottomrule
	\end{tabular}
	
	\vspace{0.2cm} 
	
	\centering
	
\end{table*}

\textbf{Our AdaBFL is effective:} 
Table \ref{AdaBFL_effective} shows the performance of various federated aggregation algorithms under different poisoning attacks on the Tiny ImageNet, MNIST, Fashion-MNIST, HAR, Cifar10, Petimage, and Shakespeare datasets. Results for the Fashion-MNIST, Petimage, HAR and Shakespeare datasets are presented in Table \ref{AdaBFL_effective2} in the appendix. (Note that the evaluation metric for the Shakespeare dataset is calculated perplexity, not test error rate. A lower calculated perplexity indicates better model performance.) No attack indicates that all clients in the FL system are benign and do not launch poisoning attacks. Under the No attack setting, AdaBFL achieves a test error rate comparable to that of the baseline FedAvg, demonstrating that AdaBFL can match FedAvg's effectiveness in a benign environment. However, when malicious clients exist in the FL system, AdaBFL demonstrates robust anti-attack properties. For instance, on the Tiny Imagenet, MNIST, Fashion-MNIST, HAR, and Cifar-10 datasets, AdaBFL is the only method that maintains equivalence to FedAvg under multiple attack scenarios while preserving the same utility as in the no-attack setting. Experiments also reveal inherent flaws in existing aggregation schemes under poisoning attacks. For instance, the test error rate of Gau+Trim-mean on the MNIST dataset surged from 0.010 without attacks to 0.900 under Gaussian attacks. Additionally, while the Median aggregation algorithm exhibits weaker resistance than AdaBFL on MNIST, Fashion-MNIST, and HAR datasets, it still maintains satisfactory utility. However, the Median aggregation algorithm exhibits reduced resistance on Cifar10. For instance, its test error rate increases from 0.280 without attacks to 0.449 under a Scaling attack. To further highlight AdaBFL's advantages, we also evaluated the resistance of Median and AdaBFL against various attacks when over 50\% of clients are malicious (see Appendix Table \ref{tab_Malicious_1}). When the proportion of malicious clients exceeds 50\%, Median only resists LF attacks, while AdaBFL resists LF attacks, Gaussian attacks, Trim attacks, and min-max attacks. The above analysis demonstrates that AdaBFL possesses significant advantages among Byzantine-robust aggregation algorithms.

\begin{figure*}[htbp]
	\centering
	
	\includegraphics[width=1\textwidth]{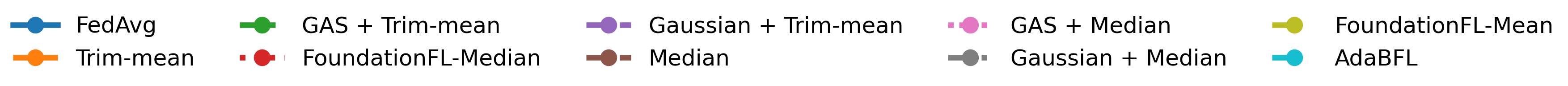}
	
	\vspace{0.2cm} 
	
	\begin{minipage}[b]{0.24\textwidth}
		\centering
		\includegraphics[width=\textwidth]{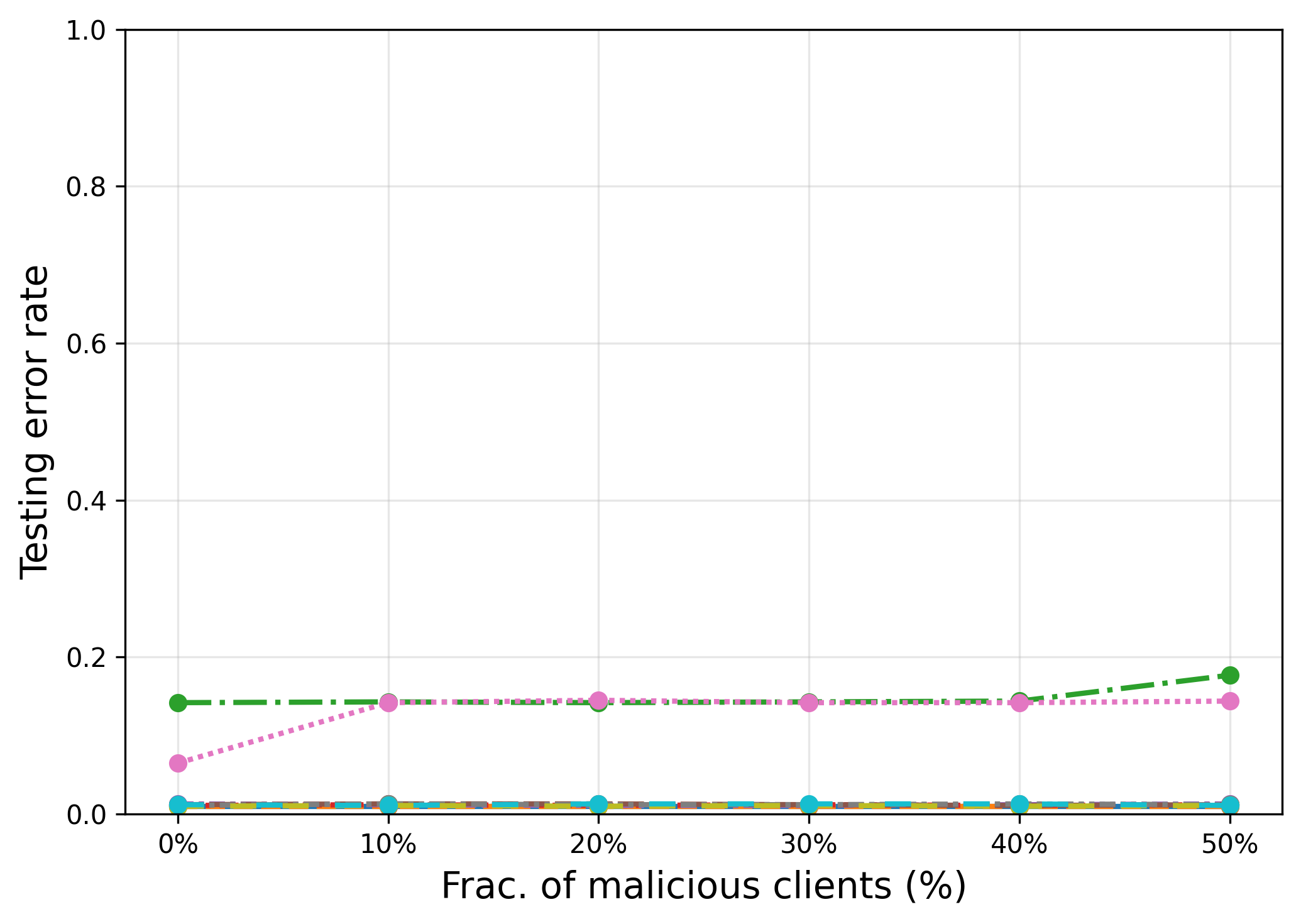}
		\small\captionof*{figure}{(a) No attack} 
	\end{minipage}
	\hfill 
	\begin{minipage}[b]{0.24\textwidth}
		\centering
		\includegraphics[width=\textwidth]{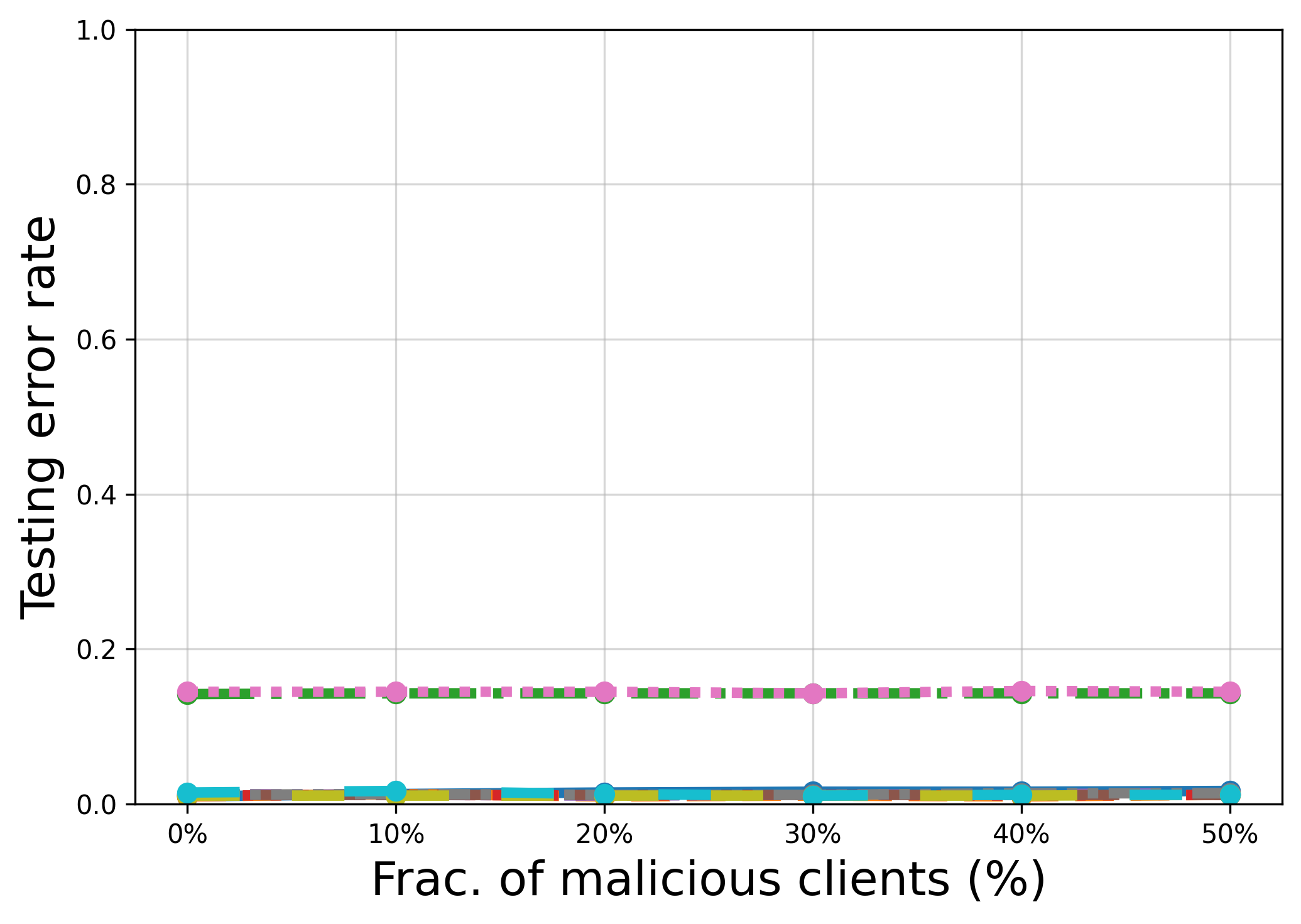}
		\small\captionof*{figure}{(b) LF attack}
	\end{minipage}
	\hfill
	\begin{minipage}[b]{0.24\textwidth}
		\centering
		\includegraphics[width=\textwidth]{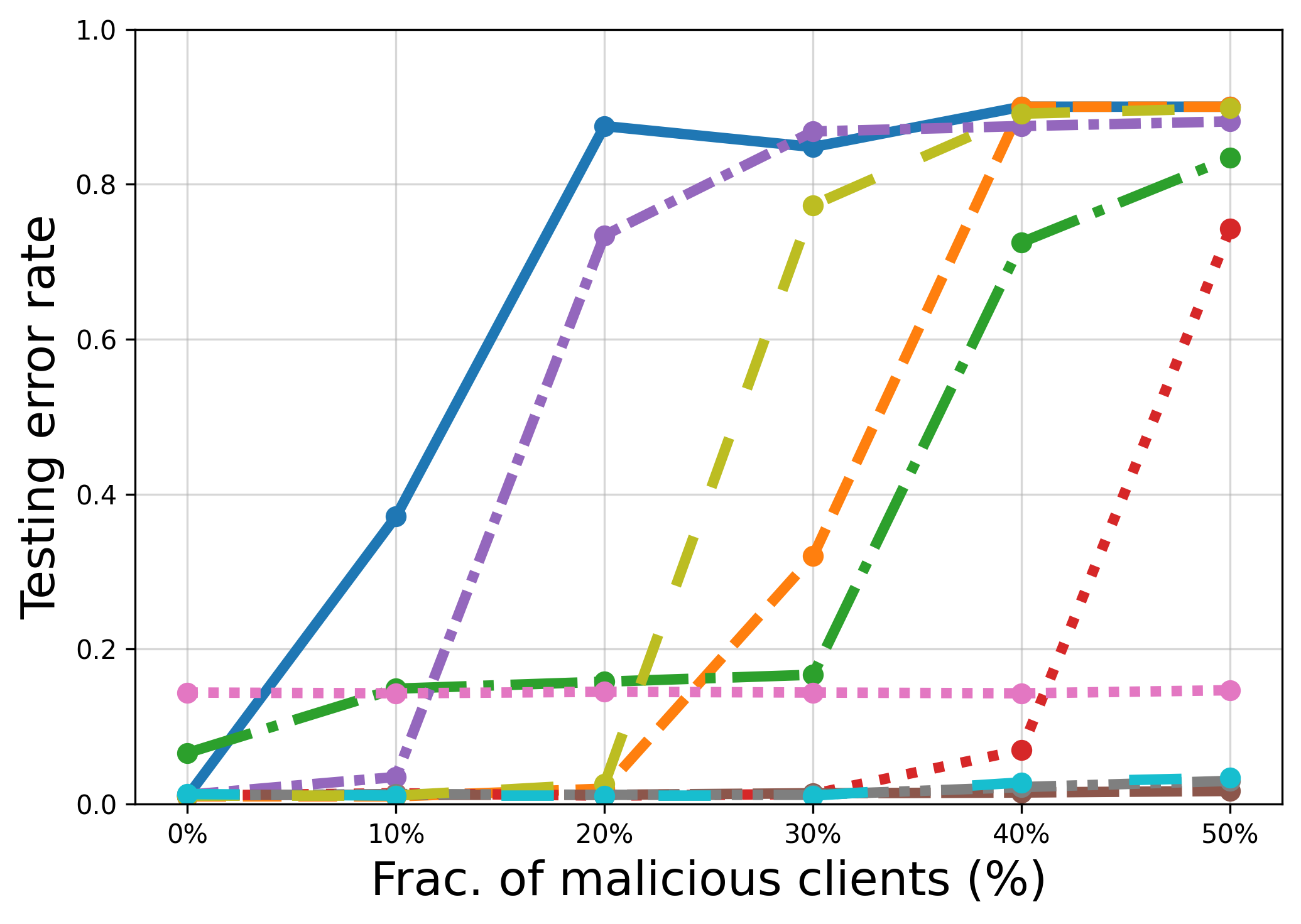}
		\small\captionof*{figure}{(c) Gaussian attack}
	\end{minipage}
	\hfill
	\begin{minipage}[b]{0.24\textwidth}
		\centering
		\includegraphics[width=\textwidth]{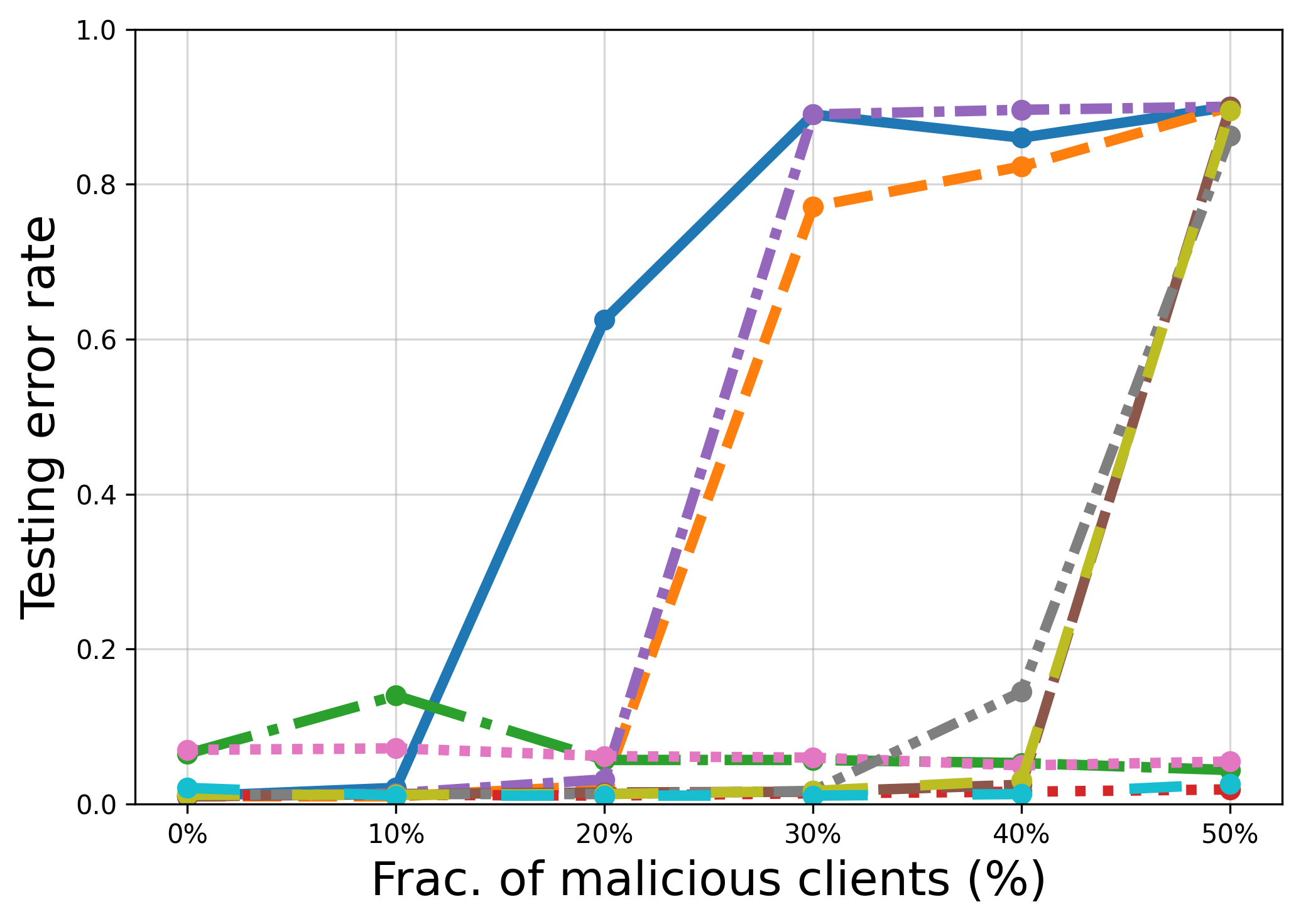}
		\small\captionof*{figure}{(d) Trim attack}
	\end{minipage}
	
	\vspace{0.15cm} 
	
	\begin{minipage}[b]{0.24\textwidth}
		\centering
		\includegraphics[width=\textwidth]{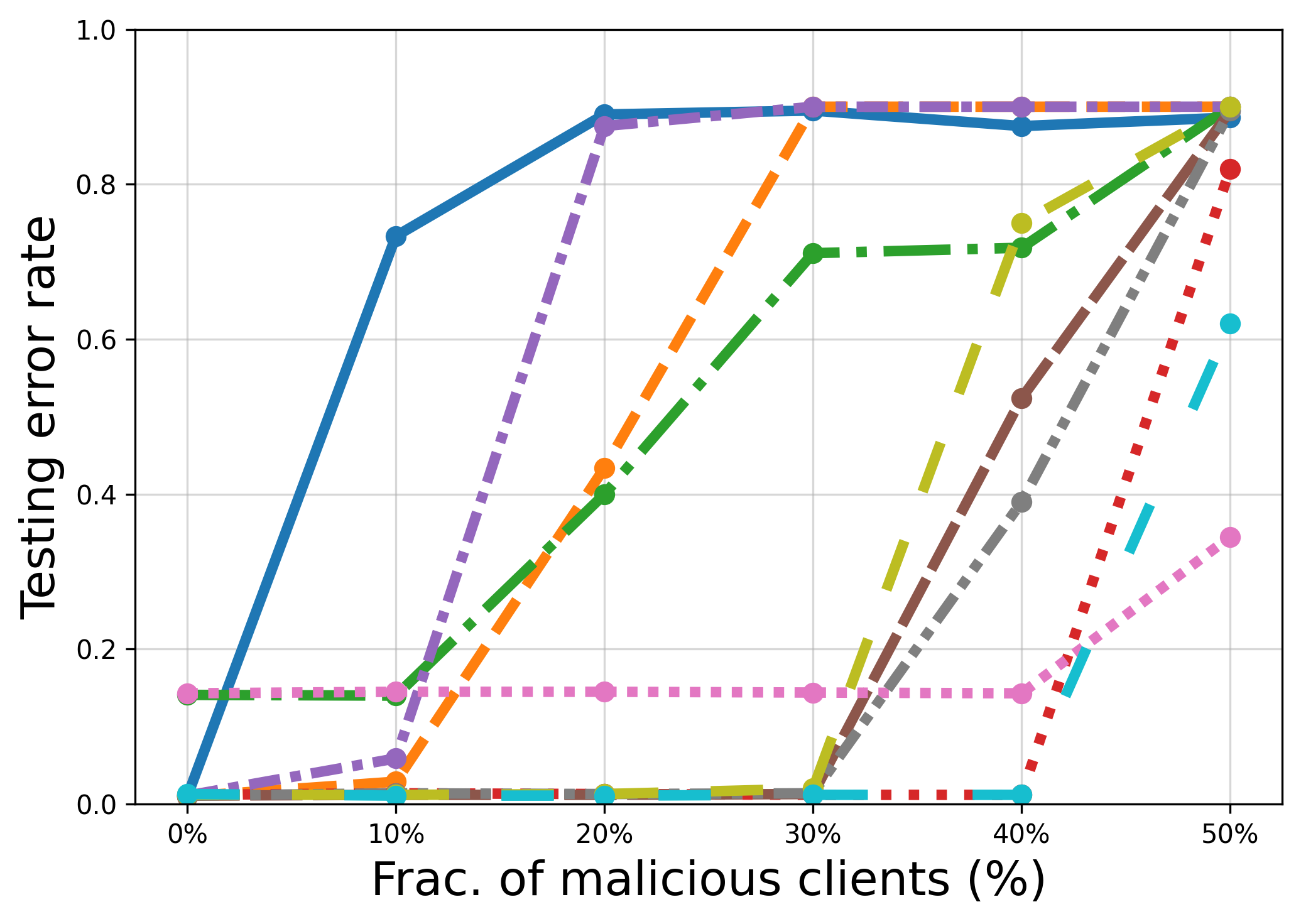}
		\small\captionof*{figure}{(e) Krum attack}
	\end{minipage}
	\hfill
	\begin{minipage}[b]{0.24\textwidth}
		\centering
		\includegraphics[width=\textwidth]{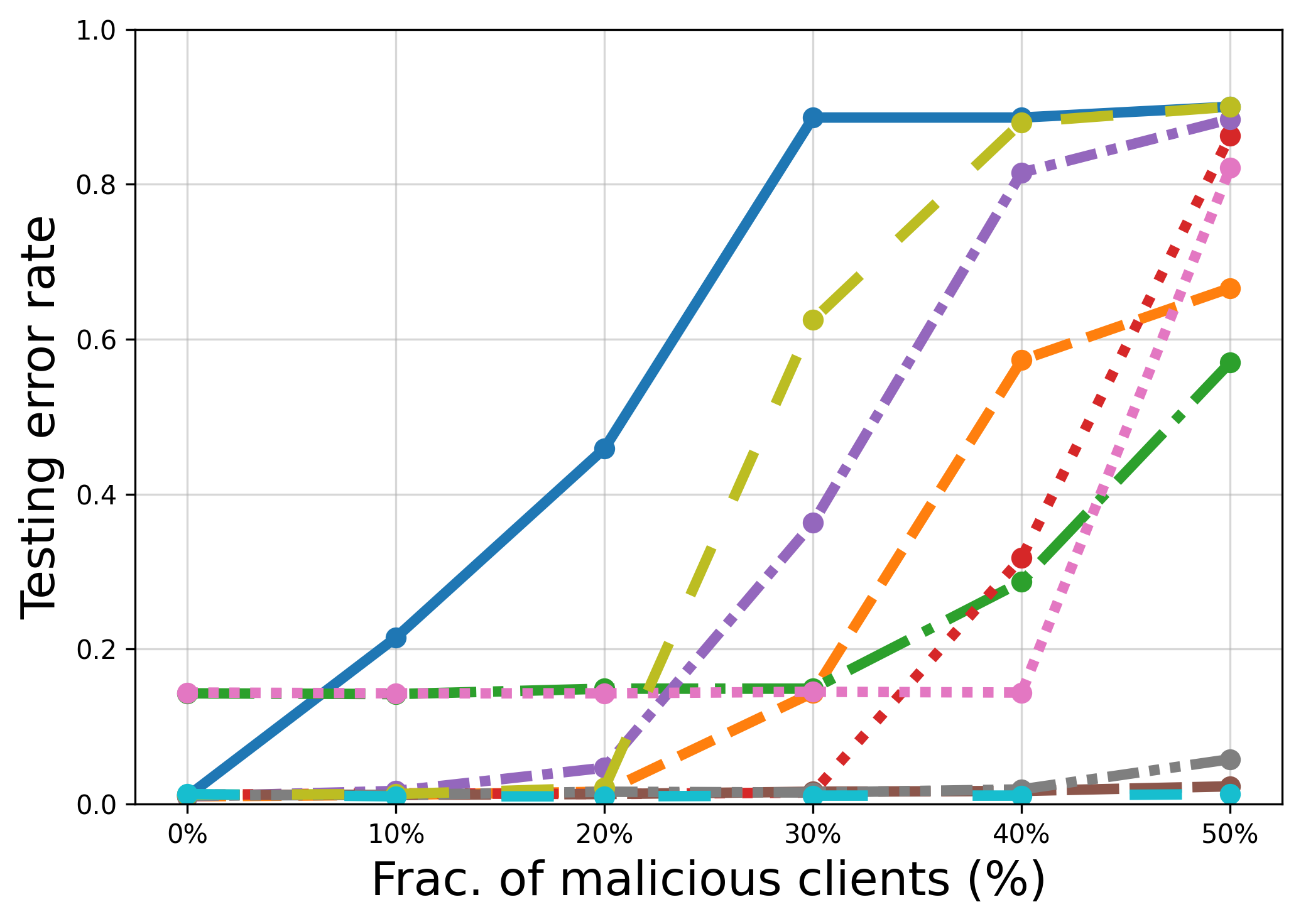}
		\small\captionof*{figure}{(f) Min-Max attack}
	\end{minipage}
	\hfill
	\begin{minipage}[b]{0.24\textwidth}
		\centering
		\includegraphics[width=\textwidth]{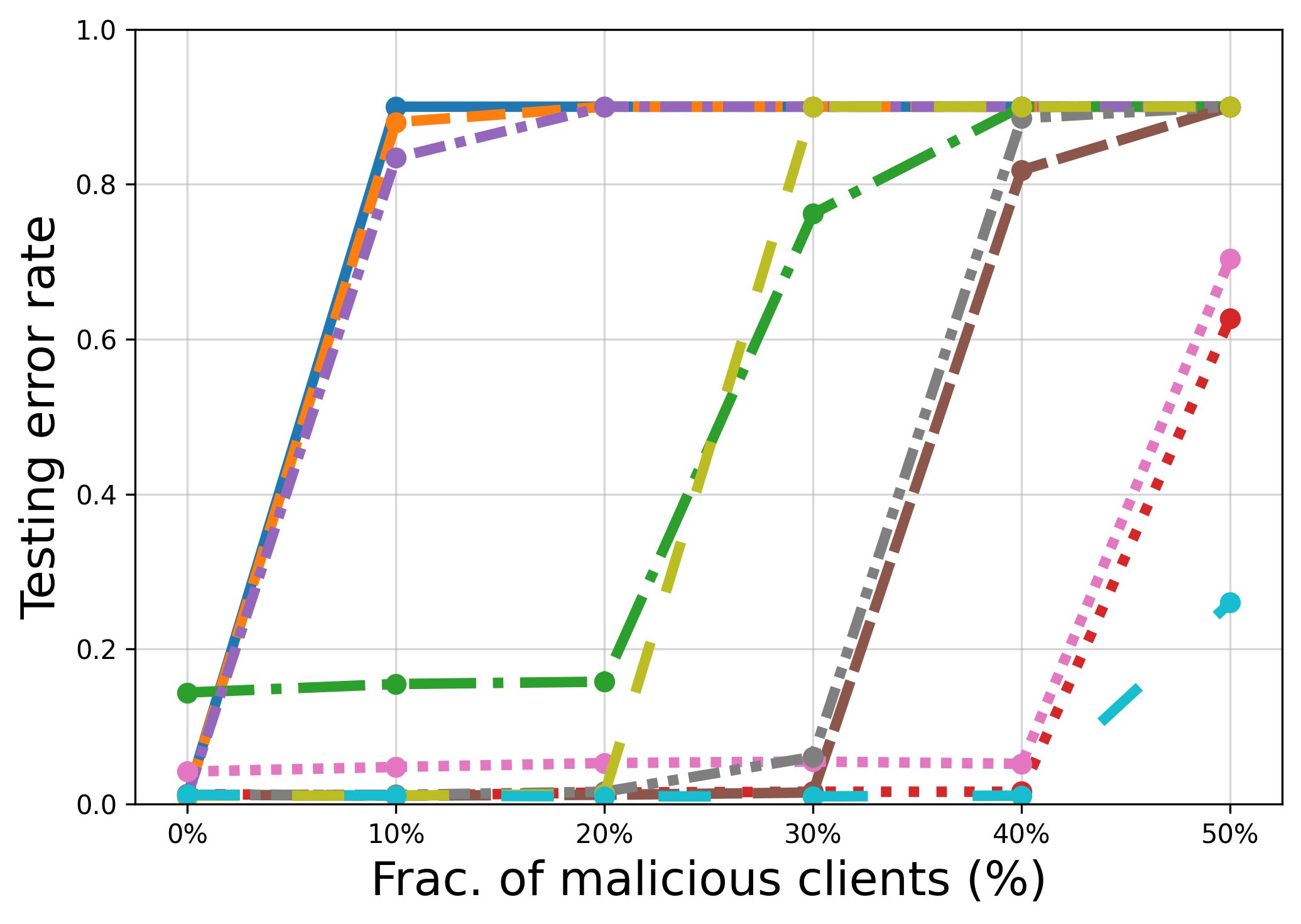}
		\small\captionof*{figure}{(g) Scaling attack}
	\end{minipage}
	\hfill
	\begin{minipage}[b]{0.24\textwidth}
		\centering
		\includegraphics[width=\textwidth]{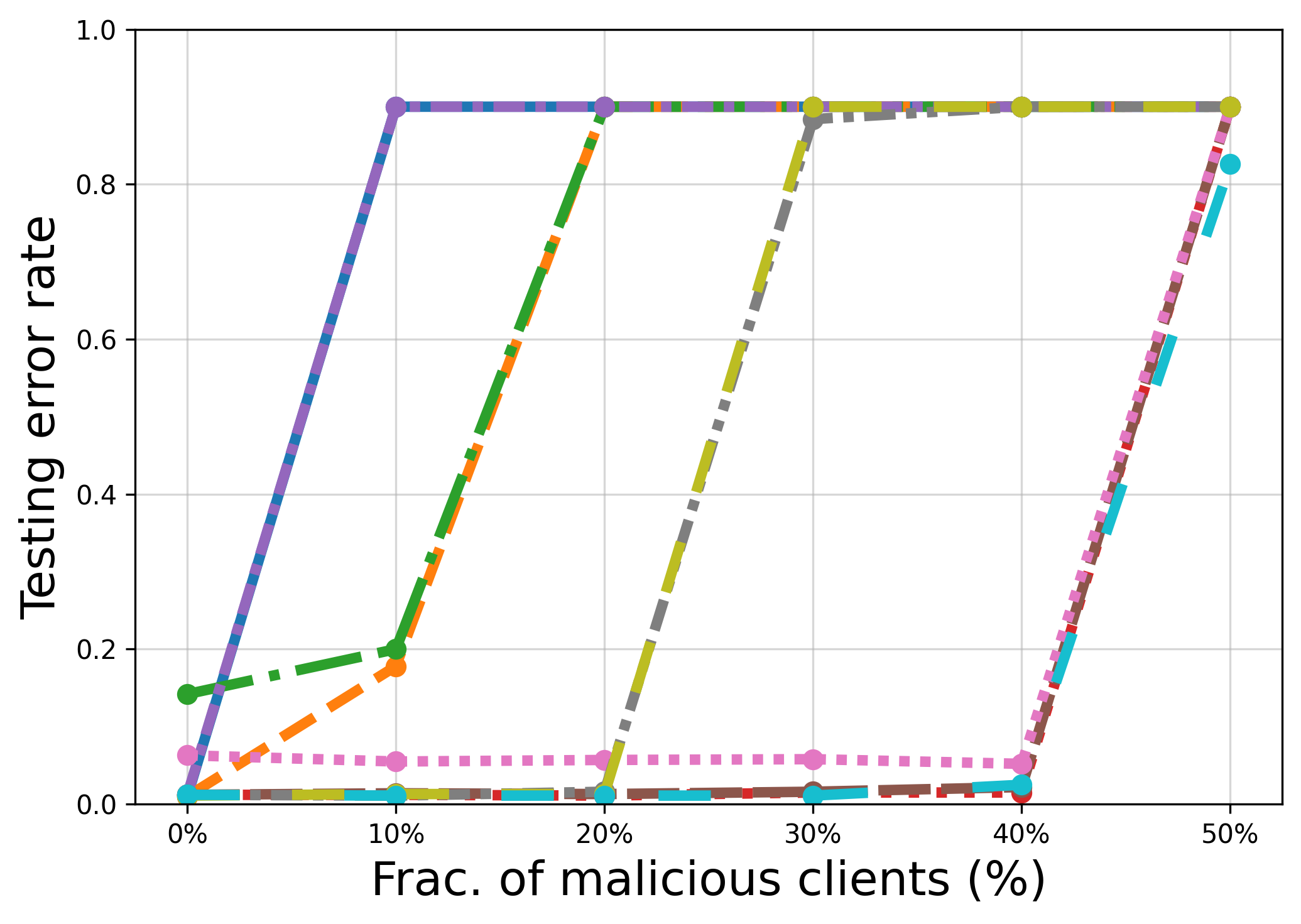}
		\small\captionof*{figure}{(h) Sybil attack}
	\end{minipage}
	
	\caption{Impact of fraction of malicious clients on MNIST (the total number of clients is 100).}
	\label{Malicious_num}
\end{figure*}

\begin{figure*}[htbp]
	\centering
	
	\includegraphics[width=1\textwidth]{Images/Experimental_Results/legend_only.png}
	
	\vspace{0.2cm} 
	
	\begin{minipage}[b]{0.24\textwidth}
		\centering
		\includegraphics[width=\textwidth]{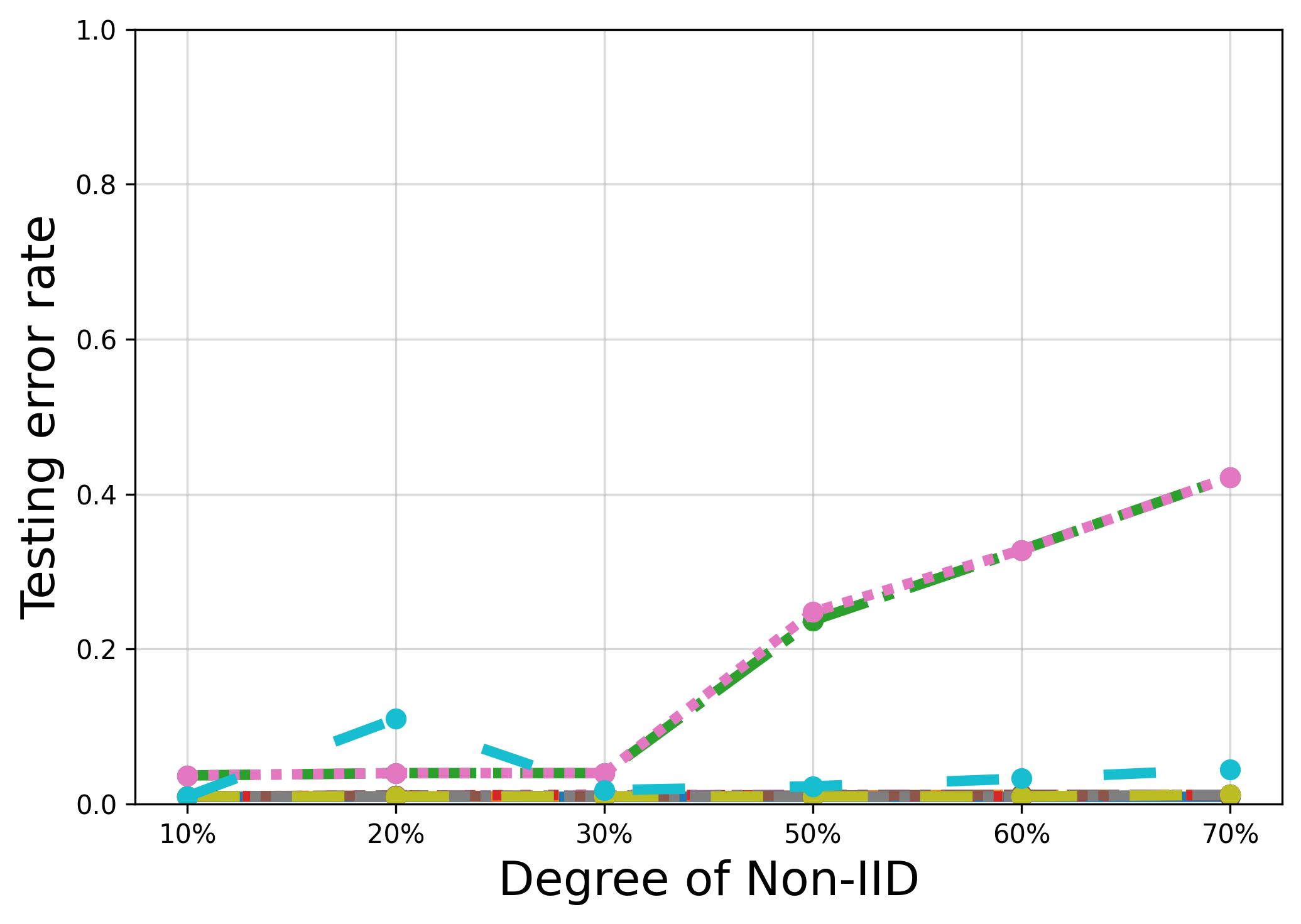}
		\small\captionof*{figure}{(a) No attack} 
	\end{minipage}
	\hfill 
	\begin{minipage}[b]{0.24\textwidth}
		\centering
		\includegraphics[width=\textwidth]{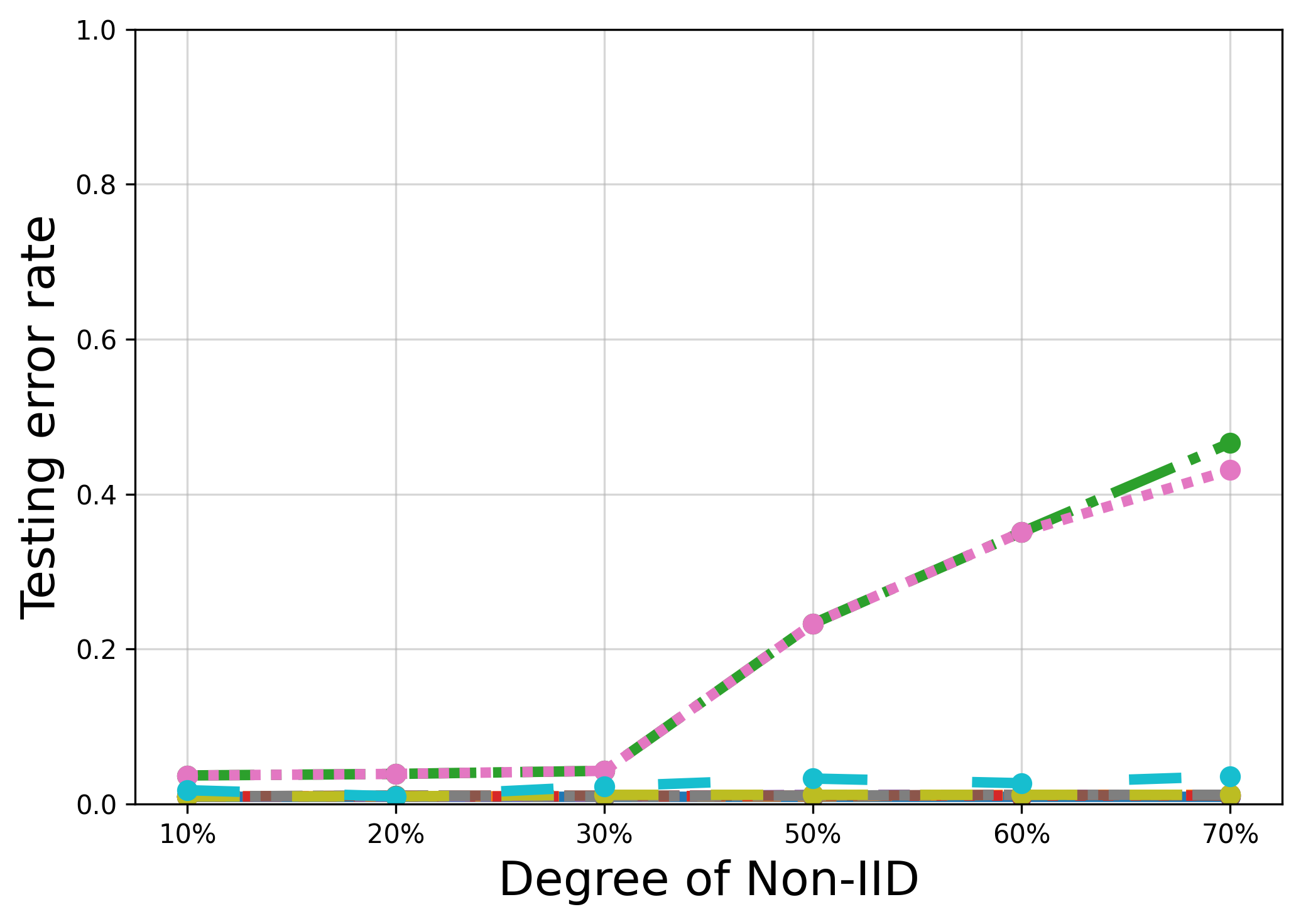}
		\small\captionof*{figure}{(b) LF attack}
	\end{minipage}
	\hfill
	\begin{minipage}[b]{0.24\textwidth}
		\centering
		\includegraphics[width=\textwidth]{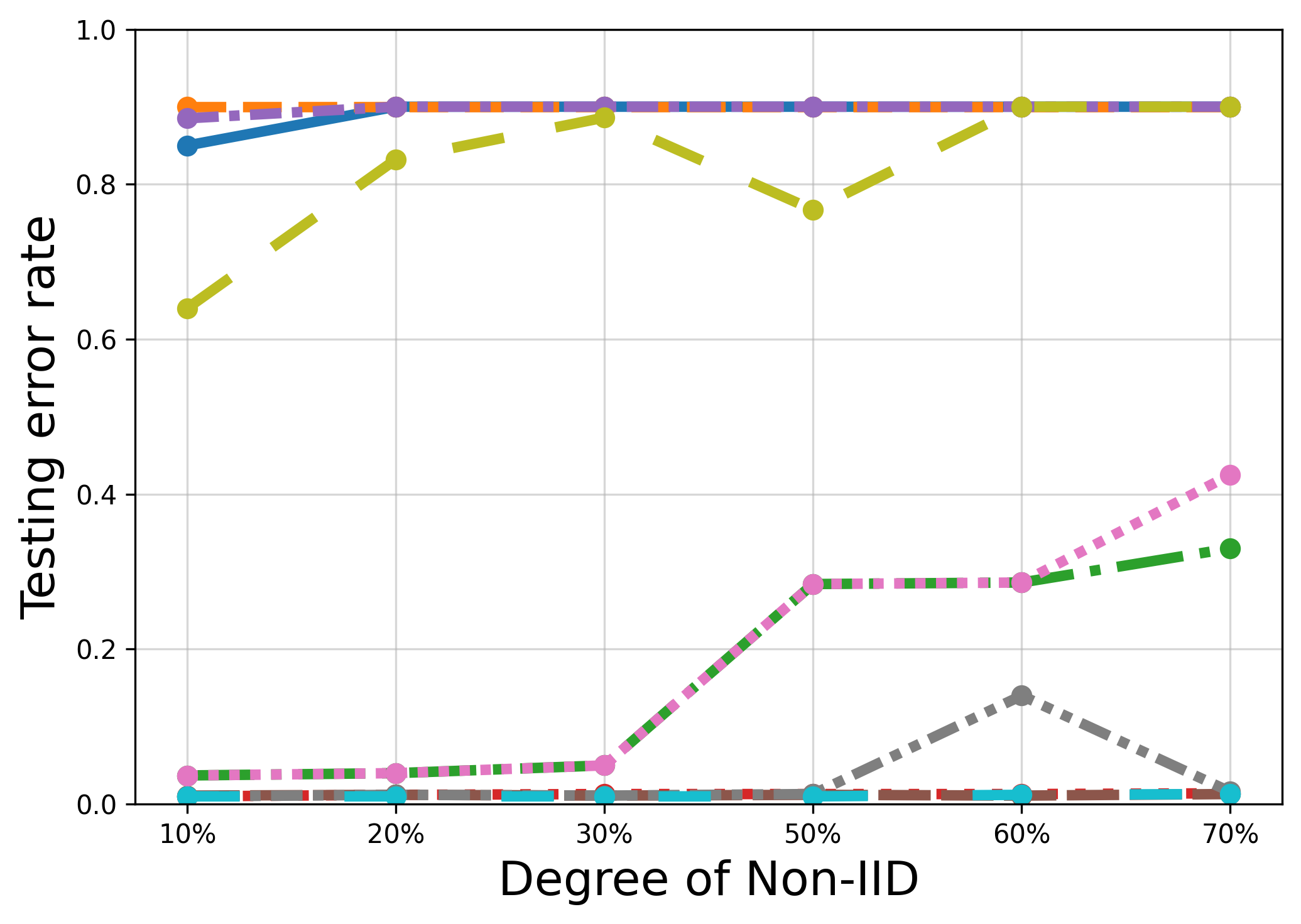}
		\small\captionof*{figure}{(c) Gaussian attack}
	\end{minipage}
	\hfill
	\begin{minipage}[b]{0.24\textwidth}
		\centering
		\includegraphics[width=\textwidth]{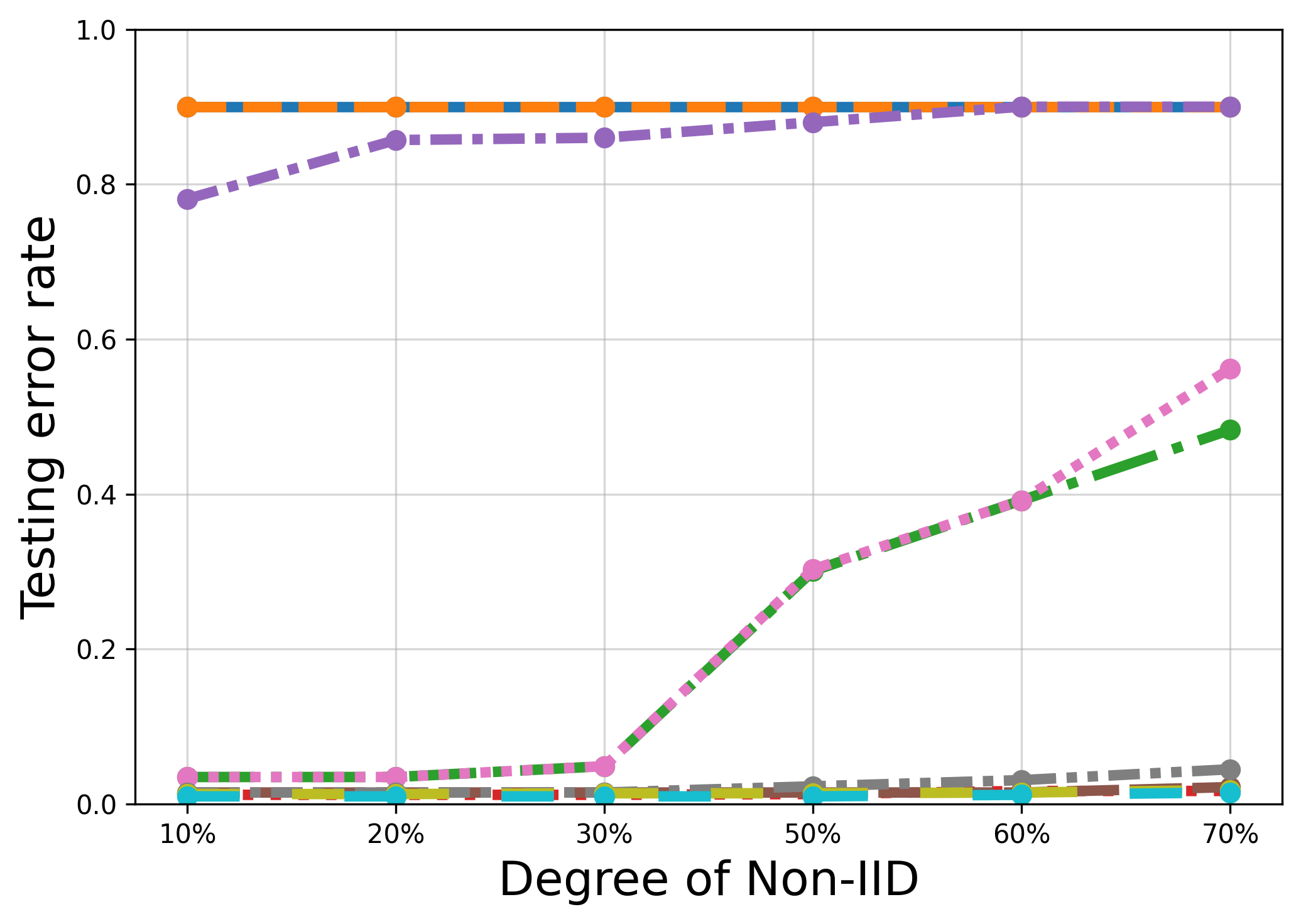}
		\small\captionof*{figure}{(d) Trim attack}
	\end{minipage}
	
	\vspace{0.15cm} 
	
	\begin{minipage}[b]{0.24\textwidth}
		\centering
		\includegraphics[width=\textwidth]{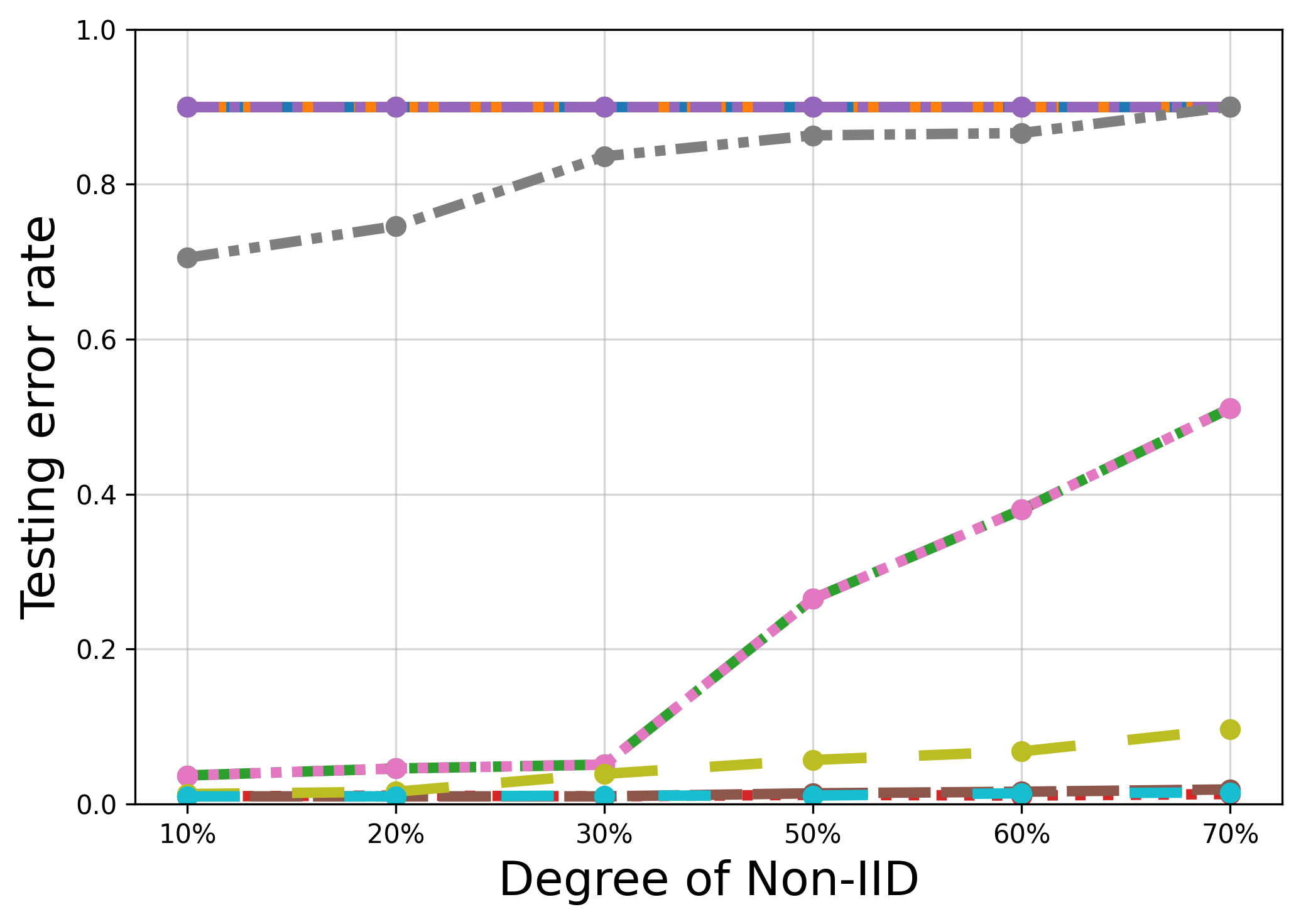}
		\small\captionof*{figure}{(e) Krum attack}
	\end{minipage}
	\hfill
	\begin{minipage}[b]{0.24\textwidth}
		\centering
		\includegraphics[width=\textwidth]{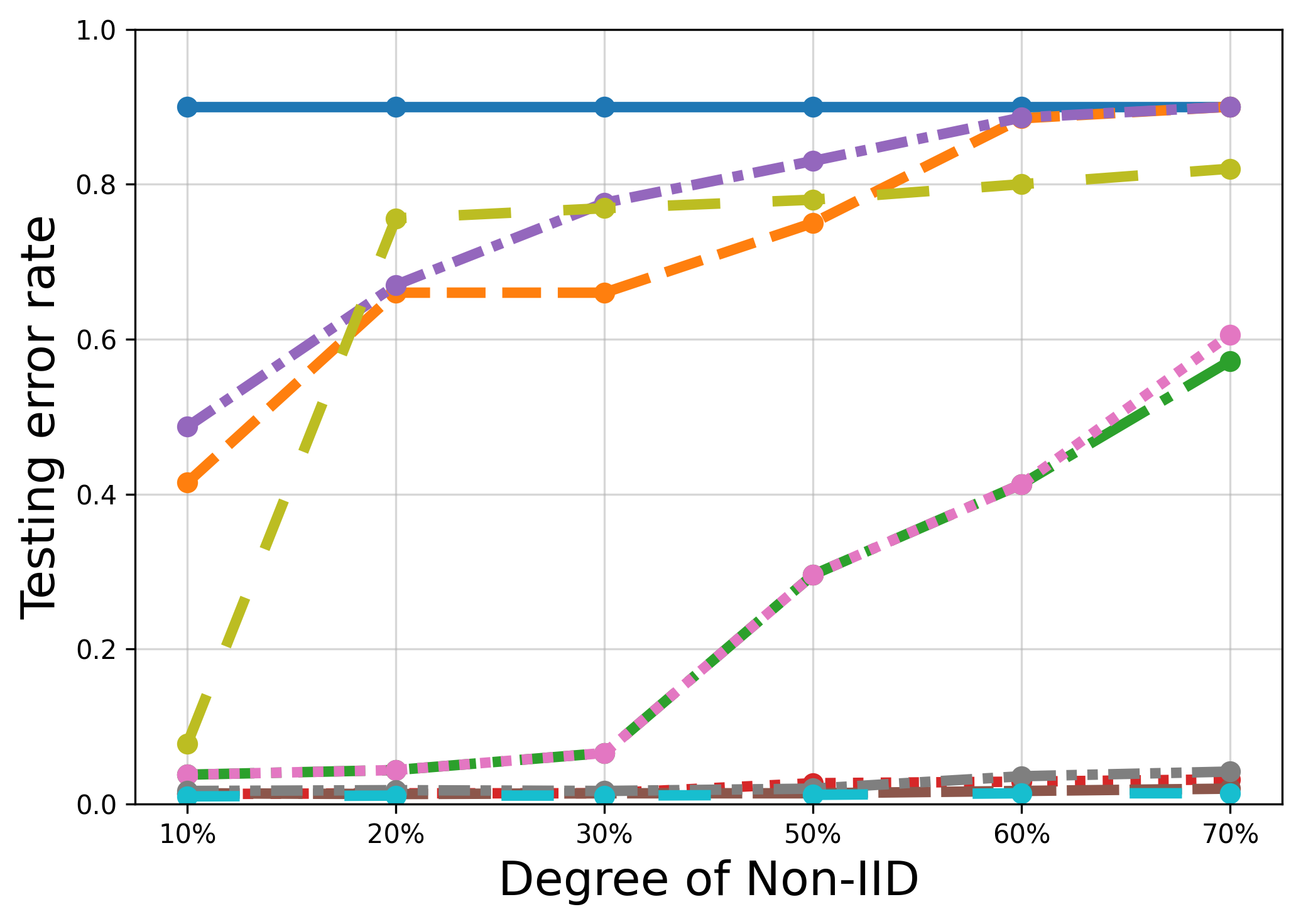}
		\small\captionof*{figure}{(f) Min-Max attack}
	\end{minipage}
	\hfill
	\begin{minipage}[b]{0.24\textwidth}
		\centering
		\includegraphics[width=\textwidth]{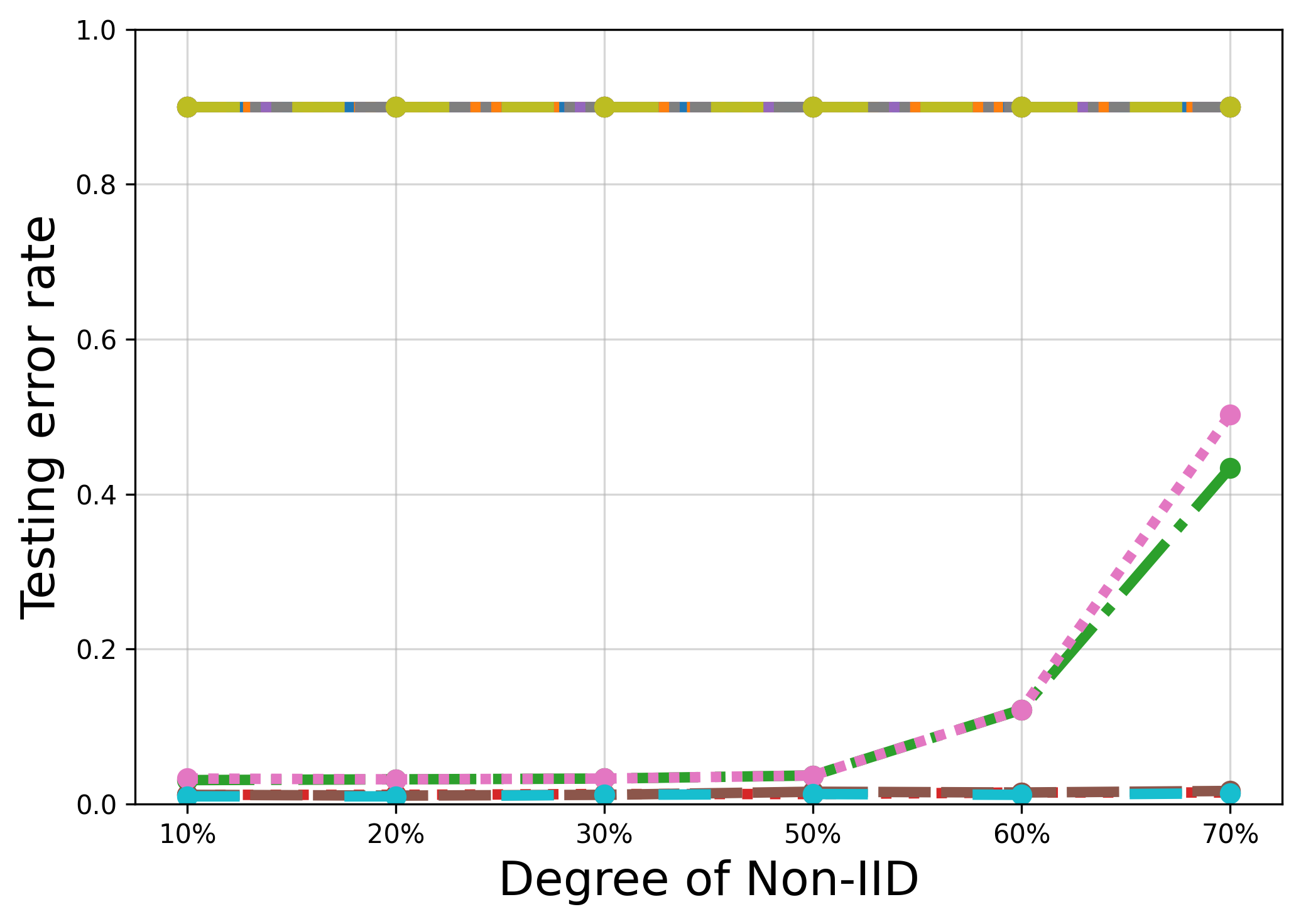}
		\small\captionof*{figure}{(g) Scaling attack}
	\end{minipage}
	\hfill
	\begin{minipage}[b]{0.24\textwidth}
		\centering
		\includegraphics[width=\textwidth]{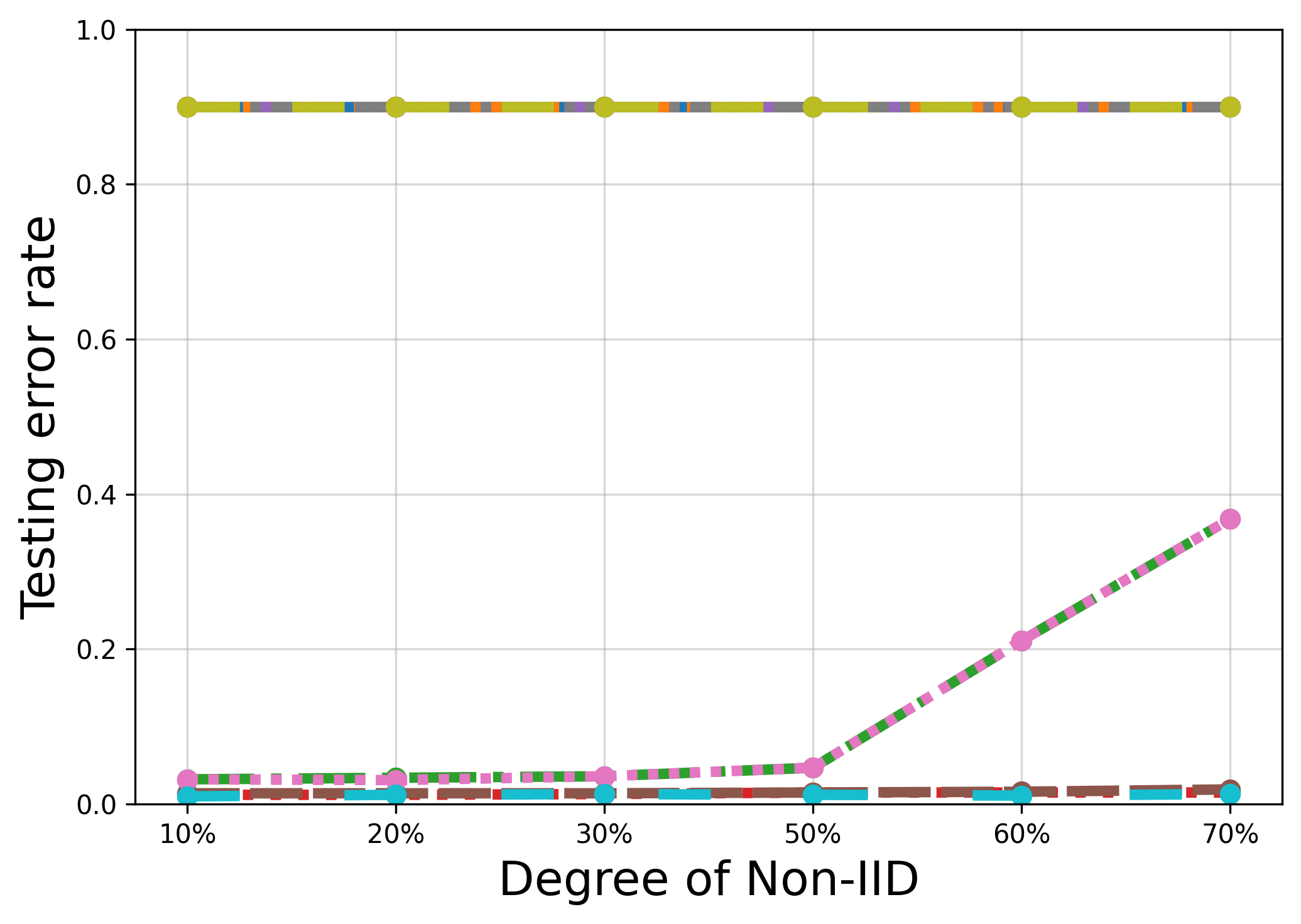}
		\small\captionof*{figure}{(h) Sybil attack}
	\end{minipage}
	
	\caption{Impact of degree of non-iid on MNIST.}
	\label{fig:experimental_results_NIID}
\end{figure*}

\begin{figure*}[htbp]
	\centering
	
	\includegraphics[width=1\textwidth]{Images/Experimental_Results/legend_only.png}
	
	\vspace{0.2cm} 
	
	\begin{minipage}[b]{0.24\textwidth}
		\centering
		\includegraphics[width=\textwidth]{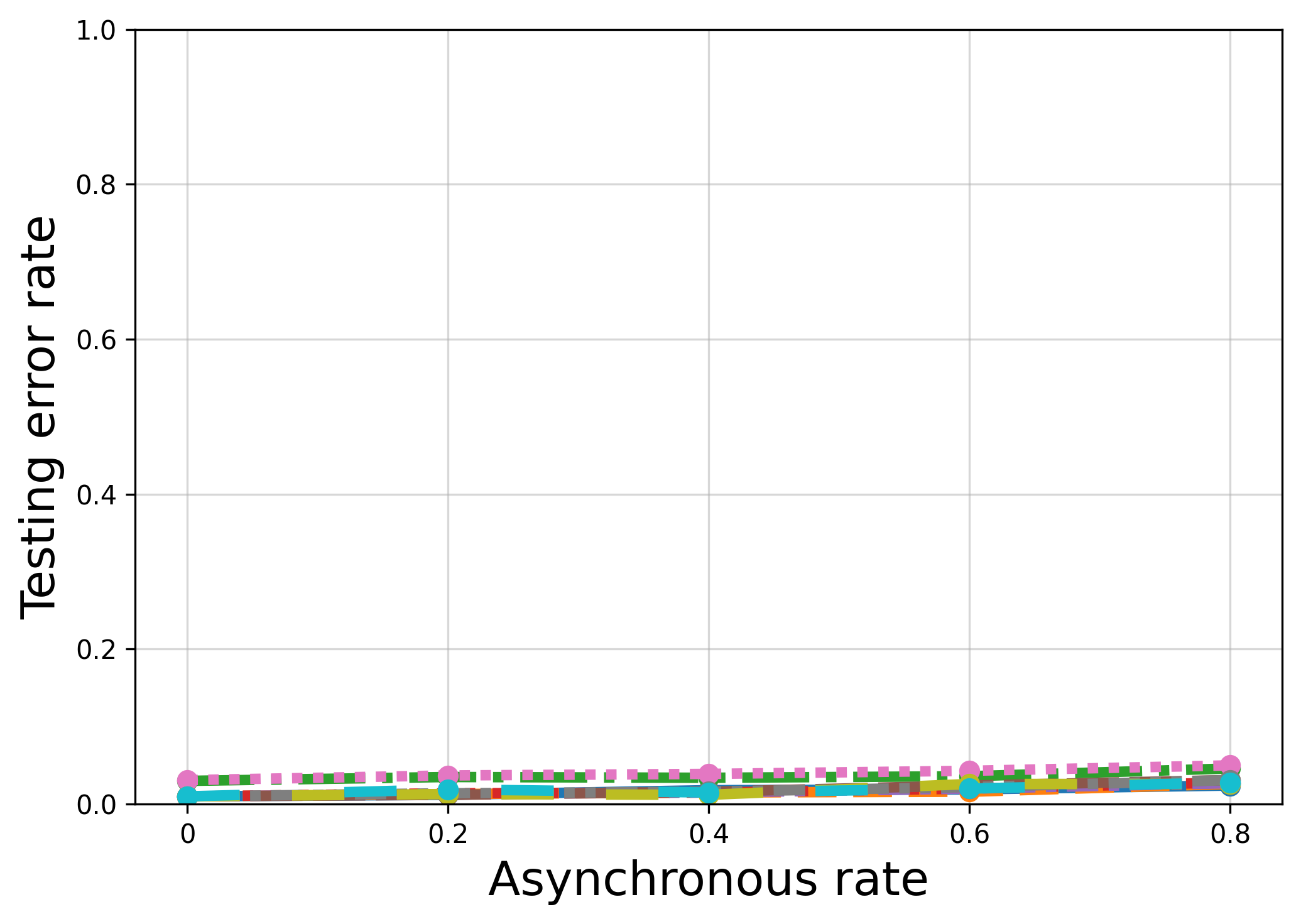}
		\small\captionof*{figure}{(a) No attack} 
	\end{minipage}
	\hfill 
	\begin{minipage}[b]{0.24\textwidth}
		\centering
		\includegraphics[width=\textwidth]{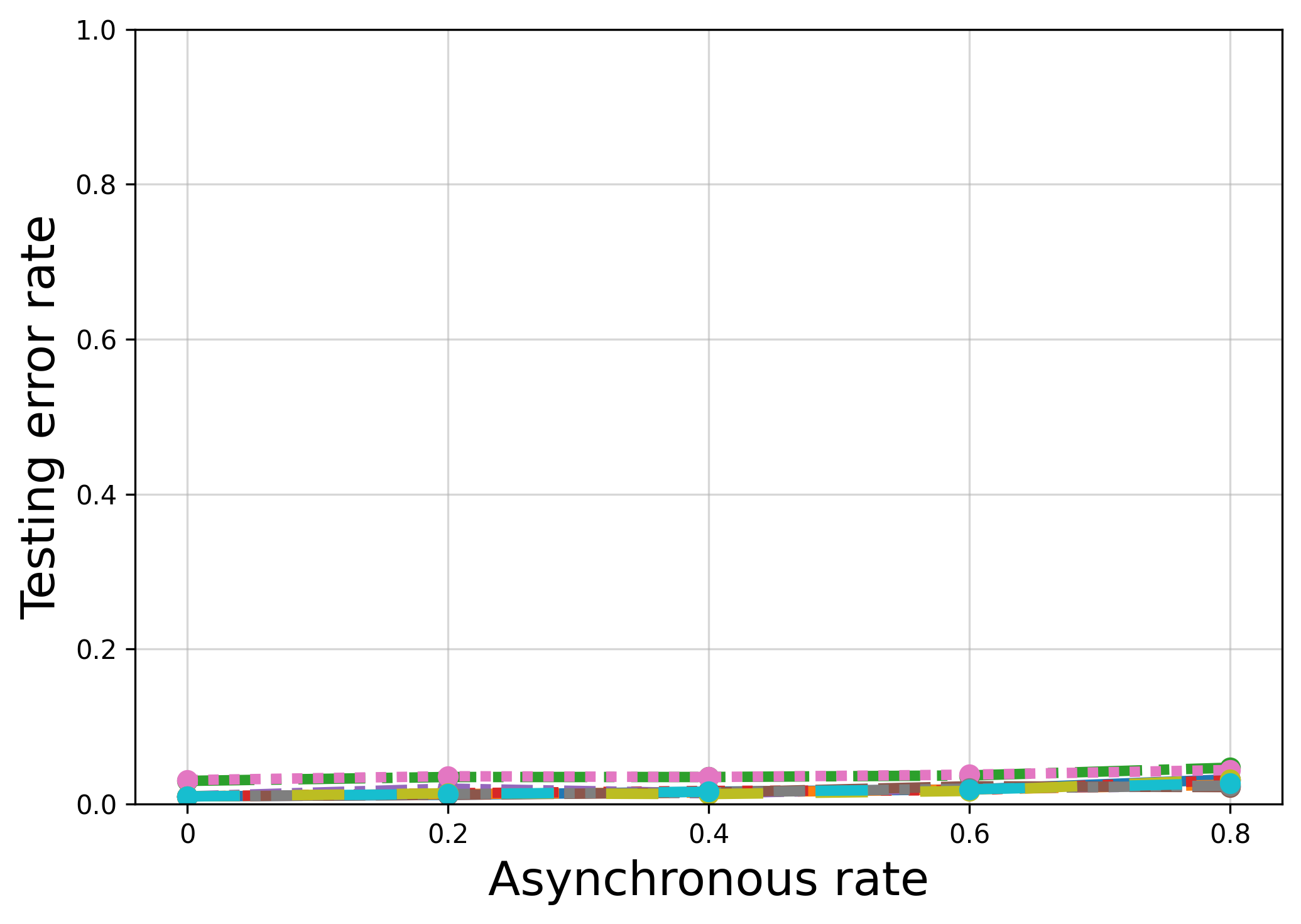}
		\small\captionof*{figure}{(b) LF attack}
	\end{minipage}
	\hfill
	\begin{minipage}[b]{0.24\textwidth}
		\centering
		\includegraphics[width=\textwidth]{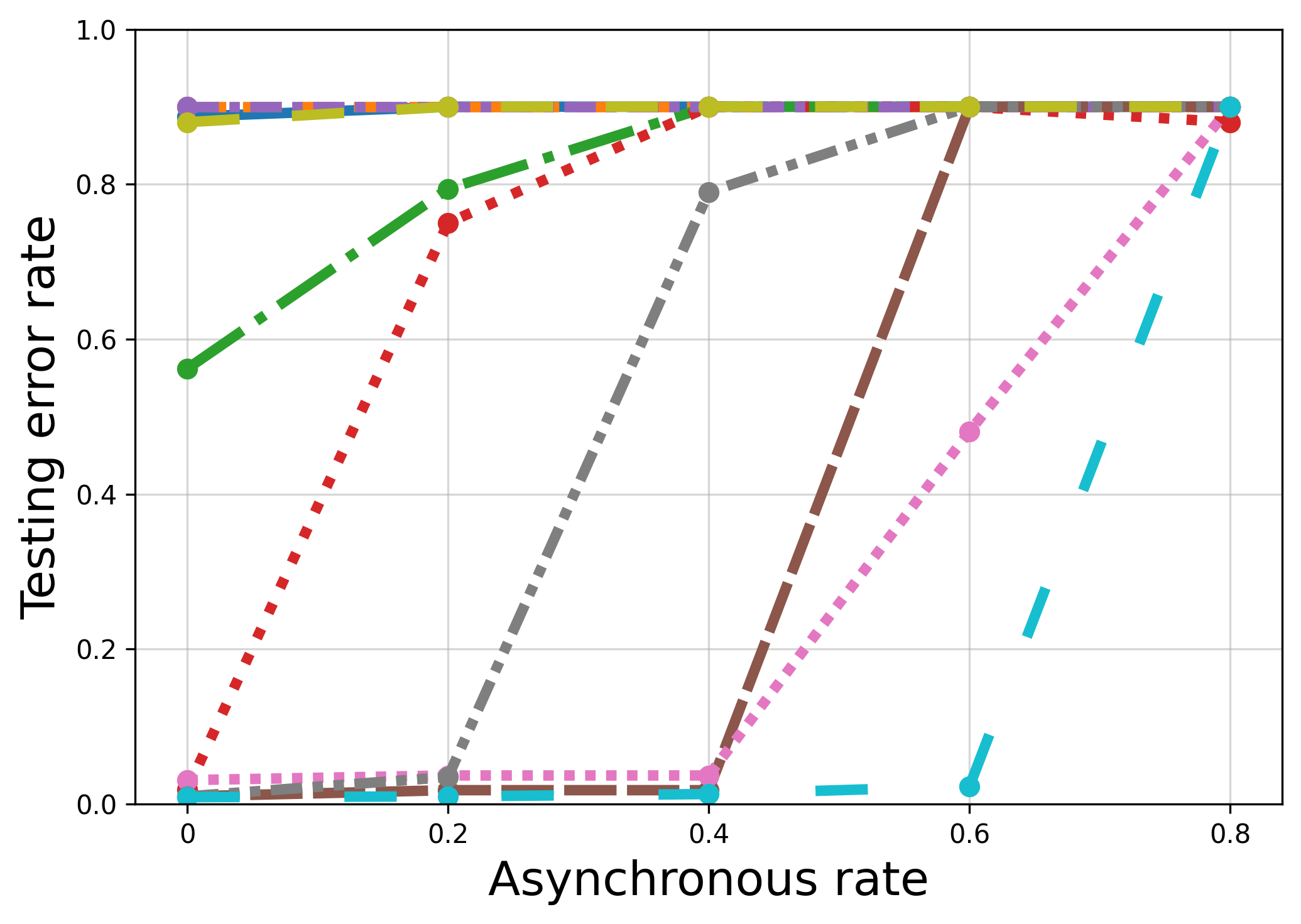}
		\small\captionof*{figure}{(c) Gaussian attack}
	\end{minipage}
	\hfill
	\begin{minipage}[b]{0.24\textwidth}
		\centering
		\includegraphics[width=\textwidth]{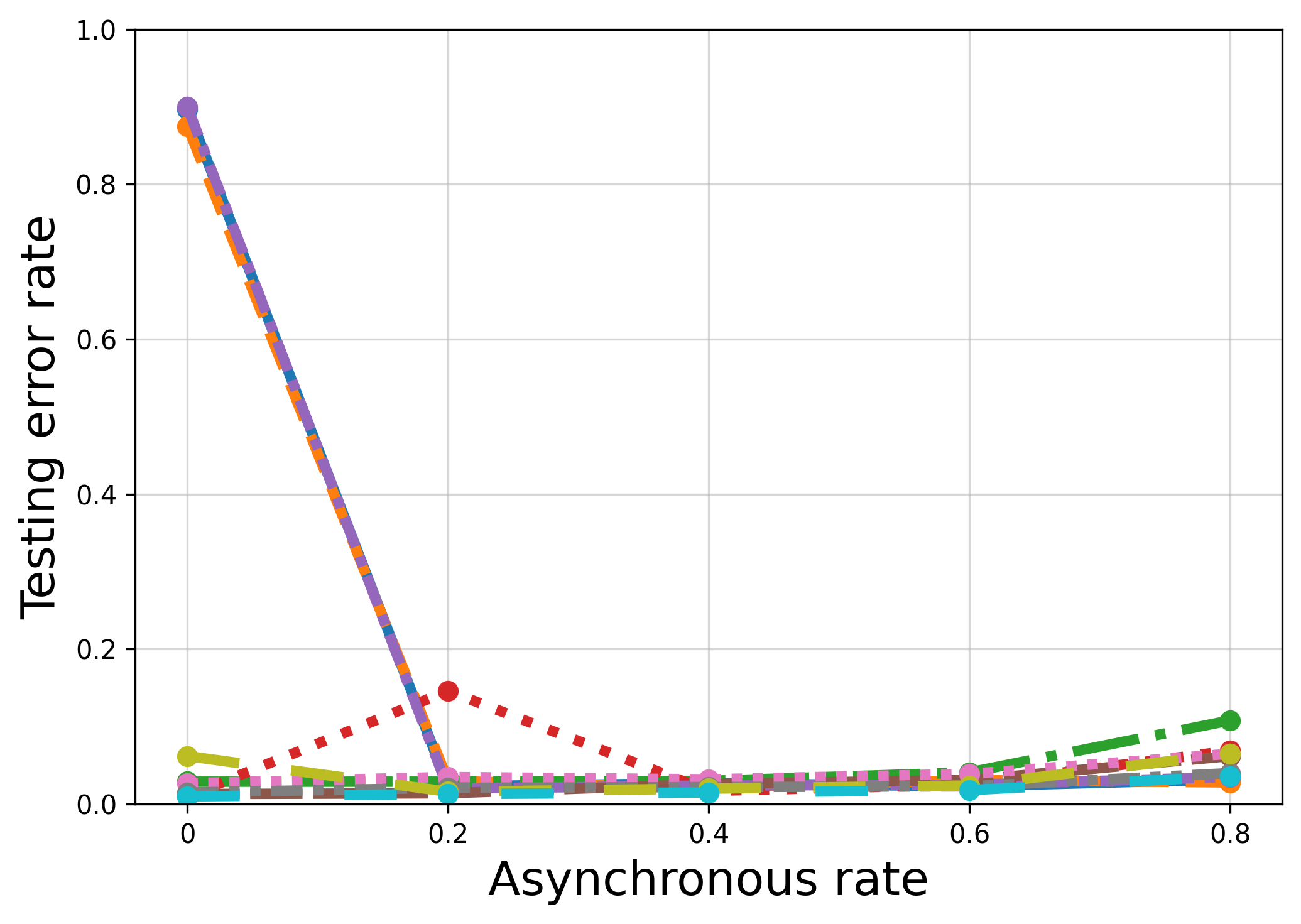}
		\small\captionof*{figure}{(d) Trim attack}
	\end{minipage}
	
	\vspace{0.15cm} 
	
	\begin{minipage}[b]{0.24\textwidth}
		\centering
		\includegraphics[width=\textwidth]{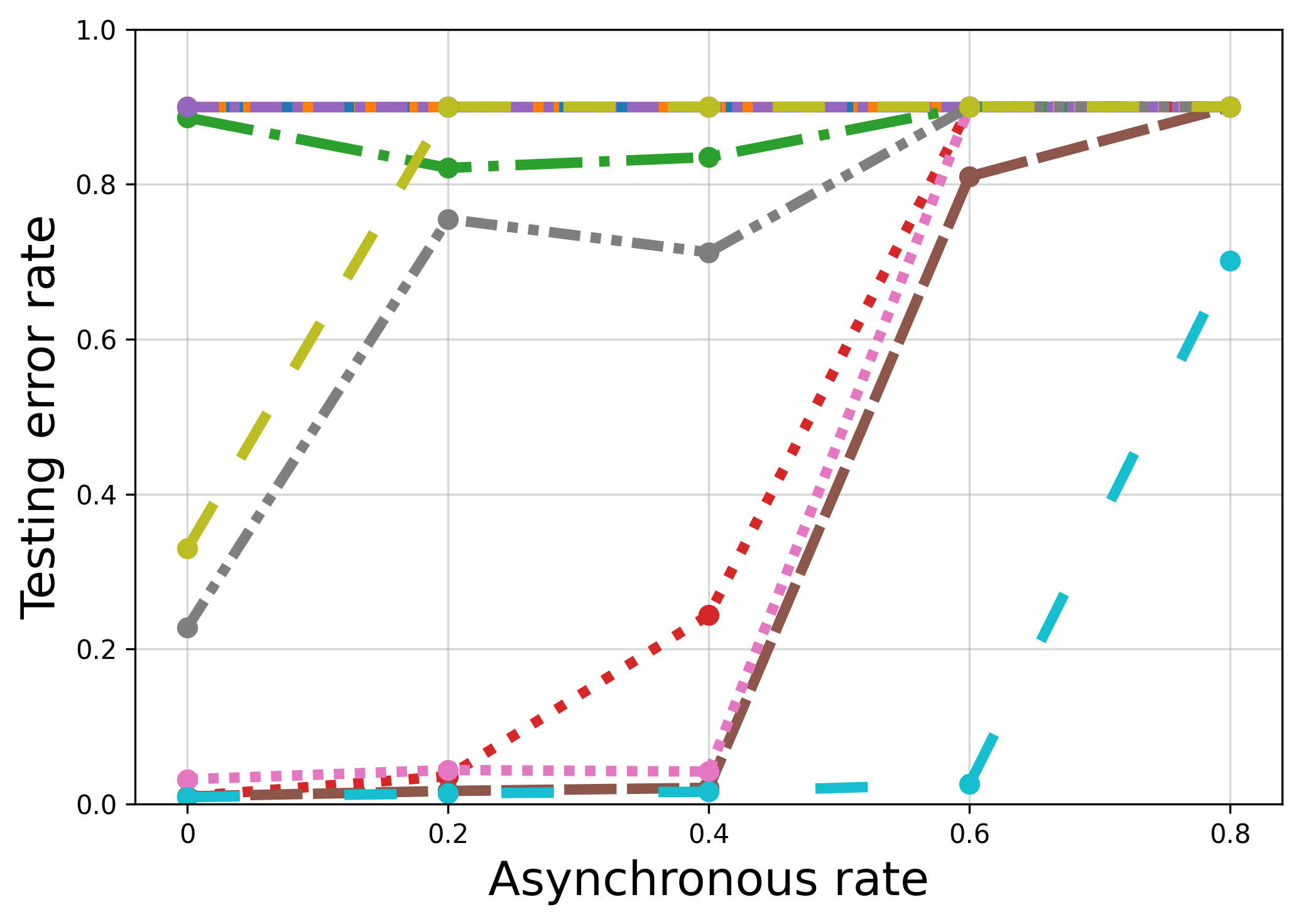}
		\small\captionof*{figure}{(e) Krum attack}
	\end{minipage}
	\hfill
	\begin{minipage}[b]{0.24\textwidth}
		\centering
		\includegraphics[width=\textwidth]{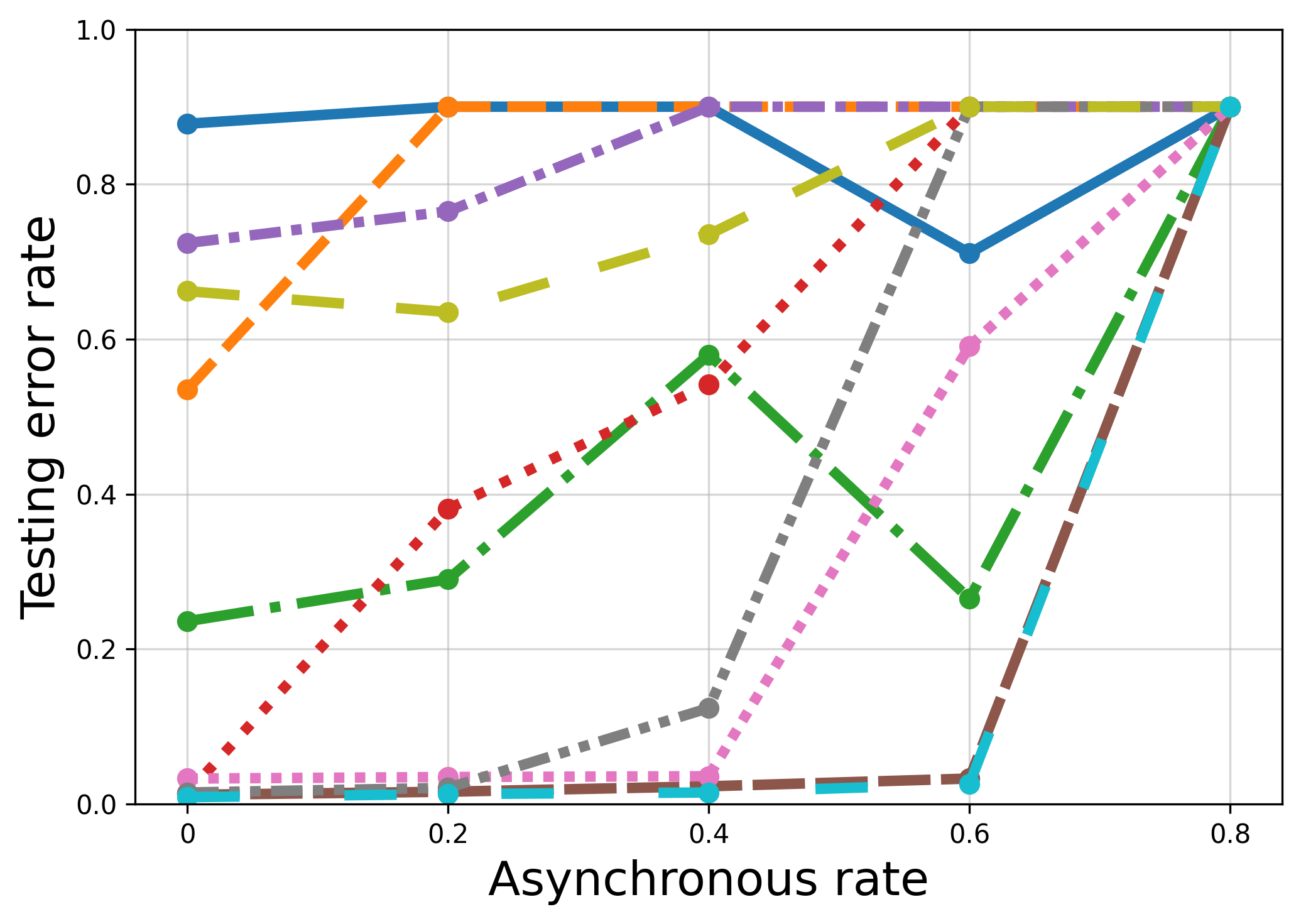}
		\small\captionof*{figure}{(f) Min-Max attack}
	\end{minipage}
	\hfill
	\begin{minipage}[b]{0.24\textwidth}
		\centering
		\includegraphics[width=\textwidth]{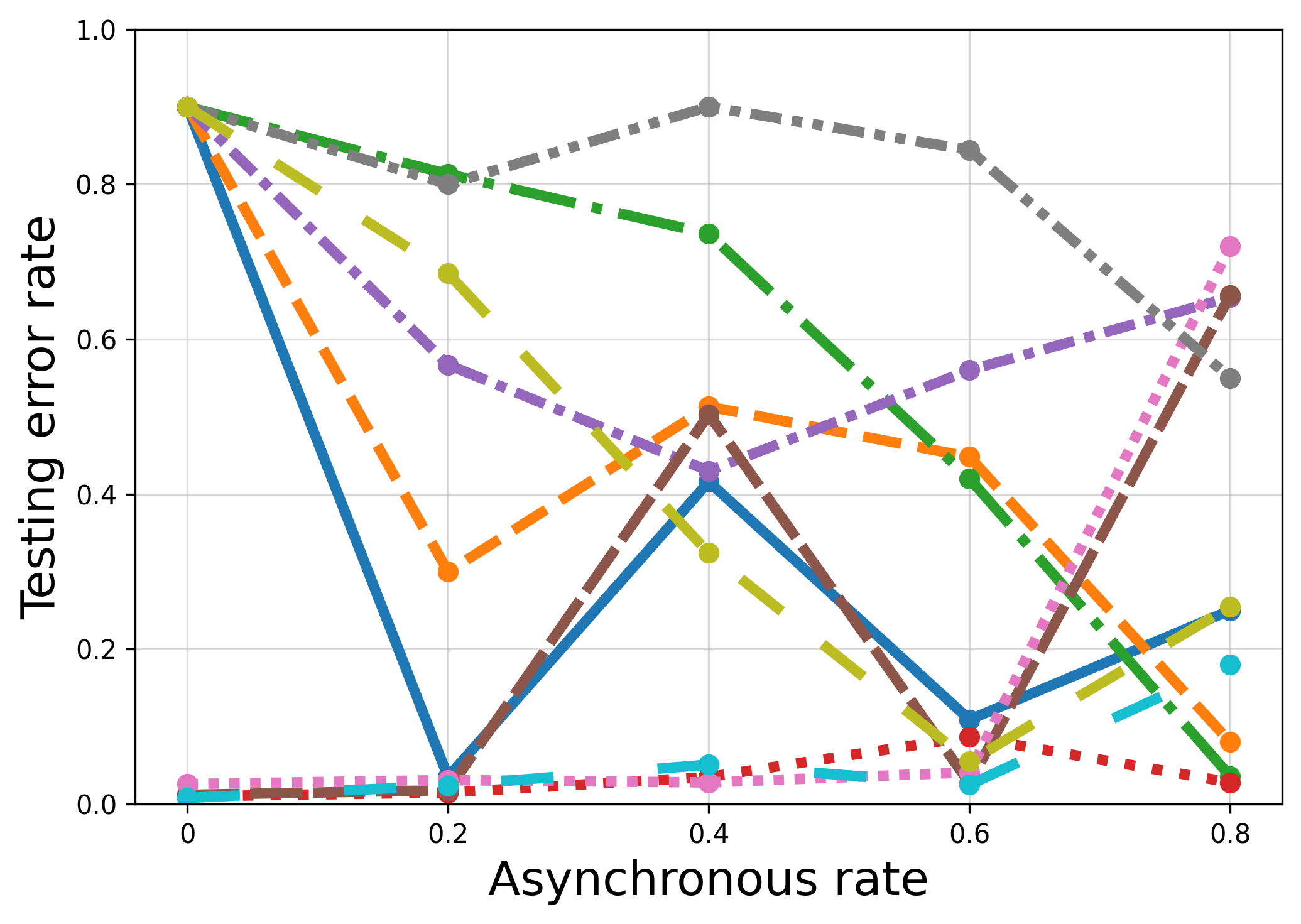}
		\small\captionof*{figure}{(g) Scaling attack}
	\end{minipage}
	\hfill
	\begin{minipage}[b]{0.24\textwidth}
		\centering
		\includegraphics[width=\textwidth]{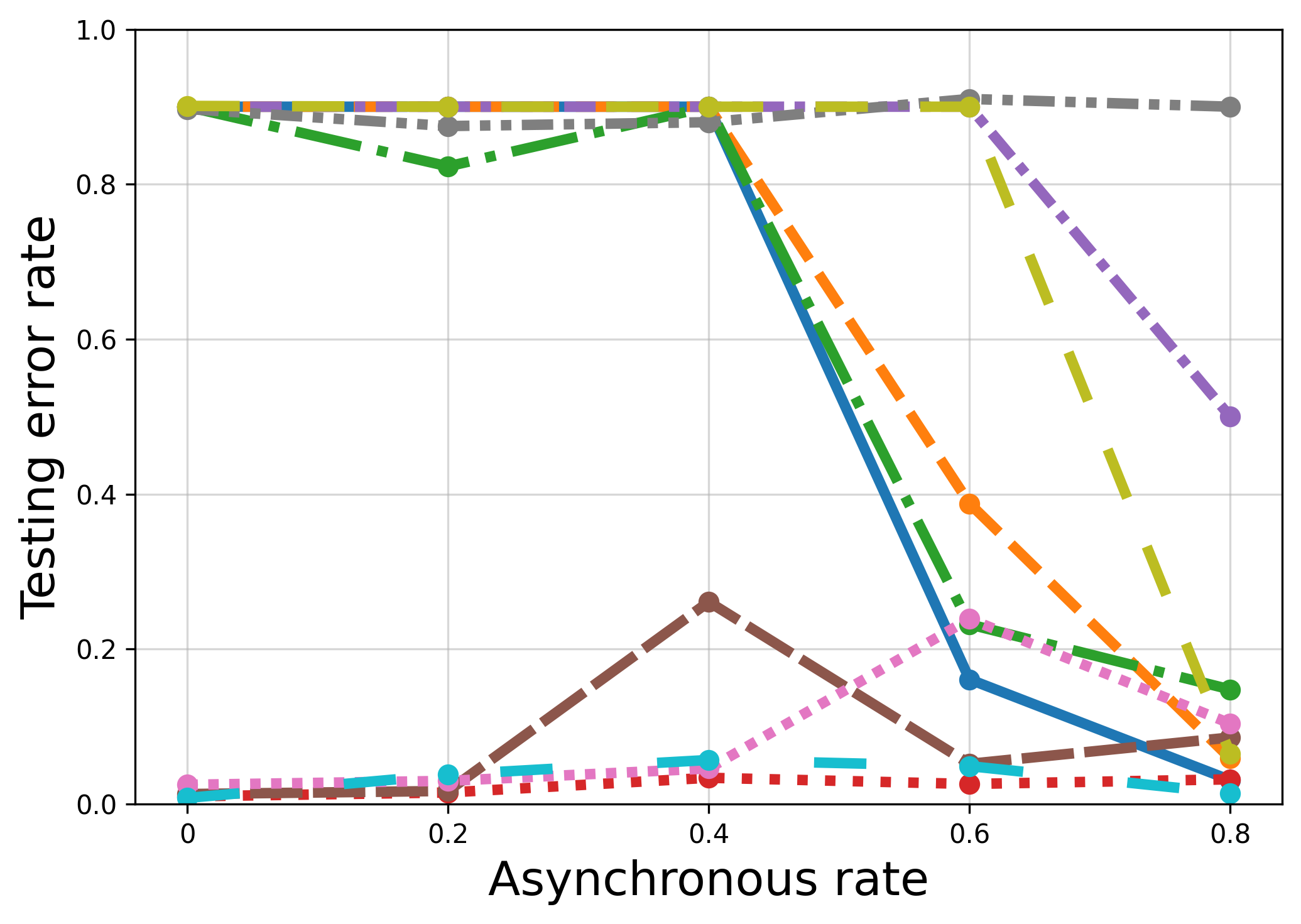}
		\small\captionof*{figure}{(h) Sybil attack}
	\end{minipage}
	
	\caption{Impact of asynchronous rate on MNIST}
	\label{fig:Impact_of_asynchronous_rate}
\end{figure*}

\textbf{Impact of the proportion of malicious clients}:
Fig. \ref{Malicious_num} illustrates the impact of poisoning attacks on the performance of various FL aggregation algorithms using the MNIST dataset, where the proportion of malicious clients among the total number of clients increases from 0\% to 50\%. The computational proportion of malicious clients is denoted as $\frac{m}{n}$, where $m$ represents the number of malicious clients and $n$ denotes the total number of clients. Results show that AdaBFL defends against poisoning attacks when $\frac{m}{n}\le 40\%$. When $ 40 \le \frac{m}{n}\le 50\%$, AdaBFL still resists LF, Gaussian, Trim, Min-Max, and Scaling attacks. Compared to other FL aggregation algorithms, AdaBFL exhibits stronger tolerance for malicious client proportions. For instance, in the Scaling attack, other FL aggregation algorithms tolerate only 20\% malicious clients, whereas AdaBFL tolerates up to 40\%. The above analysis demonstrates that AdaBFL surpasses existing Byzantine-fault-tolerant robust aggregation algorithms in its tolerance for malicious clients.

\textbf{Impact of degree of non-iid:}
Unlike the traditional machine learning assumption of independent and identically distributed (IID) data, FL often deals with non-iid data. When training data exhibits significant differences, attackers can exploit these disparities to create seemingly benign yet malicious models, thereby compromising the global model. Fig. \ref{fig:experimental_results_NIID} explores the effectiveness of various aggregation algorithms under poisoning attacks as the degree of non-independence increases from 10\% to 70\%. Results demonstrate that despite substantial client-side data divergence, our proposed AdaBFL effectively mitigates the impact of data variability and robustly resists various poisoning attacks.

\textbf{Impact of asynchronous rate}: Fig. \ref{fig:Impact_of_asynchronous_rate}  illustrates the performance metrics of AdaBFL and the comparison scheme under varying asynchrony rates $(0-0.8)$ between clients and the central server. Higher asynchrony rates indicate more unstable updates. Results demonstrate that across various attack scenarios, AdaBFL exhibits the strongest resistance to interference and correlation attacks compared to other aggregation algorithms.

\begin{figure*}[htbp]
	\centering
	
	\includegraphics[width=1\textwidth]{Images/Experimental_Results/legend_only.png}
	
	\vspace{0.2cm} 
	
	\begin{minipage}[b]{0.24\textwidth}
		\centering
		\includegraphics[width=\textwidth]{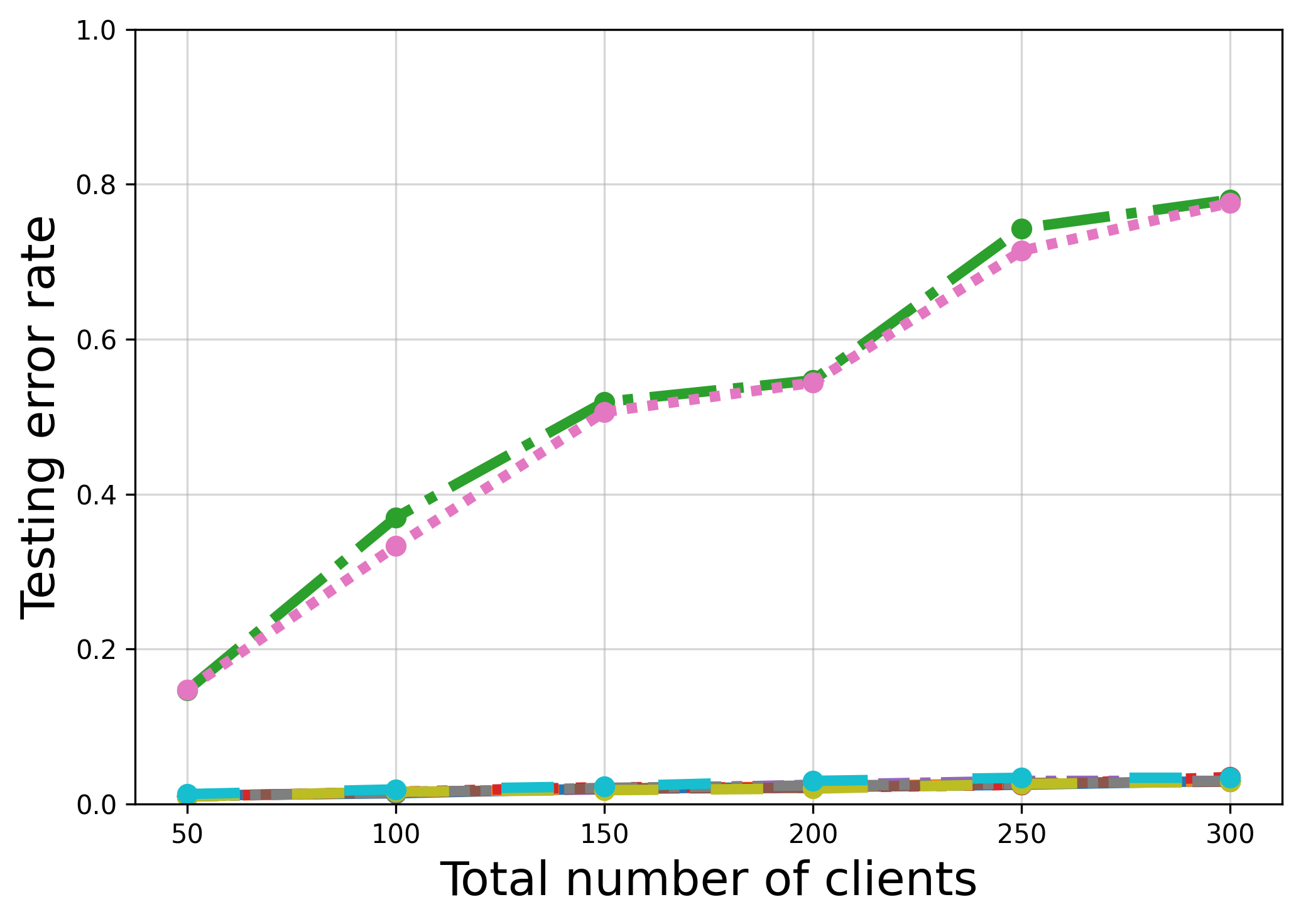}
		\small\captionof*{figure}{(a) No attack} 
	\end{minipage}
	\hfill 
	\begin{minipage}[b]{0.24\textwidth}
		\centering
		\includegraphics[width=\textwidth]{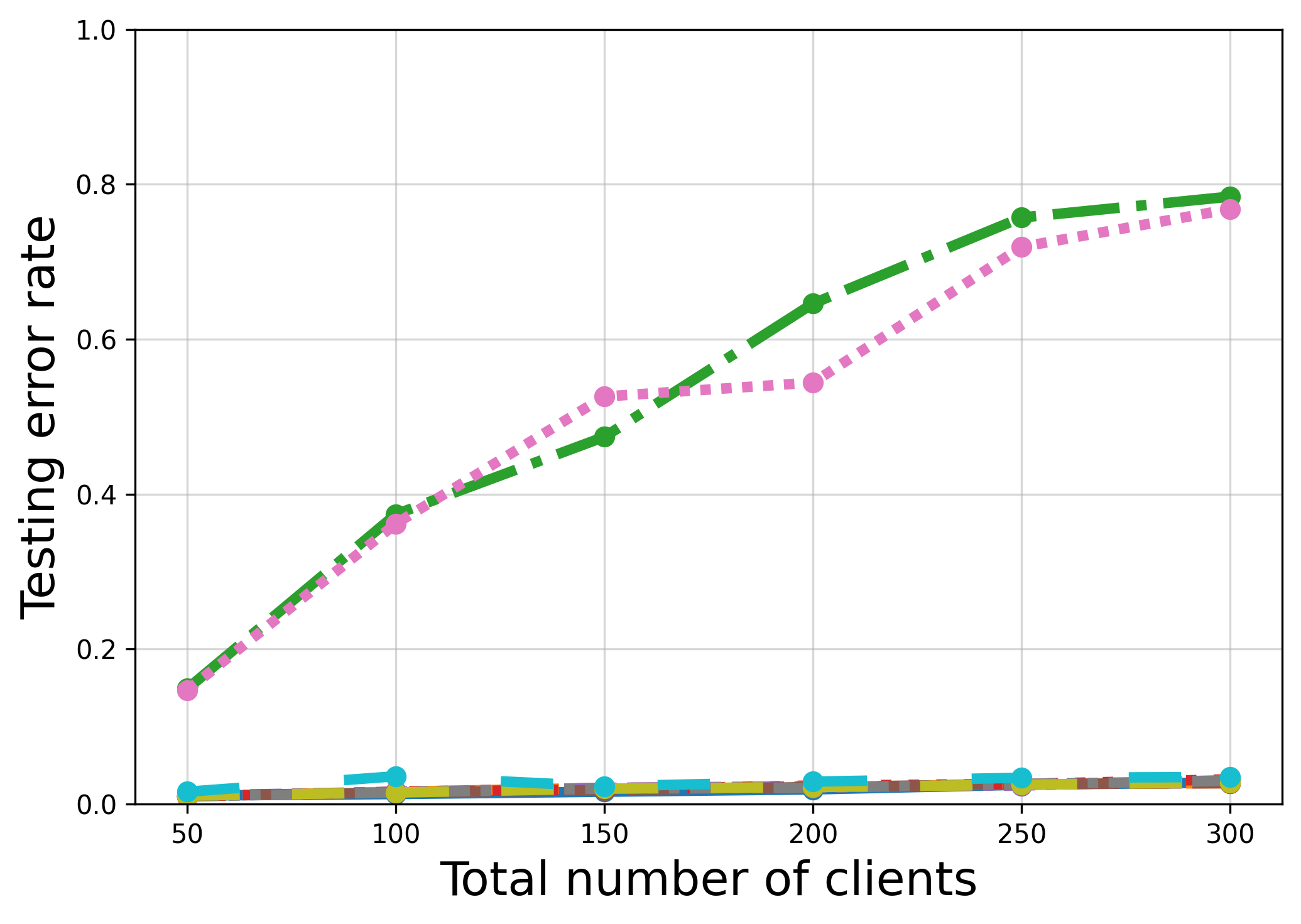}
		\small\captionof*{figure}{(b) LF attack}
	\end{minipage}
	\hfill
	\begin{minipage}[b]{0.24\textwidth}
		\centering
		\includegraphics[width=\textwidth]{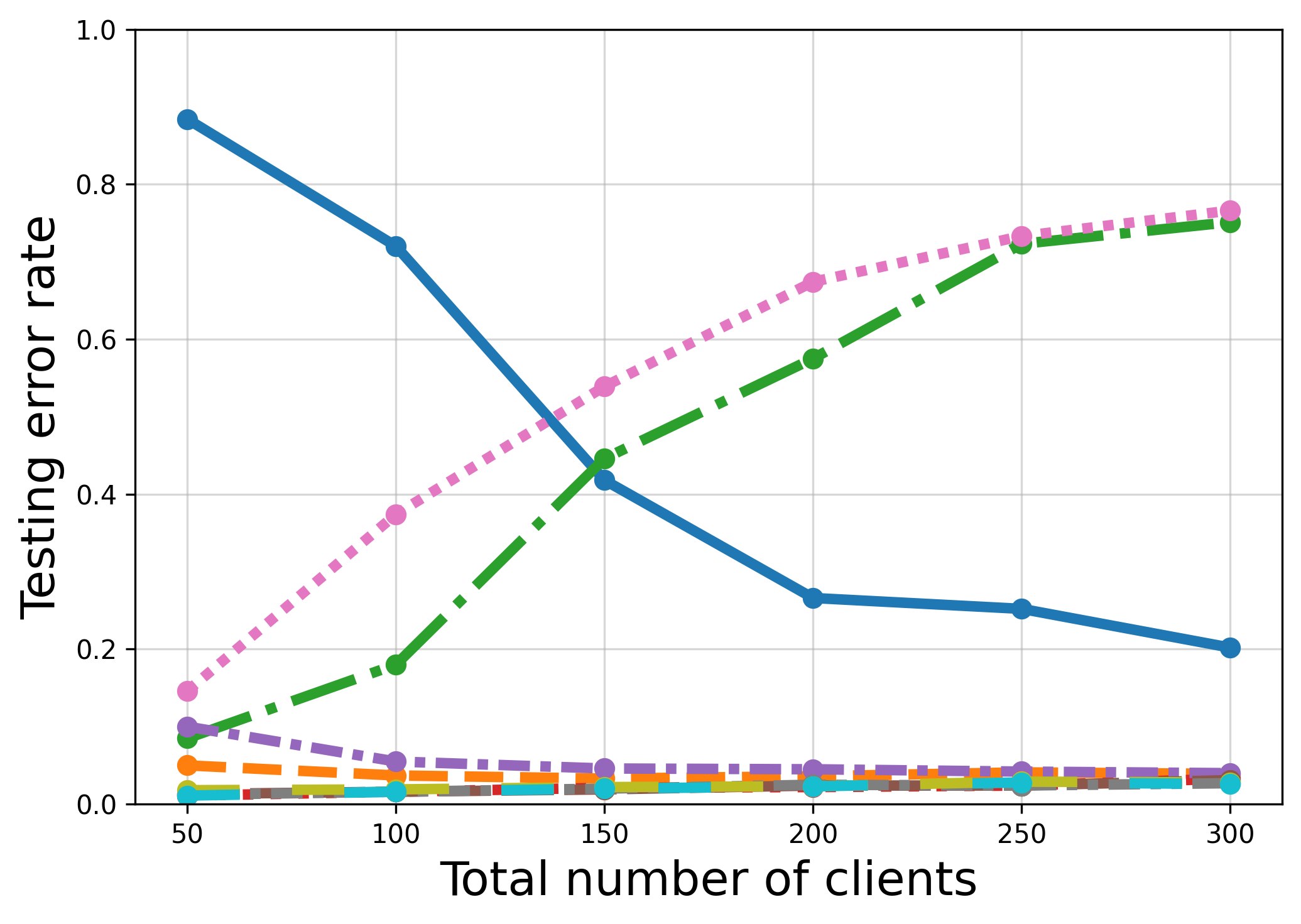}
		\small\captionof*{figure}{(c) Gaussian attack}
	\end{minipage}
	\hfill
	\begin{minipage}[b]{0.24\textwidth}
		\centering
		\includegraphics[width=\textwidth]{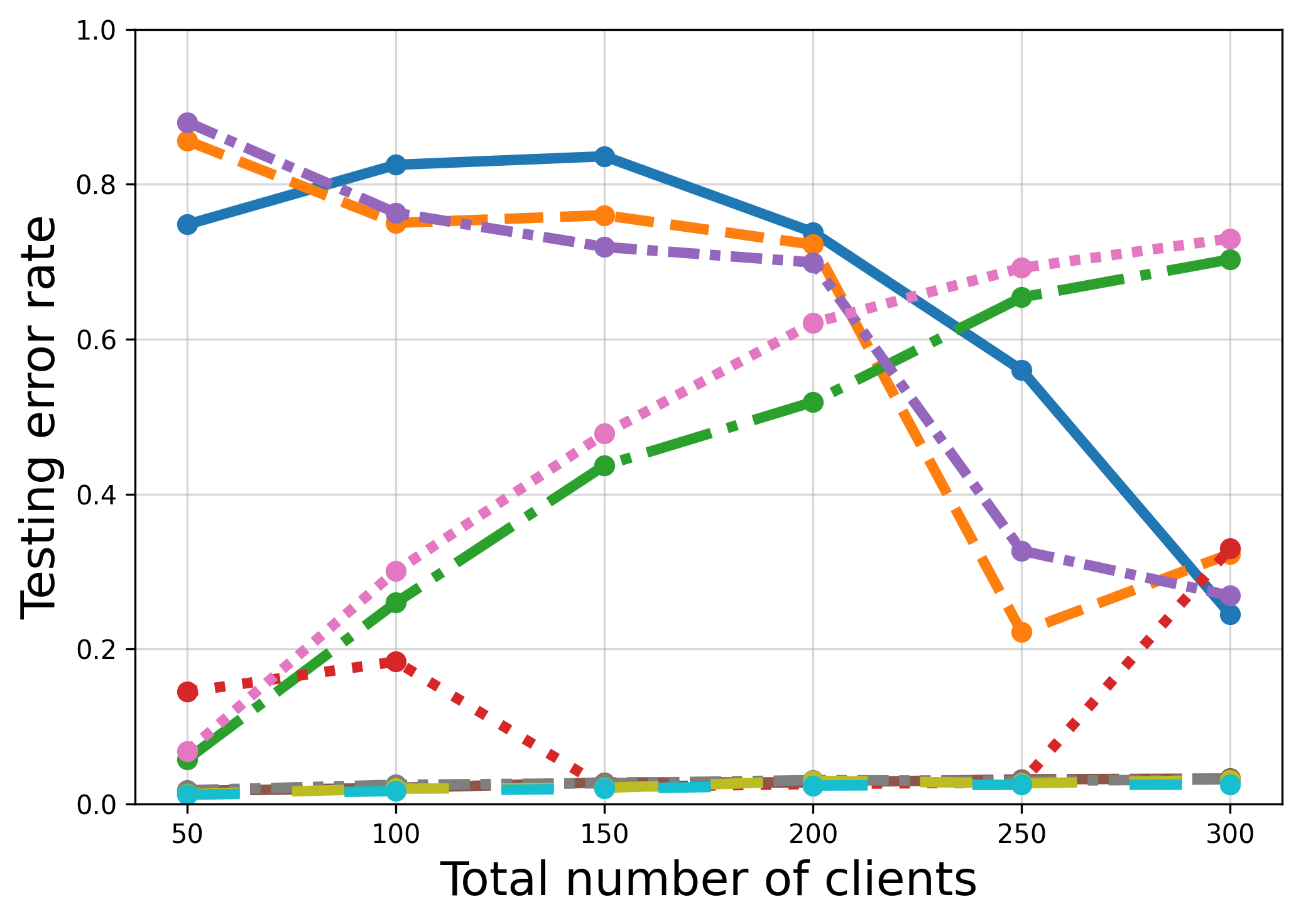}
		\small\captionof*{figure}{(d) Trim attack}
	\end{minipage}
	
	\vspace{0.15cm} 
	
	\begin{minipage}[b]{0.24\textwidth}
		\centering
		\includegraphics[width=\textwidth]{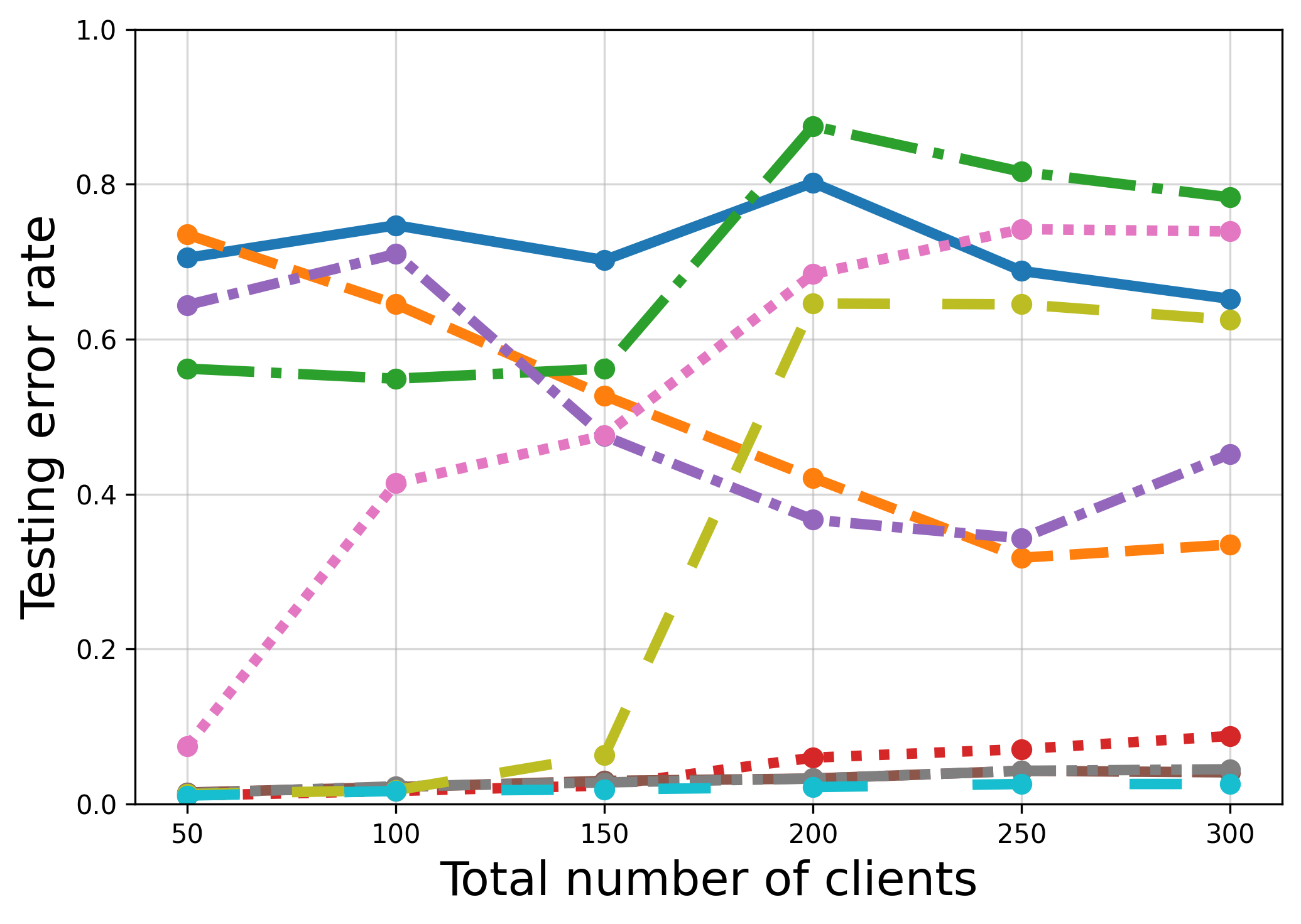}
		\small\captionof*{figure}{(e) Krum attack}
	\end{minipage}
	\hfill
	\begin{minipage}[b]{0.24\textwidth}
		\centering
		\includegraphics[width=\textwidth]{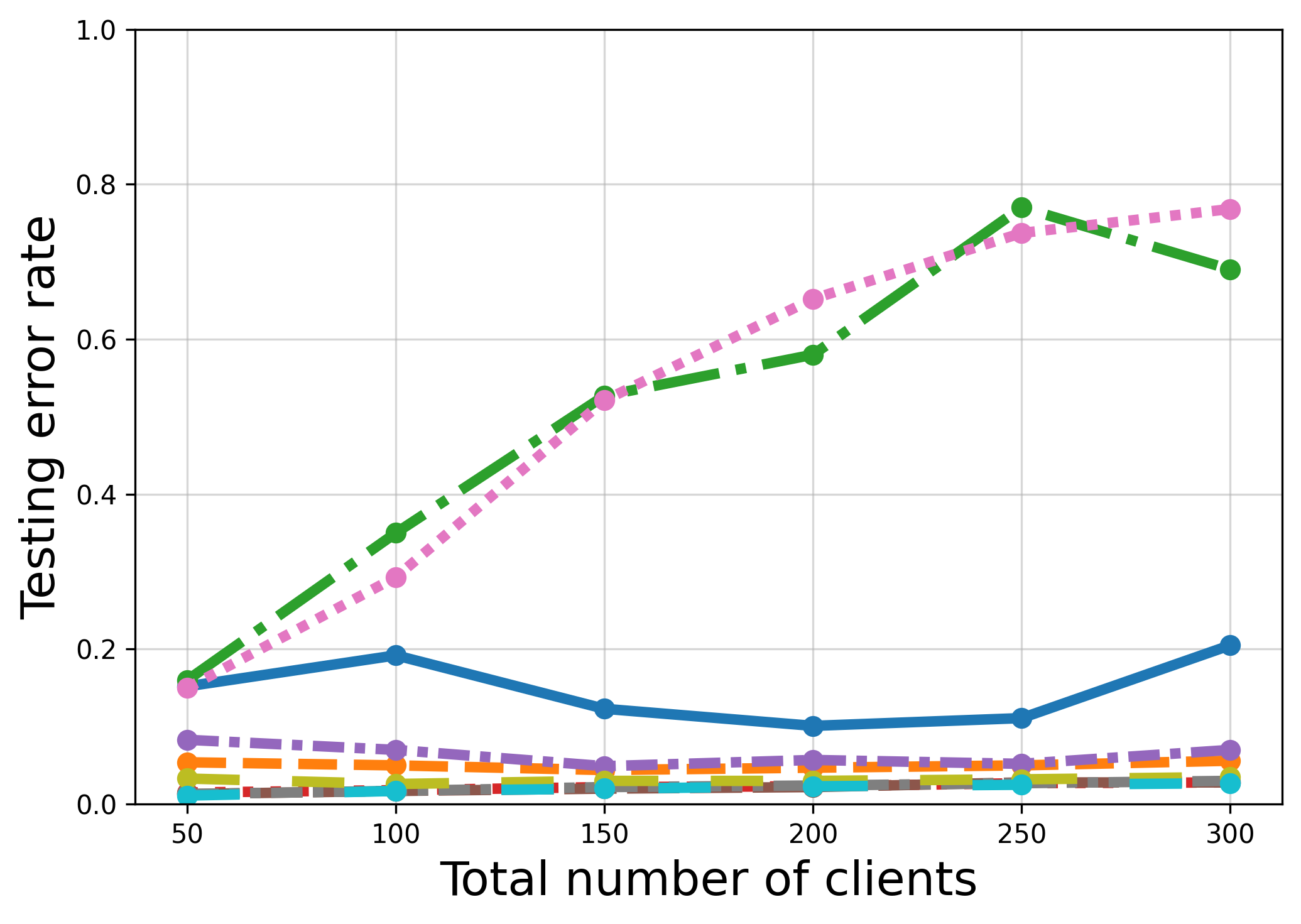}
		\small\captionof*{figure}{(f) Min-Max attack}
	\end{minipage}
	\hfill
	\begin{minipage}[b]{0.24\textwidth}
		\centering
		\includegraphics[width=\textwidth]{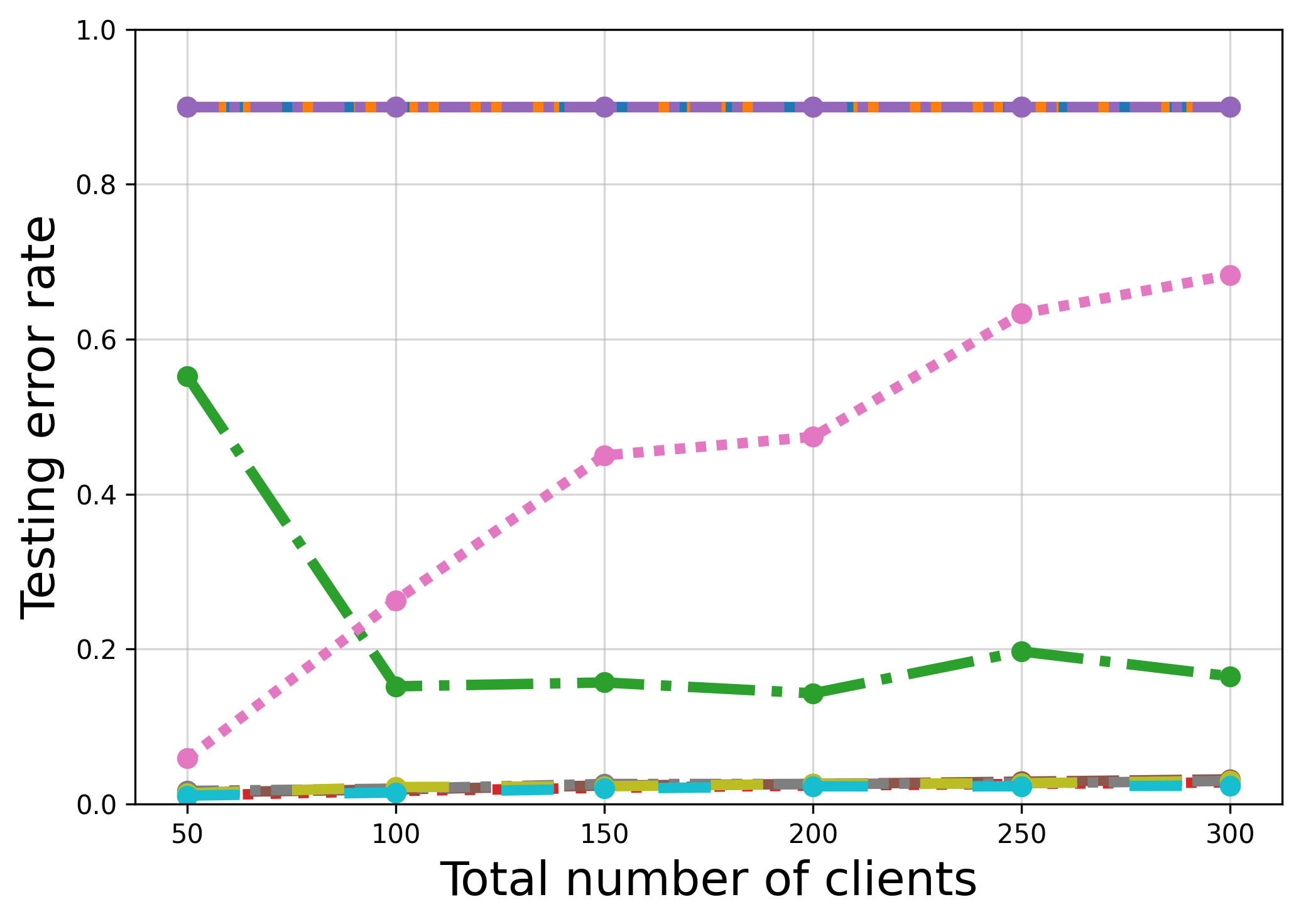}
		\small\captionof*{figure}{(g) Scaling attack}
	\end{minipage}
	\hfill
	\begin{minipage}[b]{0.24\textwidth}
		\centering
		\includegraphics[width=\textwidth]{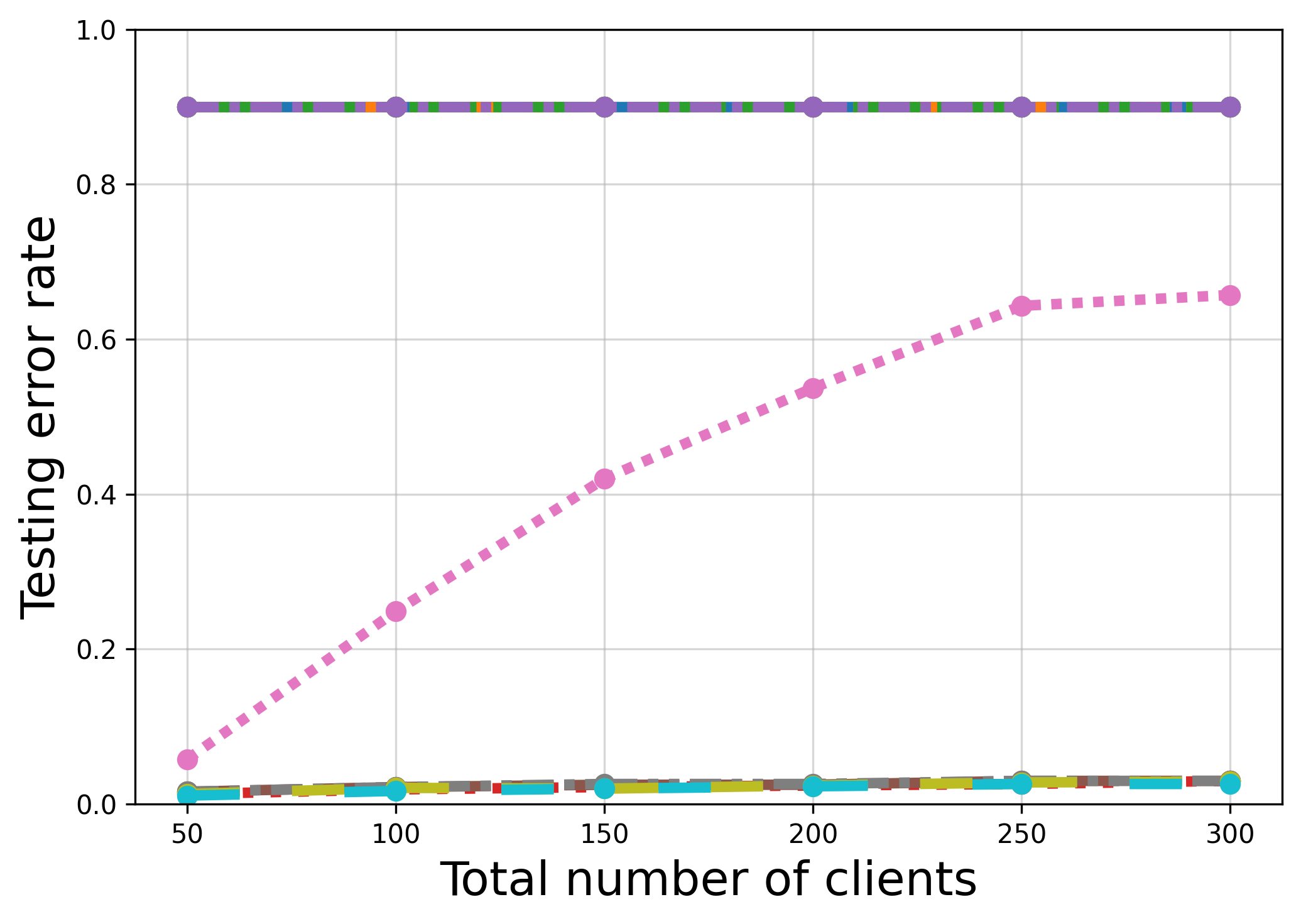}
		\small\captionof*{figure}{(h) Sybil attack}
	\end{minipage}
	
	\caption{Impact of total number of clients on MNIST}
	\label{fig:experimental_results_AII_number}
\end{figure*}

\begin{figure*}[htbp]
	\centering
	
	\includegraphics[width=0.5\textwidth]{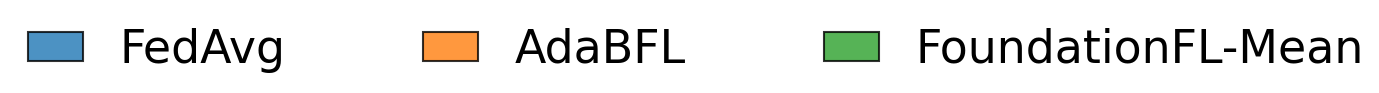}
	
	\vspace{0.1cm} 
	
	\begin{minipage}[b]{0.24\textwidth}
		\centering
		\includegraphics[width=\textwidth]{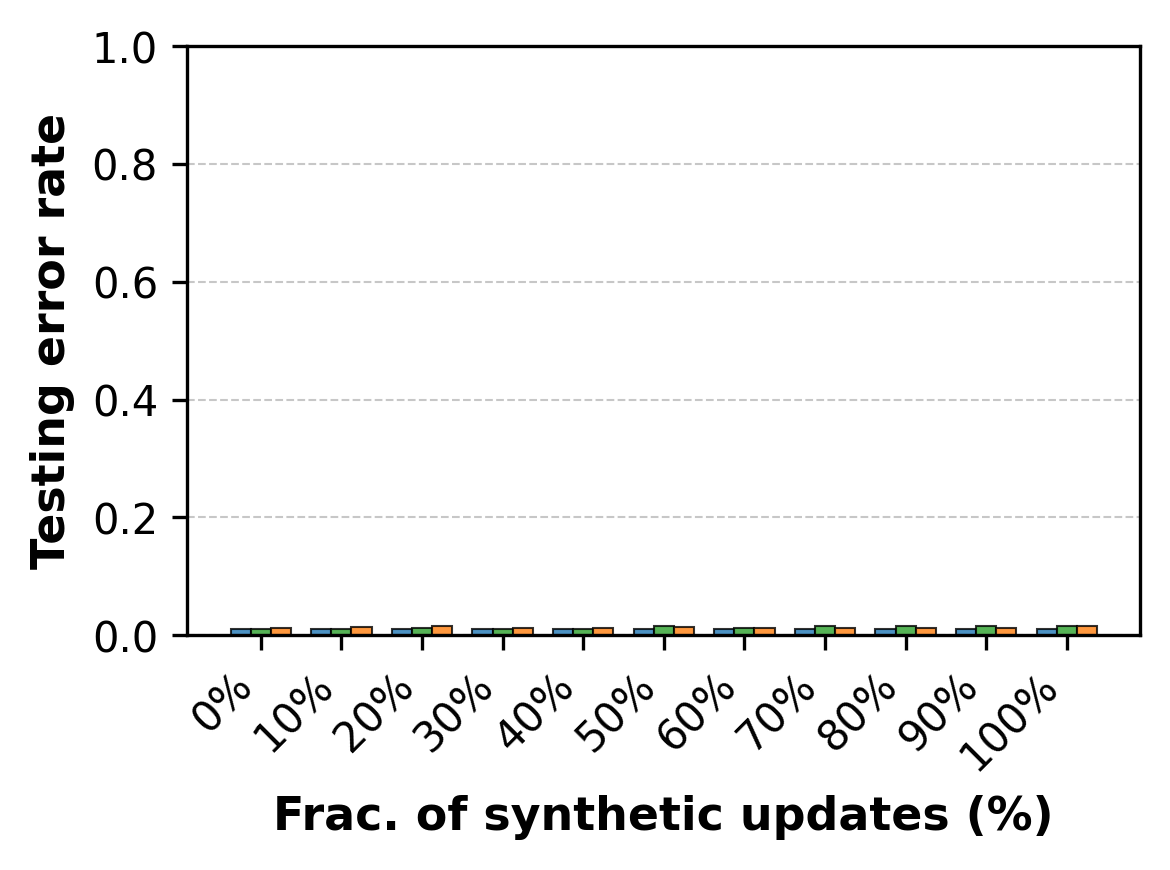}
		\small\captionof*{figure}{(a) No attack} 
	\end{minipage}
	\hfill 
	\begin{minipage}[b]{0.24\textwidth}
		\centering
		\includegraphics[width=\textwidth]{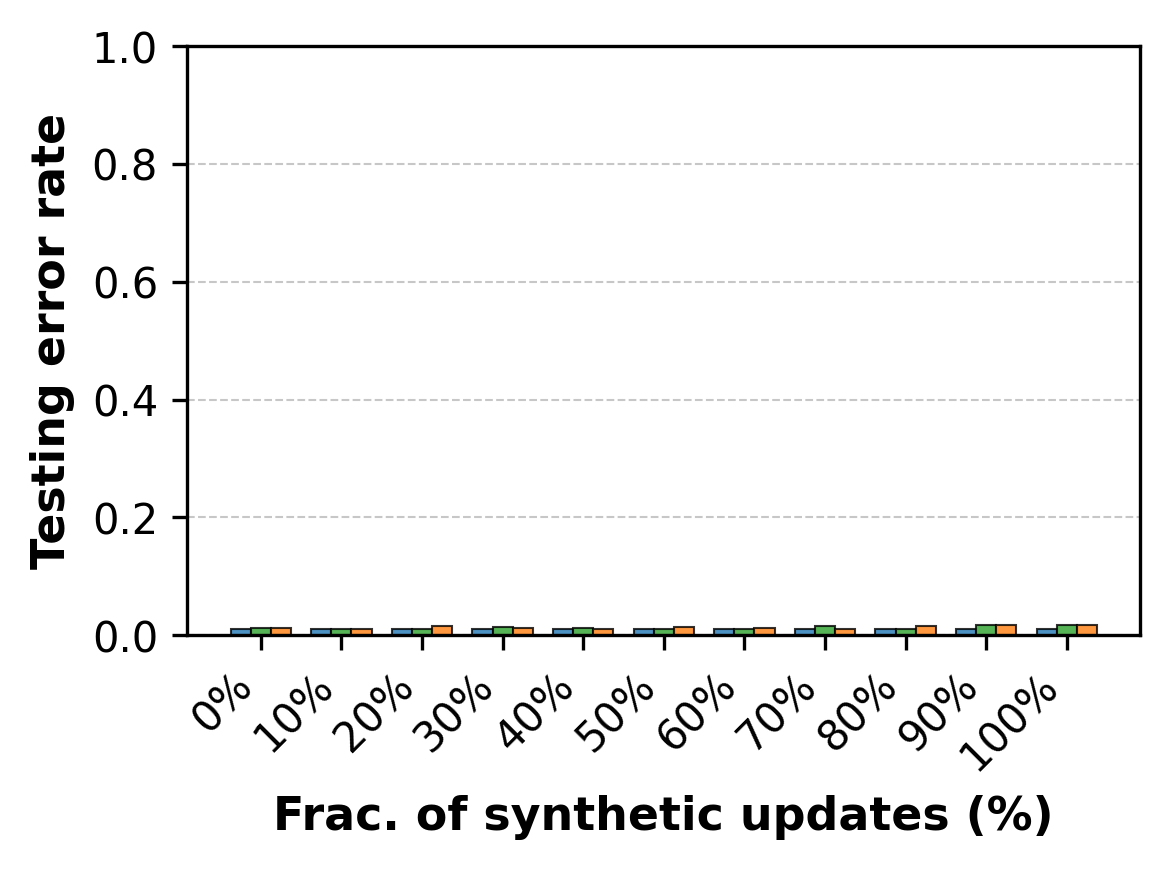}
		\small\captionof*{figure}{(b) LF attack}
	\end{minipage}
	\hfill
	\begin{minipage}[b]{0.24\textwidth}
		\centering
		\includegraphics[width=\textwidth]{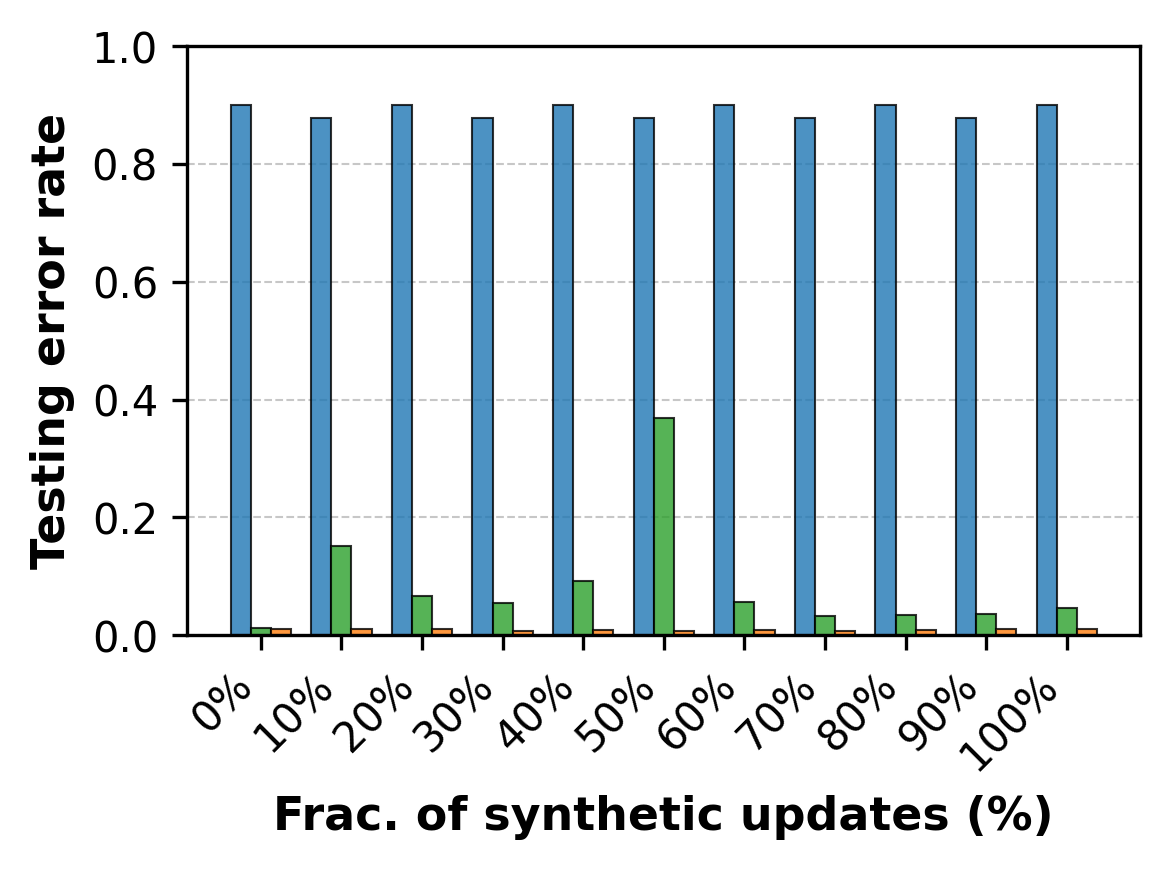}
		\small\captionof*{figure}{(c) Gaussian attack}
	\end{minipage}
	\hfill
	\begin{minipage}[b]{0.24\textwidth}
		\centering
		\includegraphics[width=\textwidth]{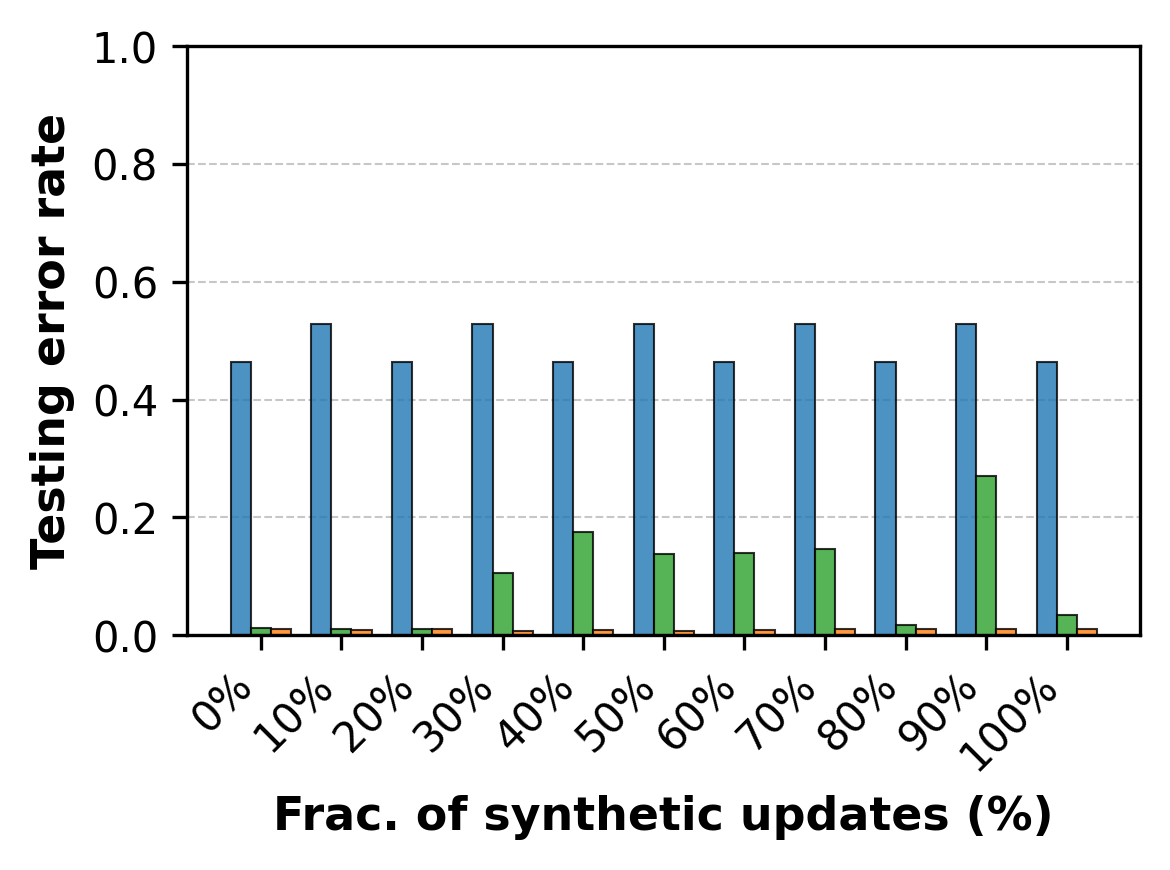}
		\small\captionof*{figure}{(d) Trim attack}
	\end{minipage}
	
	\vspace{0.15cm} 
	
	\begin{minipage}[b]{0.24\textwidth}
		\centering
		\includegraphics[width=\textwidth]{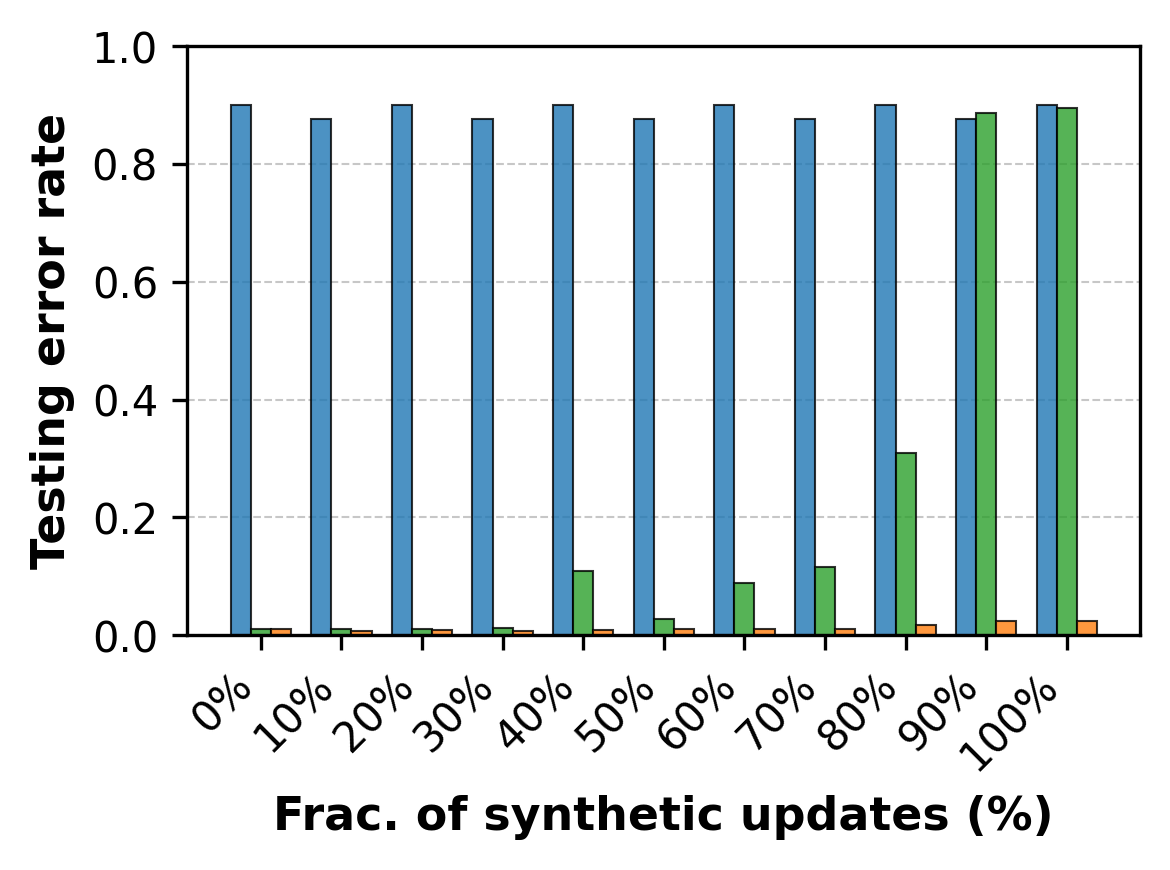}
		\small\captionof*{figure}{(e) Krum attack}
	\end{minipage}
	\hfill
	\begin{minipage}[b]{0.24\textwidth}
		\centering
		\includegraphics[width=\textwidth]{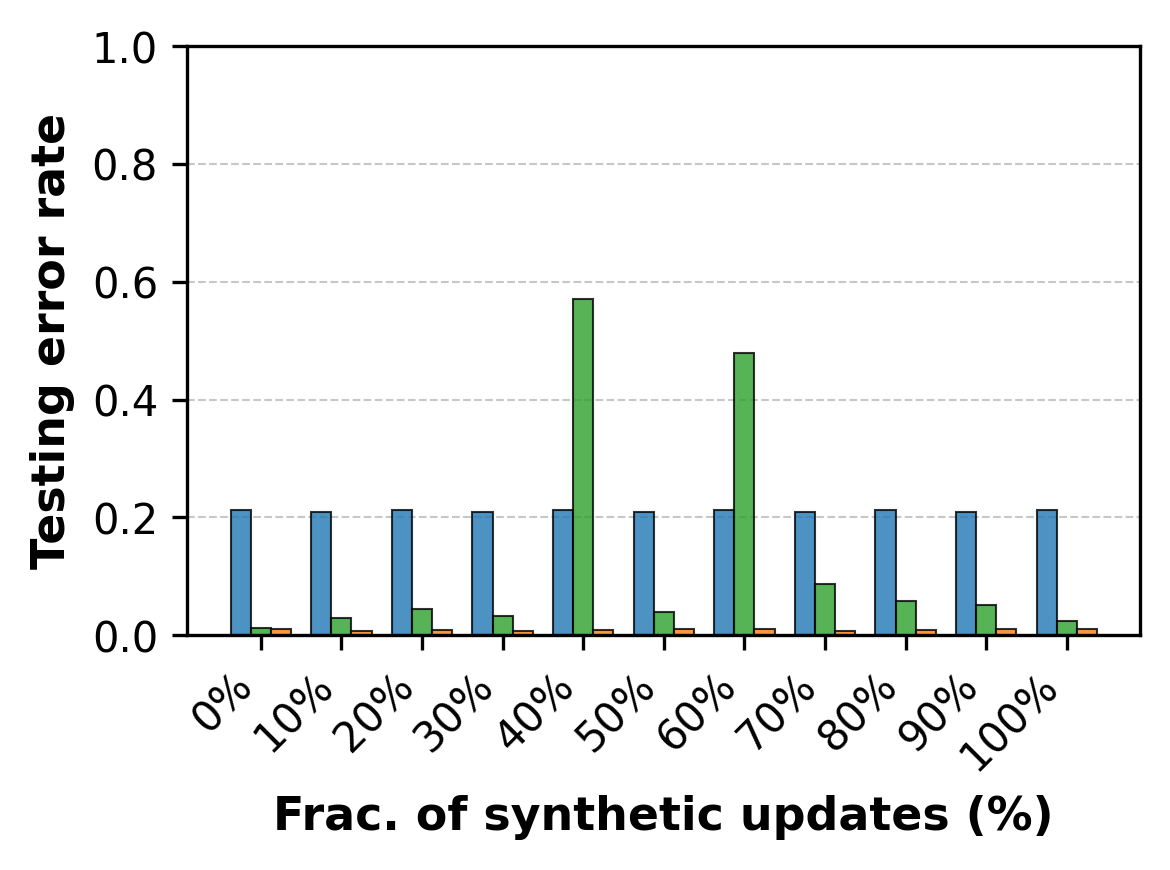}
		\small\captionof*{figure}{(f) Min-Max attack}
	\end{minipage}
	\hfill
	\begin{minipage}[b]{0.24\textwidth}
		\centering
		\includegraphics[width=\textwidth]{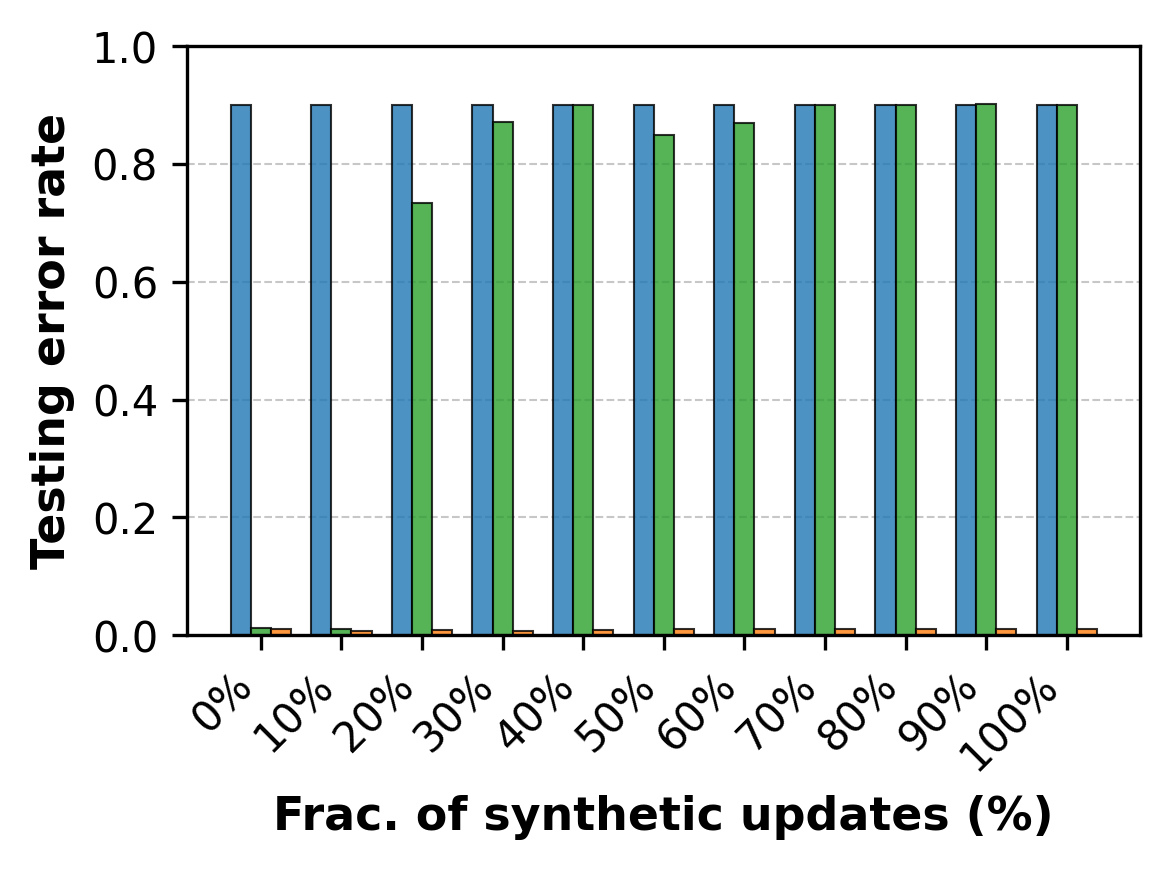}
		\small\captionof*{figure}{(g) Scaling attack}
	\end{minipage}
	\hfill
	\begin{minipage}[b]{0.24\textwidth}
		\centering
		\includegraphics[width=\textwidth]{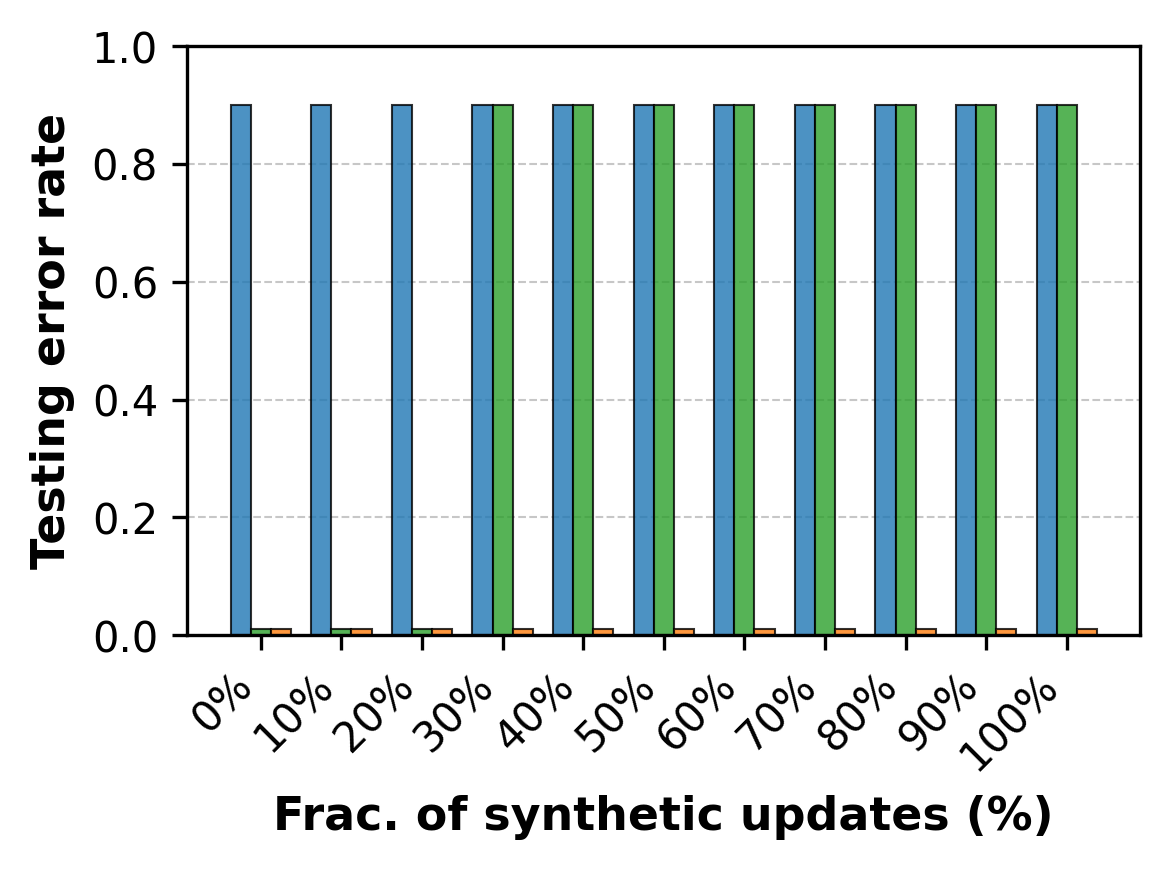}
		\small\captionof*{figure}{(h) Sybil attack}
	\end{minipage}
	
	\caption{Impact of fraction of synthetic updates on MNIST}
	\label{fig:experimental_results_synthetic_updates}
\end{figure*}

\begin{table*}[htbp]
	\centering
	\caption{Modify parameters $\rho_1$, $\rho_2$, $\beta_{\_\min}$, and $\beta_{\_\max}$ to validate AdaBFL-3's aggregated results under various attacks on MNIST.}
	\label{tab:rho}
	
	\centering
	\textbf{(a) ($\rho_1,\rho_2$)}
	
	\vspace{0.2cm}
	\begin{tabular}{lccccccccc}
		\toprule
		($\rho_1,\rho_2$)  & No  & LF & Gaussian & Trim  & Krum  & Min-Max & Scaling  & Sybil  \\
		\midrule
		
		(0.005,0.8) & 0.013 & 0.011 & 0.011 & 0.011 & 0.012 & 0.011 & 0.010 & 0.011 \\
		
		(0.001,0.6)  &   0.017 & 0.015 &	0.010 & 	0.010 & 0.010 &	0.010 & 0.010 & 0.010 \\
		
		(0.05,0.6)  &  0.015 & 	0.018 &	0.010 & 	0.011 & 0.010 & 0.011 & 0.010 & 0.010 \\
		
		(0.01,0.7)  &  0.012 & 	0.011 & 0.009 & 	0.008 & 0.008 & 0.008 & 0.008 & 0.008 \\

		\bottomrule
	\end{tabular}	
	\vspace{0.2cm}
	
	\centering
	\textbf{(b) ( $\beta_{\_\min},\beta_{\_\max}$)}
	
	\vspace{0.2cm}
	\begin{tabular}{lccccccccc}
		\toprule
		($\beta_{\_\min},\beta_{\_\max}$)  & No  & LF & Gaussian  & Trim & Krum  & Min-Max  & Scaling  & Sybil \\
		\midrule
		
		(0.1,0.6) & 0.013 & 0.011 & 0.011 & 0.011 & 0.012 & 0.011 & 0.010 & 0.011 \\
		
		(0.6,0.1)  &  0.015 & 	0.018 & 0.010 & 	0.010 & 0.010 & 0.010 & 0.010 & 0.009 \\
		
		(0.9,0.1)     &   0.014  & 	0.015  & 	0.010  & 	0.010  & 	0.010  & 	0.012  & 	0.010  & 	0.010 \\
		
		(0.8,0.05)  &   0.010 & 0.011 &	0.010 & 	0.011 & 0.010 & 0.011 & 0.011 & 0.010 \\	
		
		\bottomrule
	\end{tabular}	
	
\end{table*}

\textbf{Impact of total number of clients}:
Fig. \ref{fig:experimental_results_AII_number} illustrates the performance of various aggregation algorithms under poisoning attacks on the MNIST dataset as the number of clients increases from 50 to 300 (with malicious clients accounting for 0.3). The results indicate that other aggregation schemes, such as Gas+trim-mean and Gas+median, fail to adapt to changes in client scale, whereas AdaBFL effectively resists poisoning attacks across all client scales.

\textbf{Impact of fraction of synthetic updates:}
Leveraging models provided by benign clients can effectively enhance the performance of the global model. In the Algorithm \ref{alg:dfl} (line 17), we enhance the model's resistance to attacks by generating m new models from the benign models and incorporating these derived models into the aggregation process. Specifically, we evaluated the performance of Foundation-mean and AdaBFL under poisoning attacks as the synthetic ratio increased from 0\% to 100\%, using FedAvg as the baseline. In Fig.\ref{fig:experimental_results_synthetic_updates}, as the synthetic ratio rises, Foundation-mean's anti-attack performance fluctuates. This occurs because too few synthetic updates fail to mitigate the impact of malicious clients, while too many may obscure benign updates within the system. Notably, AdaBFL exhibits less sensitivity to the synthetic ratio, maintaining stable attack resistance (on par with FedAvg under non-attacked conditions).

\textbf{The Impact of parameters}:
Table \ref{tab:rho} shows the performance of AdaBFL-3 (parallel design) when varying the aggregation weight thresholds $\rho_1, \rho_2$ and $\beta_{\_\min}, \beta_{\_\max}$. For simplicity, in this experiment, we set $\beta_{1_\_\min}=\beta_{3_\_\min}=\beta_{\_\min}$ in Algorithm\ref{WUA}. The results demonstrate that AdaBFL exhibits strong stability: under no attack, altering parameters causes only minor fluctuations in the test error rate; under attack, parameter changes do not affect the test error rate.

\textbf{Evaluation of different AbaBFL algorithms:}
Table \ref{tab:Monent} presents the test error rate evaluation of the AbaBFL algorithm under parallel design with internal structure adjustments. Specifically, the following combinations were evaluated: Algorithm\ref{alg:dfl} + Algorithm\ref{WUA} $(Al_1)$, Algorithm\ref{alg:dfl} + Algorithm\ref{WUWT} $(Al_2)$, and Algorithm\ref{alg:AdaM} + Algorithm\ref{WUA} $(Al_3)$. At a malicious client ratio of 0.3, where momentum-based AbaBFL ($Al_3$) demonstrates optimal resistance.

\begin{table*}[htbp]
	\centering
	\caption{Evaluation of different AbaBFL algorithms on MNIST. Algorithm \ref{alg:dfl} + Algorithm \ref{WUA} $(Al_1)$, Algorithm \ref{alg:dfl} + Algorithm \ref{WUWT} $(Al_2)$, and Algorithm \ref{alg:AdaM} + Algorithm \ref{WUA} $(Al_3)$.}
	\label{tab:Monent}
	\centering
	\vspace{0.2cm}
	\begin{tabular}{lccccccccc}
		\toprule
		Algorithm  & No  & LF  & Gaussian  & Trim  & Krum  & Min-Max  & Scaling & Sybil \\
		\midrule
		
		$Al_1$ & 0.013 & 0.011 & 0.011 & 0.011 & 0.012 & 0.011 & 0.010 & 0.011 \\
		
		$Al_2$  &  0.016 & 0.010 & 0.009 & 	0.010 & 	0.011 & 0.009 & 0.010 & 0.010 \\
		
		$Al_3$  &  0.014 & 	0.009 & 0.009 & 0.010 & 0.009 &	0.008 & 0.010 & 0.009 \\
		
		\bottomrule
	\end{tabular}	
	
\end{table*}

\section{Conclusion}

\label{Conclusion}

In the paper, we proposed an adaptive Byzantine-robust federated learning method (AdaBFL) by constructing a novel three-layer defense architecture. Specifically, our three-layer defense architecture includes the filter malicious, parameter clipping, and derivative model, which adaptively adjusts the weights of each defense layer to effectively deal with complex attack scenarios. Moreover, we provide convergence analysis of our AdaBFL method, which exhibits stable convergence properties of our AdaBFL method. Extensive experiments across diverse scenarios validate the effectiveness of our proposed AdaBFL method. 

 \bibliographystyle{splncs04}
 \bibliography{main.bib}
\section{Appendix}
\label{appendix}

\subsection{Detailed Proof of Theorem~\ref{th1}}
In this section, we provide a detailed proof of Theorem~\ref{th1}.

\begin{proof}
	
	Here $g^{i}_{t}=\frac{1}{b}\sum_{j=1}^b\nabla f^i(\theta_{t-1};\xi^i_{t-1,j}) $ denotes the gradient of $i$-th client in training round $t$. 
	
	As shown in Algorithm \ref{WUA}, the aggregation formula is:
	\begin{align}
		\theta_{t} &  \gets \beta_1 \cdot \text{mean}(\{\theta_t^i\}_{i\in \left [ N \right ]  }) + \beta_2 \cdot \widetilde{\theta_{t}} + \beta_3 \cdot \overline{\theta_{t}}   =  \beta_1 \cdot \frac{1}{N}  {\textstyle \sum_{i\in [N] }^{} \theta_t^i }  + \beta_2 \cdot \widetilde{\theta_{t}} + \beta_3 \cdot \overline{\theta_{t}},
	\end{align}
	where
	\begin{equation}
		\label{Eq2}
		\begin{aligned}
			\frac{1}{ N } {\textstyle \sum_{i\in [N] }^{} \theta_t^i } = \theta_{t-1}-\eta \cdot \frac{1}{N}  {\textstyle \sum_{i\in [N] }^{} g^{i}_{t} }.
		\end{aligned}
	\end{equation}
	Therefore, we can obtain 
	\begin{align}
		\theta_{t} \leftarrow\left[\theta_{t-1}-\eta \cdot \frac{1}{N} \sum_{i \in [N]} g^{i}_{t}\right]+\beta_{2} \cdot \widetilde{\theta_{t}}+\beta_{3} \cdot \overline{\theta_{t}}.
	\end{align}
	Let $G_{t} = \frac{1}{N} \sum_{i \in [N]} g^{i}_{t}$.
	Since $\widetilde{\theta_{t}}$ and $\overline{\theta_{t}}$ cannot be linearly represented by $\theta_{t-1}$, we can define a useful aggregation error $e_{t}$ as follows:
	\begin{align}
			e_{t}=\theta_{t} &-  \beta_1 \cdot \left ( \theta_{t-1} - \eta G_{t}   \right ) -  \beta_2 \cdot \left ( \theta_{t-1} - \eta G_{t}   \right ) -\beta_3 \cdot \left ( \theta_{t-1} - \eta G_{t}   \right )  \nonumber
	\end{align}
	\begin{align}
			&=\theta_{t}-\left ( \beta_1+ \beta_2 +\beta_3  \right ) \cdot \left ( \theta_{t-1} - \eta G_{t}  \right )  = \theta_{t}- \left ( \theta_{t-1} - \eta G_{t}  \right ).
	\end{align}
	Thus, we have 
	\begin{align}
		\theta_{t} = \left ( \theta_{t-1} - \eta G_{t}  \right ) + e_{t}.
	\end{align}
	
By using Assumption \ref{ass2}, we have 
	\begin{align}
		\mathbb{E}\left [ g_{t}^{i} \right ]  = \nabla f^{i}(\theta _{t-1}),
	\end{align}
	\begin{align}
		& \mathbb{E}\left [\left \|  g_{t}^{i}  - \nabla f^{i}(\theta _{t-1})\right \|^2  \right ]
		= \mathbb{E}\left [\left \|  \frac{1}{b} \sum_{j=1}^{b} \nabla f^i(\theta_{t-1};\xi^i_{t-1,j})  - \nabla f^{i}(\theta _{t-1})\right \|^2  \right ]  \nonumber \\
		& = \frac{1}{b^2}\mathbb{E}\left [\left \|  \sum_{j=1}^{b} \nabla f^i(\theta_{t-1};\xi^i_{t-1,j})  - \nabla f^{i}(\theta _{t-1})\right \|^2  \right ] 
		= \frac{\sigma^2}{b}.
	\end{align}
 Then we can get
	\begin{align}
		\mathbb{E} \left [ G_{t} \right ]  
		= \frac{1}{N} \sum_{i \in [N]} \mathbb{E} \left [ g^{i}_{t}\right ] 
		= \frac{1}{N} \sum_{i \in [N]} \nabla f^{i}(\theta _{t-1})
		= \nabla F(\theta _{t-1})
	\end{align}
	\begin{align}
		\label{align001}
		\mathbb{E} \left [ \left \|  G_{t} -\nabla F(\theta _{t-1})  \right \|^2  \right ] \le \left ( \frac{\sigma ^2}{bN } + \tau^2 \right ). 
	\end{align}
We can rewrite Eq. (\ref{align001}) as follows:
	\begin{align}
		\label{align002}
		\left \|  G_{t} -\nabla F(\theta _{t-1})  \right \|^2  
		&= \left \| \frac{1}{N} \sum_{i \in  [N]   }^{} \left (  g_t^i -\nabla F(\theta _{t-1}) \right )  \right \|^2
		= \left \| \frac{1}{N } \sum_{i \in  [N]   }^{} \left (  g_t^i - \nabla f^{i}(\theta _{t-1}) + \nabla f^{i}(\theta _{t-1})-\nabla F(\theta _{t-1}) \right )  \right \|^2  \nonumber \\
		&= \left \| \frac{1}{N  } \sum_{i \in  [N]}^{} \left (  g_t^i - \nabla f^{i}(\theta _{t-1})  \right ) \right \|^2   + \left \| \frac{1}{N } \sum_{i \in  [N] }^{} \left ( \nabla f^{i}(\theta _{t-1})-\nabla F(\theta _{t-1}) \right )   \right \|^2.
	\end{align}
Then by calculating the expected value of Eq. (\ref{align002}) and using Assumption \ref{ass2} \ref{ass3}, we can get
	\begin{align}
		\mathbb{E} \left [ \left \|  G_{t} -\nabla F(\theta _{t-1})  \right \|^2 \right ]   
		& = \mathbb{E} \left [ \left \| \frac{1}{N  } \sum_{i \in  [N]}^{} \left (  g_t^i - \nabla f^{i}(\theta _{t-1})  \right ) \right \|^2 \right ]    + \mathbb{E} \left [ \left \| \frac{1}{N } \sum_{i \in  [N]   }^{} \left ( \nabla f^{i}(\theta _{t-1})-\nabla F(\theta _{t-1}) \right )   \right \|^2  \right ] \nonumber  \\
		& \le  \frac{1}{N^2 } \cdot  \frac{N \sigma ^2}{b} + \tau ^2 = \left ( \frac{\sigma ^2}{bN} + \tau^2 \right ).
	\end{align}
	
	By using Assumption \ref{ass1}, we have 
	\begin{align}
		\label{align003}
			F(\theta_{t}) \le F(\theta_{t-1})  -\eta \langle \nabla F(\theta _{t-1}),  G_{t} \rangle + \langle \nabla F(\theta _{t-1}),  e_{t} \rangle + \frac{L}{2} \| -\eta G_{t} +e_{t}\|^2,
	\end{align}
	Then take the expectation of Eq. \ref{align003}, we can get 
	\begin{align}
		\label{align004}
		\mathbb{E} \left [ F(\theta_{t})  \right ] \le \mathbb{E}\left [ F(\theta_{t-1}) \right ]   -\eta  \left \| \nabla F(\theta _{t-1}) \right \|^2  + \underset{(a)}{\underbrace{\mathbb{E}\left [ \langle \nabla F(\theta _{t-1}),  e_{t} \rangle \right ] } }  + \underset{(b)}{\underbrace{\frac{L}{2} \mathbb{E}\left [ \| -\eta G_{t} +e_{t}\|^2 \right ] } },
	\end{align}
	where the term (a) is equivalent to
	\begin{align}
		\mathbb{E}\left [ \langle \nabla F(\theta _{t-1}),  e_{t} \rangle \right ] 
		\le 
		\mathbb{E}\left [  \left \| \nabla F(\theta _{t-1}) \right \| \cdot \left \|  e_{t} \right \|   \right ]. 
	\end{align}
	From \cite{yin2018byzantine}, we know that $\left \| e_{t}  \right \|$ is bounded, and set $\left \| e_t \right \| \le K \eta^2 $, where $K=\sqrt{\sigma + \tau} $. Then we can obtain 
	\begin{align}
		\label{align007}
		\mathbb{E}\left [ \langle \nabla F(\theta _{t-1}),  e_{t} \rangle \right ] 
		\le K\eta^2 \left \| \nabla F(\theta _{t-1}) \right \|.  
	\end{align}
	In Eq.\ref{align004}, the term (b) is equivalent to
	\begin{align}
	\label{align005}
	\frac{L}{2} \mathbb{E}\left [ \| -\eta G_{t} +e_{t}\|^2 \right ] 
	\le L\eta ^2 \left ( \frac{2\sigma ^2}{bN}  + \tau^2 \right )  +2L\eta ^2 \left \| \nabla F(\theta _{t-1}) \right \|^2 +LK^2\eta ^4  .
	\end{align}
	We can rewrite Eq. (\ref{align005}) is as follows:
	\begin{align}
		\label{align006}
		\| -\eta G_{t} +e_{t}\|^2 
		\le 2 \left \| -\eta G_t \right \|^2 + 2 \left \| e_t \right \| ^2  = 2\eta^2 \left \|  G_t \right \|^2 + 2 K^2\eta ^4 ,
	\end{align}
	where Eq. (\ref{align006}), use $(a+b)^2 \le 2a^2 +2b^2$ to scale and $\left \|  G_t \right \|^2$  is equivalent to
	\begin{align}
		\left \|  G_t \right \|^2  = \left \| G_t - \nabla F(\theta _{t-1}) + \nabla F(\theta _{t-1}) \right \|^2
		& \le 2  \left \| G_t - \nabla F(\theta _{t-1}) \right \|^2 +2 \left \| \nabla F(\theta _{t-1}) \right \|^2 \nonumber  \\
		&=2\left ( \frac{\sigma ^2}{bN} +\tau ^2 \right ) +2 \left \| \nabla F(\theta _{t-1}) \right \|^2.
	\end{align}
	 Then we can get
	\begin{align}
		\frac{L}{2} \mathbb{E}\left [ \| -\eta G_{t} +e_{t}\|^2 \right ]
		&\le \frac{1}{2} \mathbb{E}\left [ 2\eta^2 \left \| G_{t} \right \|^2  +2K^2\eta^4 \right ]
		= \frac{1}{2} \mathbb{E}\left [ 2\eta^2 \left ( \left ( \frac{2 \sigma ^2}{bN} +\tau ^2 \right ) +2 \left \| \nabla F(\theta _{t-1}) \right \|^2 \right )  +2K^2\eta^4 \right ] \nonumber \\
		&= L\eta ^2 \left ( \frac{2\sigma ^2}{bN }  + \tau^2 \right )  +2L\eta ^2 \left \| \nabla F(\theta _{t-1}) \right \|^2 +LK^2\eta ^4  .
	\end{align}
	Let $D = \frac{2\sigma ^2}{b\left | N _t \right | } +\tau ^2$, and substitute (\ref{align006}) and (\ref{align007}) into (\ref{align004}) to obtain
	\begin{align}
		\label{align009}
		\mathbb{E} \left [ F(\theta_{t})  \right ] \le \mathbb{E}\left [ F(\theta_{t-1}) \right ]   -\eta  \left \| \nabla F(\theta _{t-1}) \right \|^2
		+\underset{(c)}{\underbrace{ K\eta^2 \left \| \nabla F(\theta _{t-1}) \right \| }}  
		+ L\eta ^2 D  +2L\eta ^2 \left \| \nabla F(\theta _{t-1}) \right \|^2 +LK^2\eta ^4.
	\end{align}
	By applying the inequality $ab \le \frac{\alpha }{2} a^2 + \frac{1}{2\alpha } b^2$ to the term $ (c)$ in the above inequality~(\ref{align009}), where $a=\sqrt{\eta } \left \| \nabla F(\theta_{t-1}  ) \right \|$ and $K\eta^{\frac{3}{2} } $, we can get
	\begin{align}
		K\eta^2 \left \| \nabla F(\theta _{t-1}) \right \|
		\le \frac{\alpha }{2} \eta \left \| \nabla F(\theta _{t-1}) \right \|^2 + \frac{1}{2\alpha }K^2\eta ^3.
	\end{align}
	The let $\alpha =\frac{1}{2} $
	\begin{align}
		\label{align008}
		K\eta^2 \left \| \nabla F(\theta _{t-1}) \right \| \le \frac{\eta}{4}  \left \| \nabla F(\theta _{t-1}) \right \|^2 + K^2\eta ^3.
	\end{align}
	By substituting the inequality  (\ref{align008}) into (\ref{align009}), we can obtain
	\begin{align}
		\mathbb{E} \left [ F(\theta_{t})  \right ] &\le \mathbb{E}\left [ F(\theta_{t-1}) \right ]   -\eta  \left \| \nabla F(\theta _{t-1}) \right \|^2
		+ \frac{\eta}{4}  \left \| \nabla F(\theta _{t-1}) \right \|^2 + K^2\eta ^3
		+ L\eta ^2 D  +2L\eta ^2 \left \| \nabla F(\theta _{t-1}) \right \|^2 +LK^2\eta ^4 \nonumber \\
		& = \mathbb{E}\left [ F(\theta_{t-1}) \right ]  
		+ \left ( \frac{-3\eta}{4} + 2L\eta ^2  \right )  \left \| \nabla F(\theta _{t-1}) \right \|^2 + K^2\eta ^3
		+ L\eta ^2 D  +LK^2\eta ^4.
	\end{align}
	Let $\eta \le \frac{1}{8L}$, we can obtain
	\begin{align}
		\mathbb{E} \left [ F(\theta_{t})  \right ] \le \mathbb{E}\left [ F(\theta_{t-1}) \right ]  
		- \frac{\eta}{2} \left \| \nabla F(\theta _{t-1}) \right \|^2 + K^2\eta ^3
		+ L\eta ^2 D  +LK^2\eta ^4.
	\end{align}
	Telescoping over $t = 1,..., T$ and using Assumption \ref{ass4}, one has that:
	\begin{align}
		\frac{1}{T} \sum_{t=0}^{T-1} \mathbb{E}\left[\left\|\nabla F\left(\theta_{t}\right)\right\|^{2}\right] & \leq \frac{2\left ( F\left(\theta_{0}\right)-F^* \right ) }{\eta T} + 2K^2\eta^2 \left ( 1-L\eta  \right ) + 2L\eta D \nonumber \\
		& \leq \frac{2\left ( F\left(\theta_{0}\right)-F^* \right ) }{\eta T} + \frac{7K^2\eta^2}{4} + 2L\eta D.
	\end{align}
\end{proof}

\subsection{Additional Experiments}
 \begin{figure}[htbp]
	\centering
	\includegraphics[width=0.75\textwidth,height=0.4\textwidth]{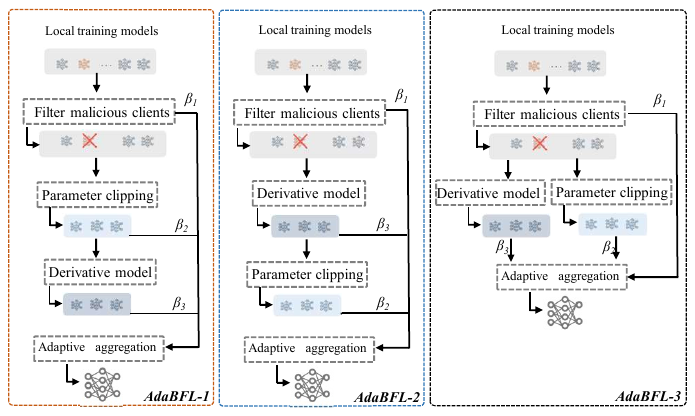}
	\caption {The configuration of the three-layer defense mechanism. AdaBFL-1 and AdaBFL-2 are designed in a series configuration, while AdaBFL-3 employs a parallel design. Both series and parallel configurations utilize the update model of benign clients as input.}
	\label{Combination Method}
\end{figure}

\begin{table*}[htbp]
	\centering
	\caption{Test results for three different configurations (series, parallel) under various attacks on MNIST.}
	\label{tab:AdaBFL1-3}
	
	\vspace{0.2cm}
	\begin{tabular}{lcccccccc}
		\toprule
		Malicious Ratio & Method \quad & LF  & Gaussian  & Trim  & Krum  & Min-Max  & Scaling  & Sybil  \\
		\midrule
		
		\multirow{3}{*}{0\%} & AdaBFL-1 \quad &  0.014 &  0.010 &	\underline{0.009} &	0.013 &	0.013 &	\underline{0.009}	 & \underline{0.009} \\
		& AdaBFL-2  \quad    &  0.017 &  0.013 &	0.020 &	0.014 &	0.016 &	0.019	 & 0.012 \\
		& AdaBFL-3  \quad    &  0.015 &  0.013 &	0.021 &	0.013 &	0.013 &	0.012	 & 0.012 \\
		
		\midrule
		\multirow{3}{*}{10\%} & AdaBFL-1 \quad &  \underline{0.010} & \underline{0.009} & 0.010 & \underline{0.009} & 0.010 & 	0.010 & 0.010 \\
		& AdaBFL-2 \quad &  0.018 &	0.010 & 0.010 &	0.010 & 0.010 & 0.010 & 0.011 \\
		& AdaBFL-3 \quad  &  0.017 &  0.011 &	0.011 &	0.011 &	0.011 &	0.011	 & 0.011 \\
		
		\midrule
		\multirow{3}{*}{20\%} & AdaBFL-1 \quad &  0.011 &	0.011 & 0.010 & \underline{0.009} & 0.010 & \underline{0.009} & 0.010 \\
		& AdaBFL-2 \quad &  \underline{0.010} &	0.010 & \underline{0.009} &	\underline{0.009} & \underline{0.009} & 0.010 & \underline{0.009} \\
		& AdaBFL-3 \quad  &  0.013  & 	0.011  & 	0.011  &	0.011  & 0.010 & 	0.010 & 	0.011 \\
		\midrule
		\multirow{3}{*}{30\%} & AdaBFL-1 \quad &  \underline{0.010} &	0.010 & 0.010 & 0.010 & 0.010 & 	\underline{0.009} & 0.010 \\
		& AdaBFL-2 \quad  &  0.014 & 0.010 & \underline{0.009} &	0.010 &	0.010 &	0.010 & 0.012 \\
		& AdaBFL-3 \quad & 0.011 & 0.011 &	0.011 &	0.012 &	0.011 &	0.010 &	0.011 \\
		
		\midrule
		\multirow{3}{*}{40\%} & AdaBFL-1 \quad &  \underline{0.012}  & 0.010  & 0.010  & 0.010  & 0.900  & 0.764  & 0.900 \\
		& AdaBFL-2 \quad &  0.020 &	0.010 &	0.010 & 	0.010 &	0.900 &	0.017 & 0.900 \\
		& AdaBFL-3 \quad  &  \underline{0.012} & 	0.010 & 0.028 & 	0.013 & 0.012 & 0.011 & 0.783 \\
		
		\bottomrule
	\end{tabular}
	
\end{table*}

As shown in Fig.~\ref{Combination Method}, we designed combinations of this three-layer defensive mechanism under different aggregation scenarios (series and parallel). Among them, AdaBFL-1 and AdaBFL-2 adopt a serial design, while AdaBFL-3 adopts a parallel design. In AdaBFL-1, the server first performs malicious client filtering, then takes the models of benign clients obtained from filtering as input to Parameter Clipping, and the output of Parameter Clipping serves as input to the Derivative Model. In AdaBFL-2, the server first performs malicious client filtering, then takes the models of benign clients obtained from filtering as input to the Derivative Model, and the output of the Derivative Model serves as input to Parameter Clipping. In AdaBFL-3, the server first performs malicious client filtering, then takes the models of benign clients obtained from filtering as input to both Parameter Clipping and the Derivative Model.

Defense evaluation for series or parallel designs:
In Fig.\ref{Combination Method}, we designed serial and parallel AdaBFL architectures. In Table \ref{tab:AdaBFL1-3}, this experiment evaluated the anti-poisoning attack performance of serial and parallel designs as the proportion of malicious clients increased from 0\% to 40\%. Results show that under typical malicious ratios (below 50\%), both serial and parallel AdaBFL effectively resisted poisoning attacks. Overall, the serial design exhibits a lower test error rate than the parallel design.

\begin{algorithm}[H]
	\caption{AdaBFL with Momentum}
	\label{alg:AdaM}
	\begin{algorithmic}[1]
		\footnotesize
		\State \textbf{Input} Total number of clients \( N \); number of clients participating in training per epoch \( n \); filtered benign clients \( \Lambda \); local training data \( D^{i}, i \in N \); number of global training rounds \( T \); learning rate \( \eta \); weights \( \beta_1,\beta_2,\beta_3 \); momentum factor \(\alpha\);
		\State \textbf{Output} Global model \( \theta_T \).
		
		\State Initialize global model \( \theta^0 \).
		\State Initialize \(\rho_1 \), \(\rho_2 \) \Comment{Dynamic thresholds}
		\State \textit{// \textcolor{blue}{Global model synchronization} }
		\State Server broadcasts \( \theta_0 \) to all clients.
		\State \textit{// \textcolor{blue}{Local model updating} }
		\For{\( t =  1, \dots, T-1 \)}
		
		\For{each client \( i \in [n] \) in parallel}
		\State Drawn $b$ i.i.d. samples $\mathcal{B}=\{\xi_{t-1,j}^i\}_{j=1}^b$ from data distribution $\mathcal{D}^i$;
		\State $g^{i}_{t}=\frac{1}{b}\sum_{j=1}^b\nabla f^i(\theta_{t-1};\xi^i_{t-1,j})$; 
		\State \( \theta_{t}^{i} = \theta_{t-1} - \eta g_t^{i} \).
		\EndFor
		\State Clients send \(\theta_t^{i}\) to server.
		
		\State \textit{// \textcolor{blue}{ Adaptive aggregation}}
		\For{each client \( i \in [n] \)}
		\State \textit{// \textcolor{blue}{(1) Filter malicious clients}}
		\State {The server executes Eq.(\ref{EqP1})}
		
		\State \textit{// \textcolor{blue}{(2) Parameter clipping}}
		\State {The server executes Eq.(\ref{EqP2}) and Eq.(\ref{EqP2-2})}
		
		\State \textit{// \textcolor{blue}{(3) Derivative model
		}}
		\State {The server executes Eq.(\ref{EqP3}-\ref{EqP6}))}
		\EndFor  
		
		\State \(\rho_1 \gets \alpha \cdot \rho_1 + (1 - \alpha) \cdot p^1_{t}\)  
		\State \(\rho_2 \gets \alpha \cdot \rho_2 + (1 - \alpha) \cdot p^2_{t}\)  
		\State {The server executes Weight Update Algorithm (\ref{WUA})}  
		
		\EndFor  
	\end{algorithmic}
\end{algorithm}

\begin{table*}[htbp]
	\centering
	\caption{Validation results of different aggregation methods on the MNIST, Petimage, Shakespeare and HAR datasets. The proportion of malicious clients is 0.3.}
	\label{AdaBFL_effective2}
	
	\centering
	\vspace{0.2cm} 
	
	\textbf{(a) Fashion-MNIST dataset}
	
	\vspace{0.2cm}
	\begin{tabular}{lccccccccc}
		\toprule
		Method & No  & LF  & Gaussian  & Trim  & Krum  & Min-Max  & Scaling  & Sybil \\
		\midrule
		
		FedAvg     & \textbf{0.090} & 0.091 & 0.703 & 0.900 & 0.856 & 0.900 & 0.900 & 0.900 \\
		
		Trim-mean     & 0.093 & 0.091 & 0.251 & 0.842 & 0.900 & 0.212 & 0.900  & 0.900 \\
		
		GAS+Trim-mean     & 0.157 & 0.154 & 0.310 & 0.151 & 0.607 & 0.417 & 0.711  & 0.900 \\
		
		FoundationFL+Median     & \textbf{0.090} & \textbf{0.090} & 0.096 & 0.104 & \textbf{0.091} & 0.121 & 0.102  & 0.102 \\
		
		Gau+Trim-mean     & \textbf{0.090} & 0.092 & 0.547 & 0.857 & 0.900 & 0.872 & 0.900 & 0.900 \\
		
		Median     & 0.093 & 0.092 & \textbf{0.090} & 0.120 & 0.096 & 0.101 & 0.124  & 0.116 \\
		
		GAS+Median     & 0.161 & 0.155 & 0.157 & 0.150 & 0.157 & 0.162 & 0.149  & 0.151 \\
		
		Gau+Median     & 0.091 & 0.091 & 0.095 & 0.128 & 0.211 & 0.111 & 0.822  & 0.900 \\
		
		FoundationFL+Mean     & 0.091 & 0.094 & 0.243 & 0.114 & 0.130 & 0.262 & 0.900  & 0.900 \\
		
		\rowcolor{lightblue}
		AdaBFL(ours)    & \textbf{0.090} & \textbf{0.090} & 0.092 & \textbf{0.091} & 0.092 & \textbf{0.092} & \textbf{0.092}  & \textbf{0.094} \\
		\bottomrule
	\end{tabular}
	
	\vspace{0.2cm} 
	
	\centering
	\textbf{(b) Petimage dataset}
	
	\vspace{0.2cm}
	\begin{tabular}{lccccccccc}
		\toprule
		Method & No  & LF  & Gaussian  & Trim  & Krum  & Min-Max  & Scaling & Sybil  \\
		\midrule
		
		FedAvg     & 0.175 & 0.316 & 0.478 & 0.482 & 0.500 & 0.401 & 0.500  & 0.500 \\
		
		Trim-mean     & 0.168 & 0.287 & 0.500 & 0.450 & 0.500 & 0.330 & 0.500  & 0.500 \\
		
		GAS+Trim-mean     & 0.328 & 0.500 & 0.427 & \textbf{0.110} & 0.316 & 0.500 & 0.347  & 0.500 \\
		
		FoundationFL+Median     & 0.184 & 0.326 & 0.187 & 0.197 & 0.180 & 0.327 & 0.305  & 0.305 \\
		
		Gau+Trim-mean     & 0.182 & 0.254 & 0.500 & 0.500 & 0.500 & 0.500 & 0.500  & 0.500 \\
		
		Median     & 0.175 & 0.357 & 0.180 & 0.198 & 0.218 & 0.238 & 0.209  & 0.217 \\
		
		GAS+Median     & 0.326 & 0.500 & 0.318 & 0.319 & 0.387 & 0.309 & 0.316  & 0.314 \\
		
		Gau+Median     & 0.173 & 0.318 & 0.185 & 0.202 & 0.253 & 0.239 & 0.367  & 0.500 \\
		
		FoundationFL+Mean     & \textbf{0.168} & 0.334 & 0.500 & 0.203 & 0.332 & 0.481 & 0.500  & 0.500 \\
		
		\rowcolor{lightblue}
		AdaBFL(ours)    & 0.224 & \textbf{0.272} & \textbf{0.177} & 0.176 & \textbf{0.175} & \textbf{0.169} & \textbf{0.178}  & \textbf{0.176} \\
		\bottomrule
	\end{tabular}
	
	\vspace{0.2cm} 
	
	\centering
	\textbf{(c) Shakespeare dataset}
	
	\vspace{0.2cm}
	\begin{tabular}{lccccccccc}
		\toprule
		Method & No  & LF  & Gaussian & Trim  & Krum  & Min-Max & Scaling  & Sybil  \\
		\midrule
		
		FedAvg     & 3.813 & 3.830 & 4.188 & 5.282 & 3.873 & 8.511 & - & - \\
		
		Trim-mean     & \textbf{3.733} &3.747 & 4.021 & 4.131 & 3.793 & 5.529 & -  & - \\
		
		GAS+Trim-mean     & 12.03 & 10.73 & 14.69 & 8.670 & 9.717 & 16.75 & 4.181  & - \\
		
		FoundationFL+Median     & 3.832 & 4.945 & 3.916 & 3.913 & 3.859 & 3.898 & 3.882  & 3.853 \\
		
		Gau+Trim-mean     & 3.832 & 4.941 & 3.916 & 3.913 & 3.857 & 3.898 & 3.882  & 3.853 \\
		
		Median     & 3.809 & 3.756 & 3.763 & 3.814 & 3.838 & 3.912 & 4.033 & 3.948 \\
		
		GAS+Median     & 10.34 & 12.41 & 9.633 & 10.50 & 10.40 & 10.09 & 11.78 & 9.068 \\
		
		Gau+Median     & 3.820 & 3.868 & 3.821 & 4.000 & 3.887 & 4.162 & - & - \\
		
		FoundationFL+Mean     & 3.785 & 4.814 & 4.243 & 3.823 & \textbf{3.784} & 5.951 & 3.909 & 15.94 \\
		
		\rowcolor{lightblue}
		AdaBFL(ours)    & 3.902 & \textbf{3.721} & \textbf{3.823} & \textbf{3.755} & 3.965 & \textbf{3.790} & \textbf{3.81} & \textbf{3.861} \\
		\bottomrule
	\end{tabular}
	
	\centering
	\vspace{0.2cm}
	\textbf{(d) HAR dataset}
	
	\vspace{0.2cm}
	\begin{tabular}{lccccccccc}
		\toprule
		Method & No  & LF  & Gaussian  & Trim  & Krum  & Min-Max  & Scaling  & Sybil \\
		\midrule
		
		FedAvg     & 0.047 & 0.246 & 0.837 & 0.090 & 0.632 & 0.373 & 0.831  & 0.831 \\
		
		Trim-mean     & 0.047 & 0.053 & 0.900 & 0.900 & 0.900 & 0.900 & 0.900  & 0.900 \\
		
		GAS+Trim-mean     & 0.373 & 0.443 & 0.703 & 0.073 & 0.067 & 0.335 & 0.831  & 0.831 \\
		
		FoundationFL+Median     & \textbf{0.045} & 0.047 & 0.059 & 0.052 & \textbf{0.044} & 0.087 & 0.061  & \textbf{0.053} \\
		
		Gau+Trim-mean     & \textbf{0.045} & 0.050 & 0.708 & 0.129 & 0.545 & 0.431 & 0.831  & 0.831 \\
		
		Median     & \textbf{0.045} & \textbf{0.044} & 0.055 & 0.050 & 0.050 & 0.063 & \textbf{0.050} & 0.062 \\
		
		GAS+Median     & 0.343 & 0.323 & 0.313 & 0.285 & 0.293 & 0.461 & 0.315  & 0.262 \\
		
		Gau+Median     & \textbf{0.045} & 0.053 & 0.064 & 0.057 & 0.317 & 0.091 & 0.817  & 0.831 \\
		
		FoundationFL+Mean     & 0.047 & 0.051 & 0.772 & \textbf{0.048} & 0.063 & 0.236 & 0.107  & 0.831 \\
		
		\rowcolor{lightblue}
		AdaBFL(ours)    & \textbf{0.045} & 0.055 & \textbf{0.053} & 0.053 & 0.047 & \textbf{0.052} & 0.055  & 0.055 \\
		\bottomrule
	\end{tabular}
	
\end{table*}

\begin{table*}[htbp]
	\centering
	\caption{Results for Median and AdaBFL across various datasets when malicious clients exceed 50\% of the total.}
	\label{tab_Malicious_1}

	\centering
	\vspace{1mm}
	\textbf{(a) MNIST dataset}
	
	\begin{tabular}{lcccccccc}
		\toprule
		Malicious Ratio & Method & LF  & Gaussian  & Trim  & Krum  & Min-Max & Scaling & Sybil  \\
		\midrule
		
		\multirow{2}{*}{55\%} & Median  &  0.010 &  0.042 &	0.900 &	0.900 &	0.518 &	0.900	 & 0.900 \\
		& AdaBFL    & \cellcolor{lightblue}0.010 & \cellcolor{lightblue}0.011 & \cellcolor{lightblue}0.010 & \cellcolor{lightblue}0.900 & \cellcolor{lightblue}0.010 & \cellcolor{lightblue}0.900 & \cellcolor{lightblue}0.900 \\
		\midrule
		\multirow{2}{*}{60\%} & Median     &  0.010 & 0.060 & 0.895 &	0.900 & 	0.895 &	0.900 & 0.900 \\
		& AdaBFL    & \cellcolor{lightblue}0.010 & \cellcolor{lightblue}0.011 & \cellcolor{lightblue}0.094 & \cellcolor{lightblue}0.900 & \cellcolor{lightblue}0.010 & \cellcolor{lightblue}0.900  & \cellcolor{lightblue}0.900 \\
		\midrule
		\multirow{2}{*}{65\%} & Median  &   0.010 & 0.886 & 0.886 & 0.900 & 0.900 & 0.900 & 0.900 \\
		& AdaBFL    & \cellcolor{lightblue}0.010 & \cellcolor{lightblue}0.023 & \cellcolor{lightblue}0.830 & \cellcolor{lightblue}0.900 & \cellcolor{lightblue}0.011 & \cellcolor{lightblue}0.900  & \cellcolor{lightblue}0.900 \\
		\bottomrule
	\end{tabular}
	
	\centering
	\vspace{1mm}
	\textbf{(b) Fashion-MNIST dataset}
	
	\begin{tabular}{lcccccccc}
		\toprule
		Malicious Ratio & Method & LF  & Gaussian & Trim  & Krum  & Min-Max  & Scaling  & Sybil  \\
		\midrule
		
		\multirow{2}{*}{55\%} & Median  &   0.093 & 0.108 & 0.850 &	0.900 & 0.261 &	0.900 & 0.900 \\
		& AdaBFL    & \cellcolor{lightblue}0.091 & \cellcolor{lightblue}0.097 & \cellcolor{lightblue}0.153 & \cellcolor{lightblue}0.900 & \cellcolor{lightblue}0.094 & \cellcolor{lightblue}0.900 & \cellcolor{lightblue}0.900 \\
		\midrule
		\multirow{2}{*}{60\%} & Median   &  0.096 &	0.194 &	0.855 & 0.900 & 0.900 & 0.900 & 0.900 \\
		& AdaBFL    & \cellcolor{lightblue}0.091 & \cellcolor{lightblue}0.098 & \cellcolor{lightblue}0.702 & \cellcolor{lightblue}0.900 & \cellcolor{lightblue}0.096 & \cellcolor{lightblue}0.900  & \cellcolor{lightblue}0.900 \\
		\midrule
		\multirow{2}{*}{65\%} & Median   &  0.098 & 0.264 & 0.900 & 0.900 & 0.900 &	0.900 & 0.900 \\
		& AdaBFL    & \cellcolor{lightblue}0.092 & \cellcolor{lightblue}0.098 & \cellcolor{lightblue}0.723 & \cellcolor{lightblue}0.900 & \cellcolor{lightblue}0.091 & \cellcolor{lightblue}0.900  & \cellcolor{lightblue}0.900 \\
		\bottomrule
	\end{tabular}

	\centering
	\vspace{1mm}
	\textbf{(c) HAR dataset}
	
	\begin{tabular}{lcccccccc}
		\toprule
		Malicious Ratio & Method & LF  & Gaussian  & Trim  & Krum  & Min-Max  & Scaling  & Sybil  \\
		\midrule
		
		\multirow{2}{*}{55\%} & Median  &  0.054 &	0.053 & 0.109 & 0.837 & 0.236 & 0.831 & 0.831 \\
		& AdaBFL    & \cellcolor{lightblue}0.066 & \cellcolor{lightblue}0.048& \cellcolor{lightblue}0.093 & \cellcolor{lightblue}0.772 & \cellcolor{lightblue}0.051 & \cellcolor{lightblue}0.831 & \cellcolor{lightblue}0.831 \\
		\midrule
		\multirow{2}{*}{60\%} & Median   &  0.093 & 0.056 & 0.115 & 0.856 & 0.288 &	0.831 & 0.831 \\
		& AdaBFL    & \cellcolor{lightblue}0.071 & \cellcolor{lightblue}0.059 & \cellcolor{lightblue}0.082 & \cellcolor{lightblue}0.675 & \cellcolor{lightblue}0.054 & \cellcolor{lightblue}0.831  & \cellcolor{lightblue}0.831 \\
		\midrule
		\multirow{2}{*}{65\%} & Median  &  0.100 & 0.116 & 0.117 & 0.857 & 0.597 & 0.831 & 0.831 \\
		& AdaBFL    & \cellcolor{lightblue}0.084 & \cellcolor{lightblue}0.062 & \cellcolor{lightblue}0.114 & \cellcolor{lightblue}0.676 & \cellcolor{lightblue}0.060 & \cellcolor{lightblue}0.831  & \cellcolor{lightblue}0.831 \\
		\bottomrule
	\end{tabular}
	
	\centering
	\vspace{1mm}
	\textbf{(d) Cifar-10 dataset}

	\begin{tabular}{lcccccccc}
		\toprule
		Malicious Ratio & Method & LF  & Gaussian  & Trim  & Krum  & Min-Max  & Scaling  & Sybil \\
		\midrule
		
		\multirow{2}{*}{55\%} & Median  &   0.284 & 0.677 & 0.900 & 0.900 & 0.900 & 0.900 & 0.900 \\
		& AdaBFL    & \cellcolor{lightblue}0.263 & \cellcolor{lightblue}0.294 & \cellcolor{lightblue}0.400 & \cellcolor{lightblue}0.900 & \cellcolor{lightblue}0.298 & \cellcolor{lightblue}0.900 & \cellcolor{lightblue}0.900 \\
		\midrule
		\multirow{2}{*}{60\%} & Median  &  0.287 & 0.835 & 	0.900 & 0.900 & 0.900 & 0.900 & 0.900 \\
		& AdaBFL    & \cellcolor{lightblue}0.272 & \cellcolor{lightblue}0.306 & \cellcolor{lightblue}0.661 & \cellcolor{lightblue}0.900 & \cellcolor{lightblue}0.308 & \cellcolor{lightblue}0.900  & \cellcolor{lightblue}0.900 \\
		\midrule
		\multirow{2}{*}{65\%} & Median   &  0.296 & 0.900 & 0.900 & 0.900 & 0.900 & 0.900 & 0.900 \\
		& AdaBL    & \cellcolor{lightblue}0.281 & \cellcolor{lightblue}0.319 & \cellcolor{lightblue}0.826 & \cellcolor{lightblue}0.900 & \cellcolor{lightblue}0.324 & \cellcolor{lightblue}0.900  & \cellcolor{lightblue}0.900 \\
		\bottomrule
	\end{tabular}
	
	\centering
	\vspace{1mm}
	\textbf{(e) Petimage dataset}

	\begin{tabular}{lcccccccc}
		\toprule
		Malicious Ratio & Method & LF  & Gaussian  & Trim  & Krum  & Min-Max  & Scaling  & Sybil \\
		\midrule
		
		\multirow{2}{*}{55\%} & Median   &  0.500	 & 0.200 & 	0.347 & 0.500 & 	0.500 &	0.500 & 0.500 \\
		& AdaBFL    & \cellcolor{lightblue}0.371 & \cellcolor{lightblue}0.194 & \cellcolor{lightblue}0.201 & \cellcolor{lightblue}0.500 & \cellcolor{lightblue}0.198 & \cellcolor{lightblue}0.193 & \cellcolor{lightblue}0.500 \\
		\midrule
		\multirow{2}{*}{60\%} & Median  &  0.500 & 	0.270 & 0.500 &	0.500 & 0.500 & 0.500 & 0.500 \\
		& AdaBFL    & \cellcolor{lightblue}0.500 & \cellcolor{lightblue}0.206 & \cellcolor{lightblue}0.196 & \cellcolor{lightblue}0.500 & \cellcolor{lightblue}0.203 & \cellcolor{lightblue}0.500 & \cellcolor{lightblue}0.500 \\
		\midrule
		\multirow{2}{*}{65\%} & Median &  0.500 & 0.350 & 0.500 & 0.500 & 0.500 &	0.500 & 0.500\\
		& AdaBFL    & \cellcolor{lightblue}0.500 & \cellcolor{lightblue}0.236 & \cellcolor{lightblue}0.221 & \cellcolor{lightblue}0.500 & \cellcolor{lightblue}0.213 & \cellcolor{lightblue}0.500  & \cellcolor{lightblue}0.500 \\
		\bottomrule
	\end{tabular}

	\centering
	\vspace{1mm}
	\textbf{(f) Shakespeare dataset}

	\begin{tabular}{lcccccccc}
		\toprule
		Malicious Ratio & Method & LF & Gaussian  & Trim  & Krum  & Min-Max & Scaling  & Sybil \\
		\midrule
		
		\multirow{2}{*}{55\%} & Median  & 10.904 & 	4.007 & - & 3.883 & 5.714 & - & 	- \\
		& AdaBFL    & \cellcolor{lightblue}9.091 & \cellcolor{lightblue}3.856 & \cellcolor{lightblue}6.508 & \cellcolor{lightblue}4.094 & \cellcolor{lightblue}3.866 & \cellcolor{lightblue}-  & \cellcolor{lightblue}- \\
		\midrule
		\multirow{2}{*}{60\%} & Median   &   16.72 & 	4.129 & - & 6.500 &	5.475 & - & - \\
		& AdaBFL    & \cellcolor{lightblue}14.22 & \cellcolor{lightblue}3.987 & \cellcolor{lightblue}8.64 & \cellcolor{lightblue}5.026 & \cellcolor{lightblue}3.878 & \cellcolor{lightblue}-  & \cellcolor{lightblue}- \\
		\midrule
		\multirow{2}{*}{65\%} & Median  & 18.33 & 	4.133 & - & 6.640 & 8.180 & - & 	- \\
		& AdaBFL    & \cellcolor{lightblue}16.27 & \cellcolor{lightblue}3.916 & \cellcolor{lightblue}12.47 & \cellcolor{lightblue}5.436 & \cellcolor{lightblue}6.564 & \cellcolor{lightblue}-  & \cellcolor{lightblue}- \\
		\bottomrule
	\end{tabular}
\end{table*}

\end{document}